\theoremstyle{plain}
\newtheorem{theorem}{Theorem}
\theoremstyle{definition}
\newtheorem{definition}[theorem]{Definition}
\theoremstyle{remark}
\definecolor{cvprblue}{rgb}{0.21,0.49,0.74}
\definecolor{mygrey}{rgb}{ .863,  .863,  .863}
\definecolor{mypink}{rgb}{0.999,  0.414,   0.414}
\definecolor{mygreen}{rgb}{ .612,  .859,  .659}
\definecolor{myred}{rgb}{ .906,  .600,  .569}
\newcommand{\rightsign}{\color{mygreen} \CheckmarkBold}
\newcommand{\falsesign}{\color{myred} \XSolidBrush}
\newrobustcmd*{\mytriangleup}[1]{\tikz{\filldraw[draw=#1,fill=#1] (0,0) --
(0.1cm,0) -- (0.05cm,0.1cm);}}
\title{Injecting Imbalance Sensitivity for Multi-Task Learning}
\author{
Zhipeng Zhou$^1$
\and
Liu Liu$^{2,\dagger}$\and
Peilin Zhao$^{2}$\And
Wei Gong$^{1,\dagger}$\\
\affiliations
$^1$University of Science and Technology of China\thanks{$\dagger$ Corresponding authors. Work done when Z. Zhou works as an intern in Tencent AI Lab.}\\
$^2$Tencent AI Lab\\
\emails
zzp1994@mail.ustc.edu.cn,
\{leonliuliu, masonzhao\}@tencent.com,
weigong@ustc.edu.cn
}
\begin{document}

\maketitle

\begin{abstract}
Multi-task learning (MTL) has emerged as a promising approach for deploying deep learning models in real-life applications. Recent studies have proposed optimization-based learning paradigms to establish task-shared representations in MTL. However, our paper empirically argues that these studies, specifically gradient-based ones, primarily emphasize the conflict issue while neglecting the potentially more significant impact of imbalance/dominance in MTL. In line with this perspective, we enhance the existing baseline method by injecting imbalance-sensitivity through the imposition of constraints on the projected norms. To demonstrate the effectiveness of our proposed IMbalance-sensitive Gradient (\texttt{IMGrad}) descent method, we evaluate it on multiple mainstream MTL benchmarks, encompassing supervised learning tasks as well as reinforcement learning. The experimental results consistently demonstrate competitive performance.
\end{abstract}

\section{Introduction}
\label{sec:intro}

Real-life scenarios often involve the need to handle multiple distinct tasks concurrently, typically achieved by designing task-specific models to ensure satisfactory performance. However, this approach becomes impractical as the number of tasks grows, as it would require significant computational resources and memory. To address this challenge and establish an efficient multi-task learning (MTL) framework, recent research has focused on developing a single model capable of performing well on all target tasks.

Currently, research on MTL can be broadly categorized into two frameworks: architecture-based~\cite{liu2019end,ye2022inverted,gao2019nddr,chen2023mod} and optimization-based approaches~\cite{sener2018multi,yu2020gradient,liu2021conflict,navon2022multi,liu2023famo}. The former emphasizes the design of efficient parameter sharing architectures for multiple tasks, whereas the latter typically employs a fixed architecture and focuses on developing optimization strategies to extract task-shared representations. In this paper, we exclusively introduce and compare our method with optimization-based approaches, as our proposed method falls within this framework.

\begin{figure}
    \centering
    \includegraphics[width=\linewidth]{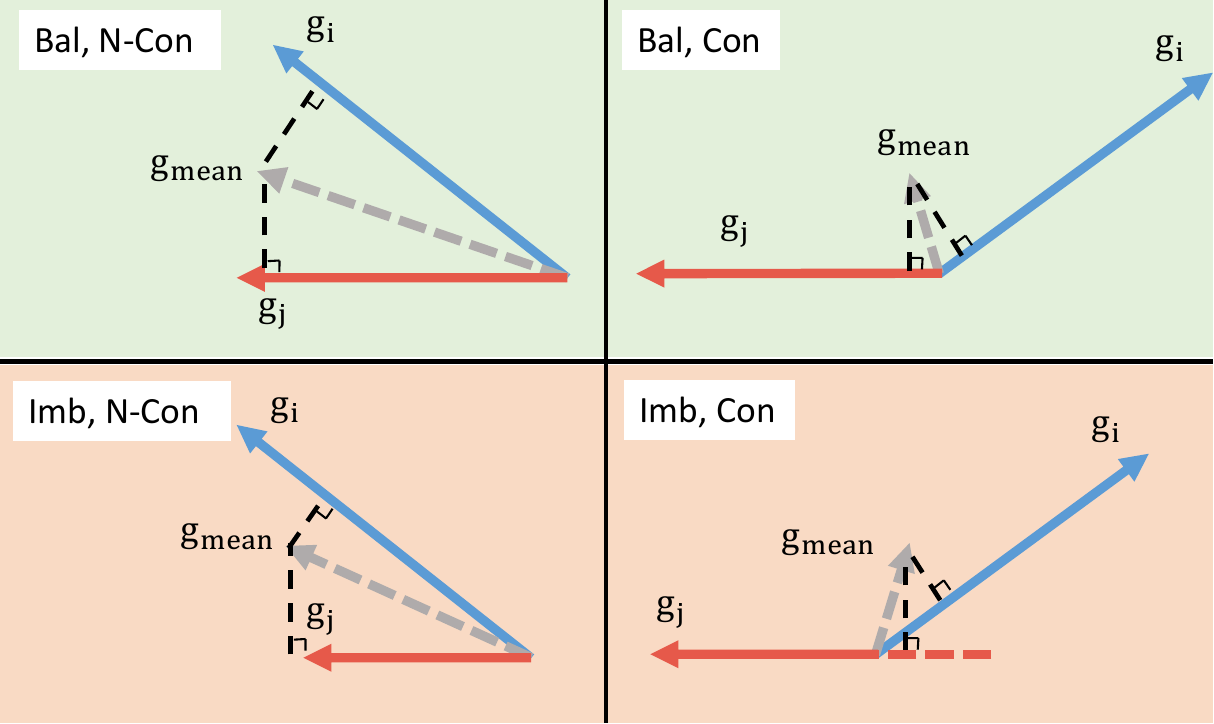}
    \caption{Illustration of imbalance and conflicting issue in multi-task learning. `Bal' and `Imb' represent balanced and imbalanced, while `N-Con' and `Con' represent non-conflicting and conflicting.}
    \label{fig:illu}
\end{figure}
\begin{table*}[h]
\setlength\tabcolsep{4pt}
\centering
\caption{Conflict-averse and imbalance-sensitive comparison for mainstream optimization-based MTL. Note that those which are imbalance-sensitive mean that their solution can tackle the imbalance issue.}
\label{table:moo_para_comp}
\footnotesize
\begin{tabular}{lllllllll|l}
\toprule
          &     GD           &   GradDrop            &    MGDA          &   PCGrad   &     IMTL       &  CAGrad   &  Nash-MTL  &   MoCo     &  \texttt{IMGrad}  \\    \midrule
Conflict-averse   &     \falsesign    &   \falsesign  &    \rightsign  &  \rightsign      &  \falsesign  & \rightsign  &   \falsesign   &  \rightsign    &   \rightsign  \\
Imbalance-sensitive &     \falsesign  &   \falsesign    &    \falsesign    &   \falsesign   &  \rightsign  &   \falsesign  &   \rightsign  &   \falsesign    &  \rightsign  \\ 
\bottomrule
\end{tabular}
\end{table*}

In the realm of optimization-based methods, particularly those involving gradient manipulation, a shared paradigm is commonly followed, where task gradients are combined to achieve Pareto optimality for individual tasks. Despite the high performance demonstrated by these methods, the literature has predominantly overlooked the significance of the inherent imbalance nature among individuals (see \textbf{Definition~\ref{def:imb}}). This oversight can be attributed to the greater emphasis placed on addressing the conflict issue. However, it is important to note that the conflict issue alone may not be the fundamental obstacle hindering optimization in MTL. As illustrated in Figure~\ref{fig:illu}, a na\"{\i}ve linear scalarization (LS) strategy ($\bm{g_{mean}}$) effectively improves all individuals when they are balanced, regardless of conflicts. But it proves ineffective when both imbalance and conflict coexist, underscoring the importance of addressing conflicts that arise solely from imbalances. Furthermore, imbalanced task gradients can introduce optimization preferences and lead to imbalanced progress even in the absence of conflicts~\cite{liu2023famo}. Although previous solutions, such as IMTL~\cite{liutowards} and Nash-MTL~\cite{navon2022multi} illustrated in Table~\ref{table:moo_para_comp}, have somewhat mitigated the imbalance/dominance issue, they neither explicitly provide evidence to demonstrate the importance of the imbalance issue nor consider both conflict and imbalance issues simultaneously.

In this paper, we begin by empirically highlighting the significance of the imbalance issue in MTL and elucidate the advantages of incorporating imbalance sensitivity into baseline methods as our primary motivation. Subsequently, we enhance the well-established baseline method by injecting imbalance sensitivity through the imposition of constraints on the projected norms. Convergence and speedup analysis are provided in the \textbf{Appendix}.
In a nutshell, we summarize our contributions as three-fold:
\begin{itemize}
    \item We place significant emphasis on and empirically identify that the primary challenge in optimization-based MTL lies more in the aspect of imbalance rather than conflict. To the best of our knowledge, we are the first to explicitly assert this claim. 
    \item To introduce the imbalance sensitivity into the existing paradigm, we integrate the projected norm constraint into the objectives. This incorporation allows for a dynamic equilibrium between Pareto property (see \textbf{Definition~\ref{def:pareto}}) and convergence (two decoupled objectives), thereby enhancing the combined gradients and optimization trajectories. 
    \item The extensive experimental results present compelling evidence that \texttt{IMGrad} consistently enhances its baselines and surpasses the current advanced gradient manipulation methods in a diverse range of evaluations, e.g., supervised learning tasks, and reinforcement learning benchmarks.
\end{itemize}
\section{Related Work}
Currently, MTL approaches can be broadly categorized into two groups: architecture-based and optimization-based methods. Architecture-based approaches encompass various paradigms, including hard parameter sharing~\cite{heuer2021multitask,kokkinos2017ubernet}, soft parameter sharing~\cite{yang2016deep,gao2019nddr}, modulation and adapters~\cite{he2021towards,liu2022few}, and mixture of experts (MoE)~\cite{chen2023mod,fan2022m3vit}, etc. On the other hand, optimization-based MTL methods primarily focus on learning paradigms rather than structural designs or parameter sharing strategies. These methods aim to optimize all individual tasks to extract task-shared representations.

One classical optimization-based MTL approach is MGDA~\cite{sener2018multi}, which seeks a combined gradient with minimal norm using the Frank-Wolfe algorithm~\cite{jaggi2013revisiting}. PCGrad~\cite{yu2020gradient} addresses the conflict issue by projecting individual gradients onto orthogonal directions with respect to others. CAGrad~\cite{liu2021conflict} considers preserving both the Pareto property and global optimization, ultimately striving for a balance between the two objectives using a hyper-parameter. Nash-MTL~\cite{navon2022multi} negotiates to reach an agreement on a joint direction of parameter update, enabling all individual tasks to achieve more balanced progress. MoCo~\cite{fernando2022mitigating} tackles the problem of biased gradient directions in previous solutions by developing tracking parameters for correction. Our method falls within the realm of optimization-based MTL, with a specific focus on addressing the issue of imbalance-sensitivity, which is largely lacking in the aforementioned solutions. 

\noindent \underline{\textbf{Discussion with Counterparts}}: To the best of our knowledge, IMTL~\cite{liutowards}, Nash-MTL~\cite{navon2022multi}, and FAMO~\cite{liu2023famo} are three recent works that explicitly consider the imbalance issue. However, all three works fail to provide evidence demonstrating the importance of the imbalance issue. Moreover, none of these approaches possess conflict-averse properties. Thus, there is still room for improvement. Although Nash-MTL appears to be designed to avoid conflicts, its practical implementation does not achieve this goal. Please refer to the \textbf{Appendix} for more discussion. 

\begin{figure*}[h]
    \centering
    \subfloat[LS]{\includegraphics[width = 0.2\textwidth]{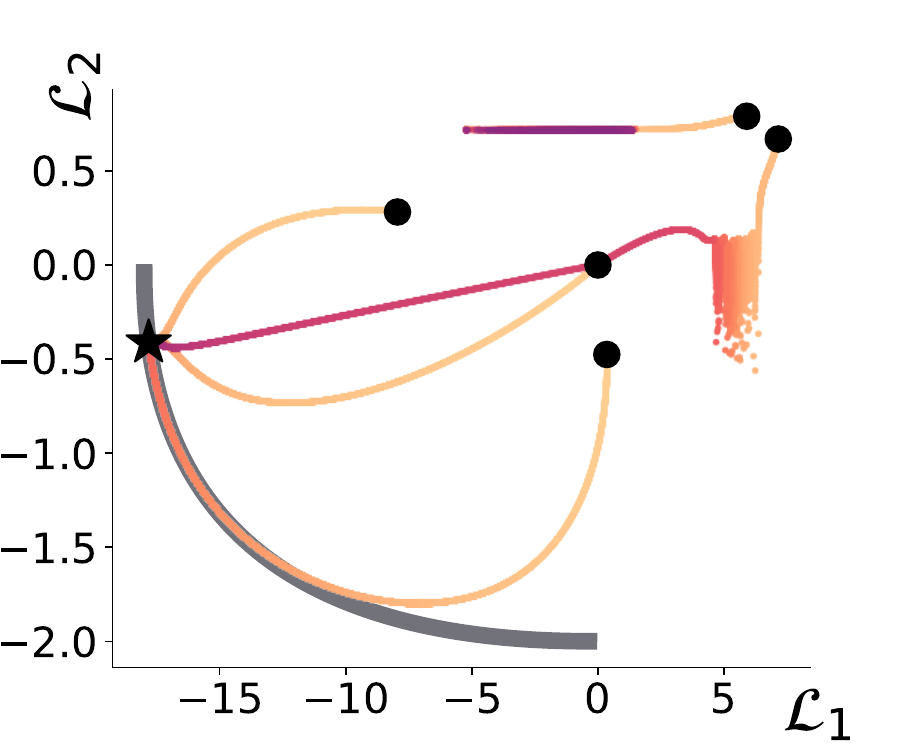}}
    \hfill
    \subfloat[PCGrad]{\includegraphics[width = 0.2\textwidth]{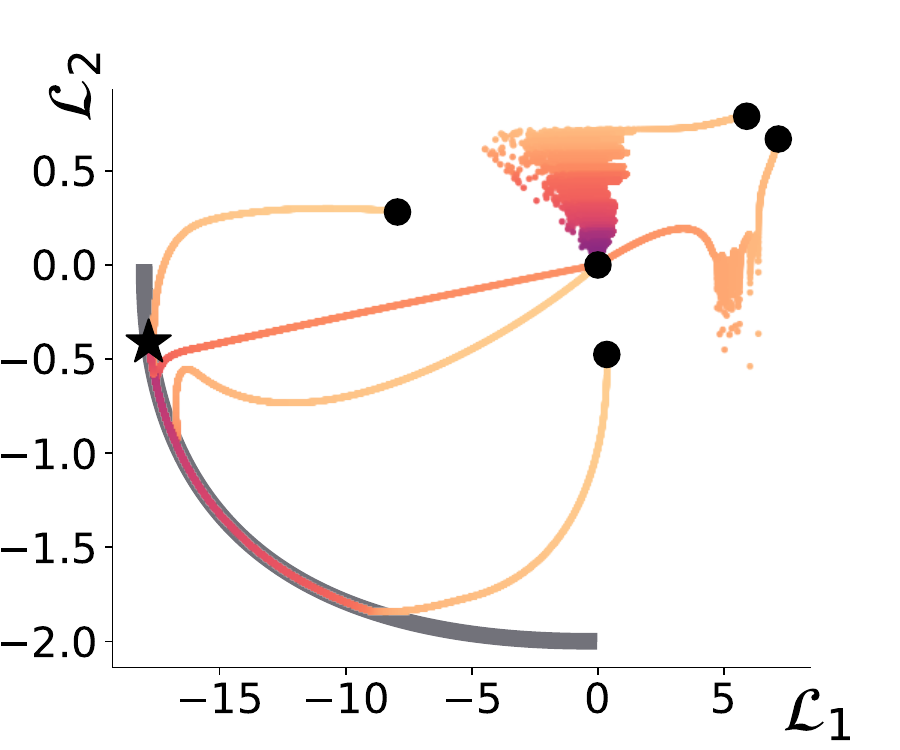}}     
    \hfill
    \subfloat[CAGrad]{\includegraphics[width = 0.2\textwidth]{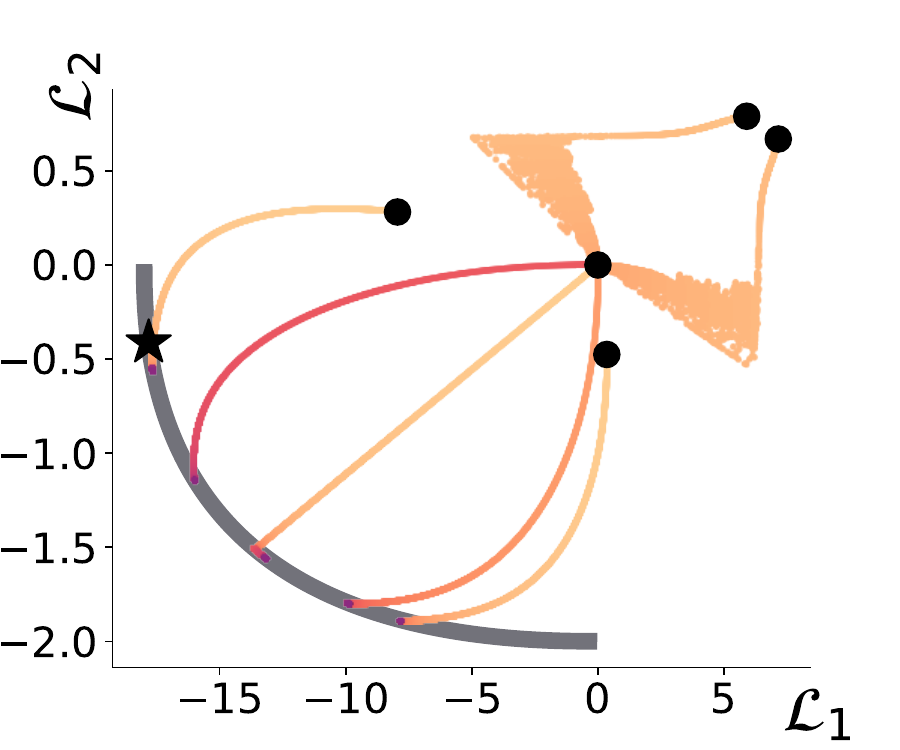}}
    \subfloat[Nash-MTL]{\includegraphics[width = 0.2\textwidth]{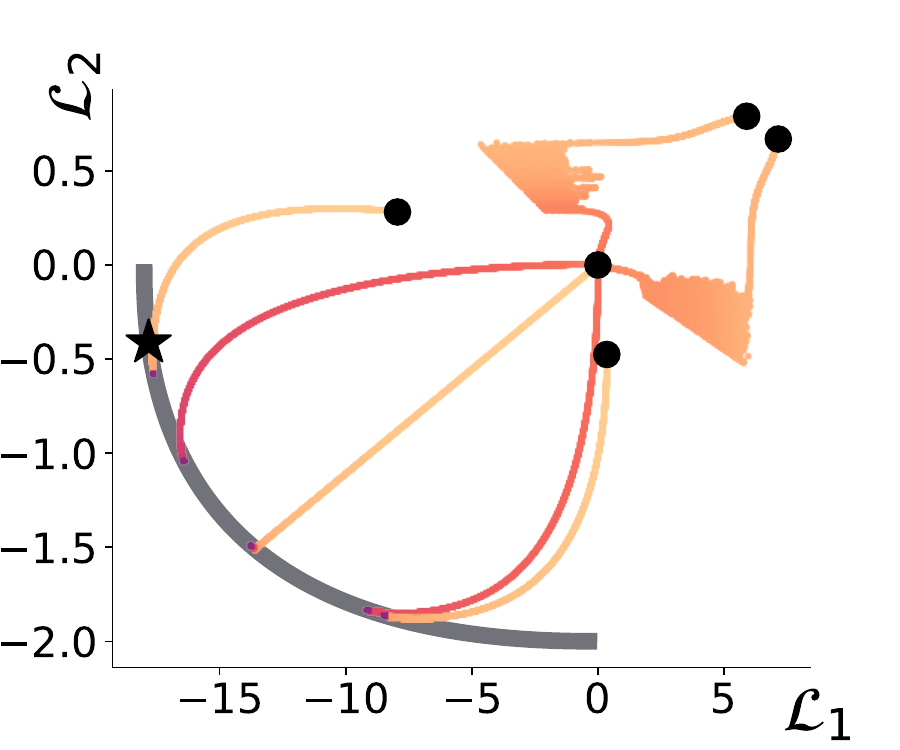}}
    \hfill
    \subfloat[\texttt{IMGrad}]{\includegraphics[width = 0.2\textwidth]{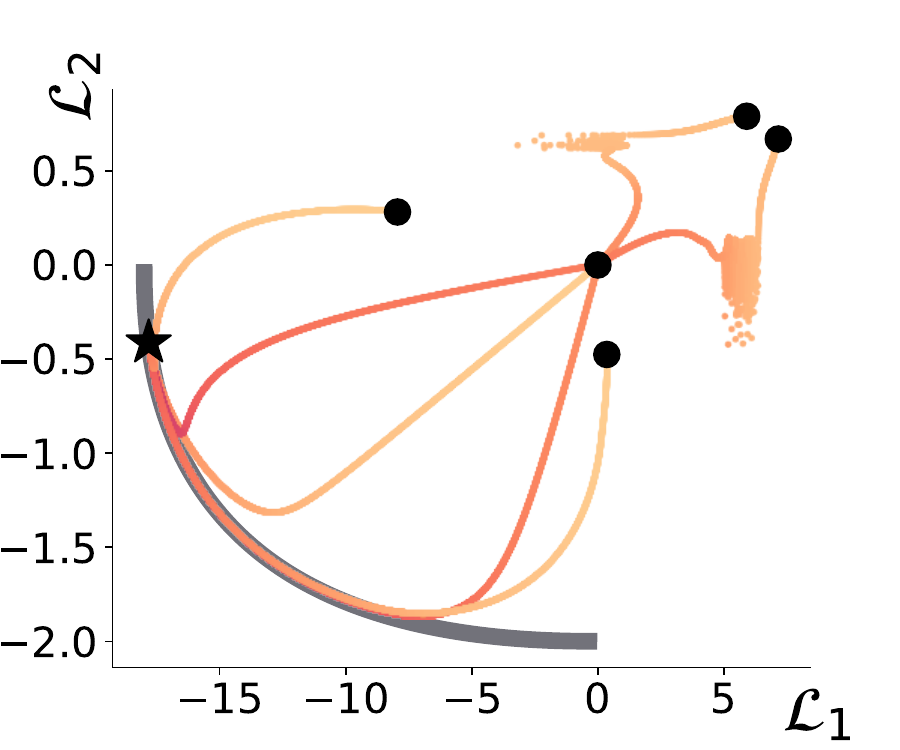}}
    \caption{Comparison of MTL approaches on the imbalanced synthetic two-task benchmark. $\bullet $ and $\star $ represent the starting point and global optimum, respectively, and grey line ${\color{gray}\rule[2pt]{0.5cm}{0.2em}}$ represents the Pareto front. Two objectives are extremely imbalanced weighted, i.e., $(0.9*\mathcal{L}_1, 0.1*\mathcal{L}_2)$. Please refer to the \textbf{Appendix} for more optimization trajectories under various pre-defined task weights.}
    \label{fig:toy_imb}
\end{figure*}
\section{Preliminary}

\subsection{Setup of Optimization-based MTL}
As mentioned, optimization-based MTL approaches operate under the assumption that the model consists of a task-shared backbone network alongside task-specific branches. Consequently, the primary objective of these approaches is to devise gradient combination strategies that optimize the backbone network to yield benefits across all tasks. Let us consider a scenario where there are $K \ge 2$ tasks available, each associated with a differentiable loss function $\mathcal{L}_i(\bm{\Theta})$, where $\bm{\Theta}$ represents the task-shared parameters. The goal of optimization-based MTL is to search for the optimal $\bm{\Theta^*} \in \mathbb{R}^m$ that minimizes the losses for all tasks. However, it is widely recognized that a simplistic linear scalar strategy, $\mathcal{L}_0(\bm{\Theta}) = \frac{1}{K}\sum_{i=1}^{K}\mathcal{L}_i(\bm{\Theta})$, fails to achieve satisfactory performance due to the conflict and imbalance issue.

\subsection{Pareto Concept}  
Formally, let us assume the weighted loss as $\mathcal{L}_{\bm{\omega}} = \sum_{i=1}^{K}\omega_i \mathcal{L}_i(\bm{\Theta})$, where $\bm{\omega} \in \bm{\mathcal{W}}$ and $\bm{\mathcal{W}}$ represents the probability simplex on $[K]$. A point $\bm{\Theta'}$ is said to Pareto dominate $\bm{\Theta}$ if and only if $\forall i, \mathcal{L}_i(\bm{\Theta'}) \leq \mathcal{L}_i(\bm{\Theta})$. Consequently, the Pareto optimal situation arises when no $\bm{\Theta'}$ can be found that satisfies $\forall i, \mathcal{L}_i(\bm{\Theta'}) \leq \mathcal{L}_i(\bm{\Theta})$ for the given point $\bm{\Theta}$. All points that meet these conditions are referred to as Pareto sets, and their solutions are known as Pareto fronts. Another concept, known as Pareto stationary, requires ${\rm{min}}_{\bm{\omega} \in \bm{\mathcal{W}}} \left \| \bm{g}_{\bm{\omega}}\right \| = 0$, where $\bm{g}_{\bm{\omega}}$ represents the weighted gradient $\bm{\omega}^{\top}\bm{G}$, and $\bm{G}$ is the gradients matrix whose each row is an individual gradient. We also provide some definitions here for ease of description.


\begin{definition}[\textbf{Gradient Similarity}] \label{def:conf}
    Denote $\phi_{ij}$ as the angle between two task gradients $\bm{g_i}$ and $\bm{g_j}$, then we define the gradient similarity as $\cos \varphi_{ij}$ and the gradients as conflicting when $\cos \phi_{ij} < 0$.
\end{definition}

\begin{definition}[\textbf{Imbalance of Individuals}] \label{def:imb}
Assume the gradient owns the maximal norm in $\bm{G}$ is $\bm{g}_{max}$, and the corresponding minimal one is $\bm{g}_{min}$. We define the imbalance ratio of $\bm{G}$ as $r = \frac{\left \| \bm{g}_{max} \right\|}{\left \| \bm{g}_{min} \right\|}$. If $r > 1$, we call it's imbalanced.
\end{definition}


\begin{definition}[\textbf{Pareto Property}] \label{def:pareto}
 For each training step, the combined optimization direction strives to promote all individuals simultaneously  (or at the very least, not cause detriment), i.e. for $\forall i$, the gradient similarity between $\bm{g_i}$ and the combined gradient $\bm{g_{\omega}}$ satisfies $\cos \phi_{\bm{\omega} i} \ge 0$. When this condition is not met, it is referred to as \textbf{Pareto failure}. 
\end{definition}


\begin{figure}
    \centering
    \subfloat[PCGrad]{\includegraphics[width = 0.155\textwidth]{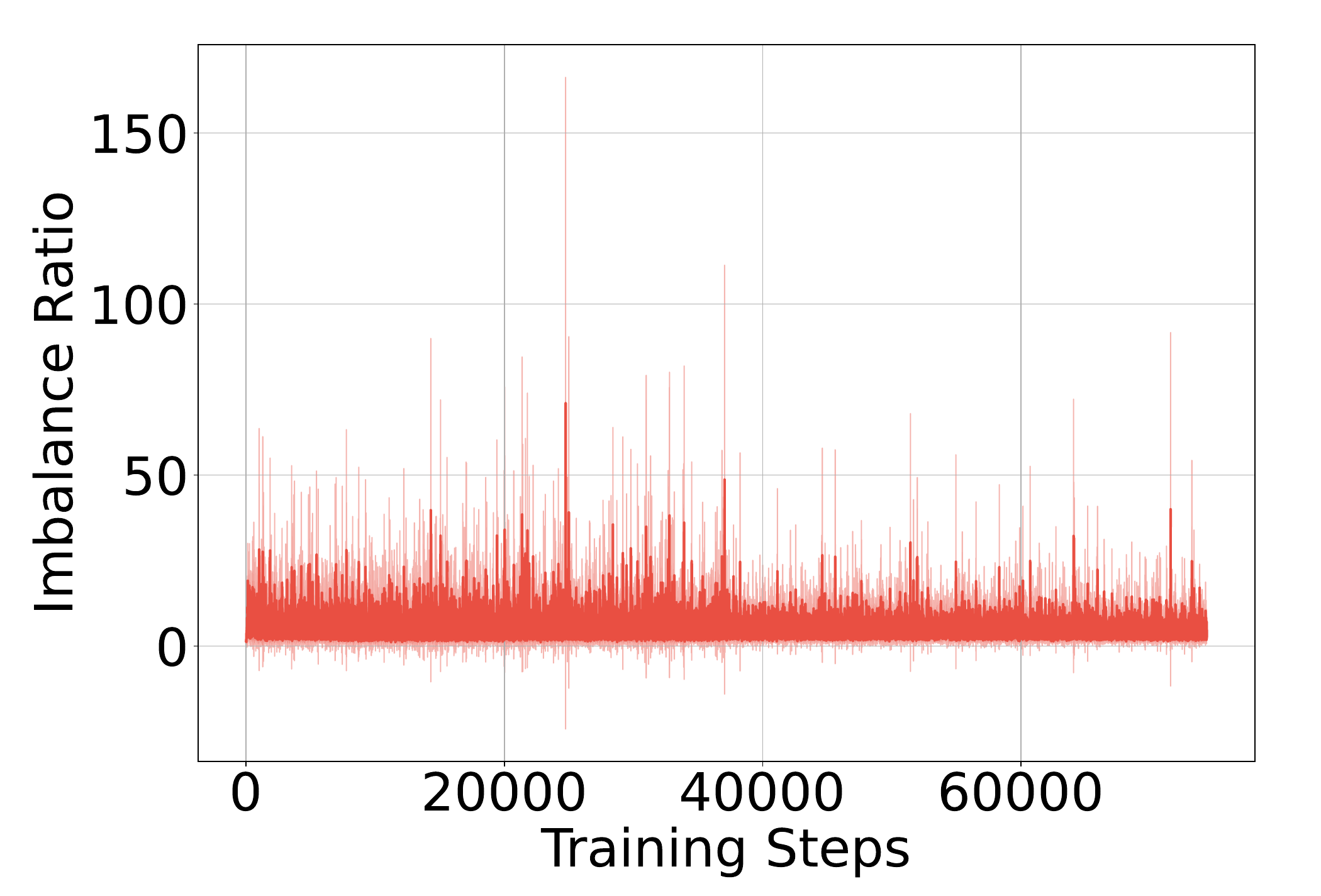}}
    \hfill
    \subfloat[CAGrad]{\includegraphics[width = 0.155\textwidth]{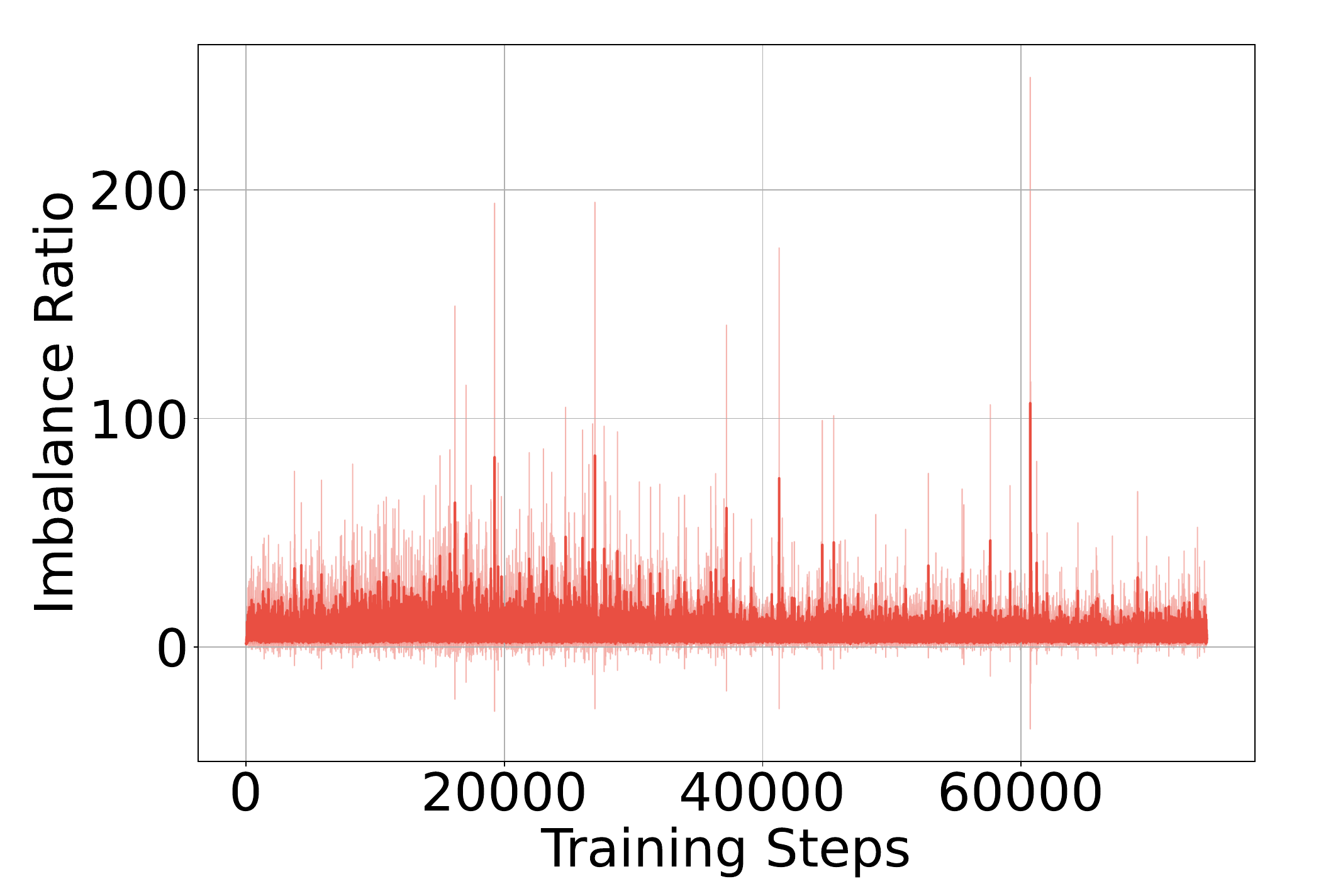}}
    \hfill
    \subfloat[Nash-MTL]{\includegraphics[width = 0.155\textwidth]{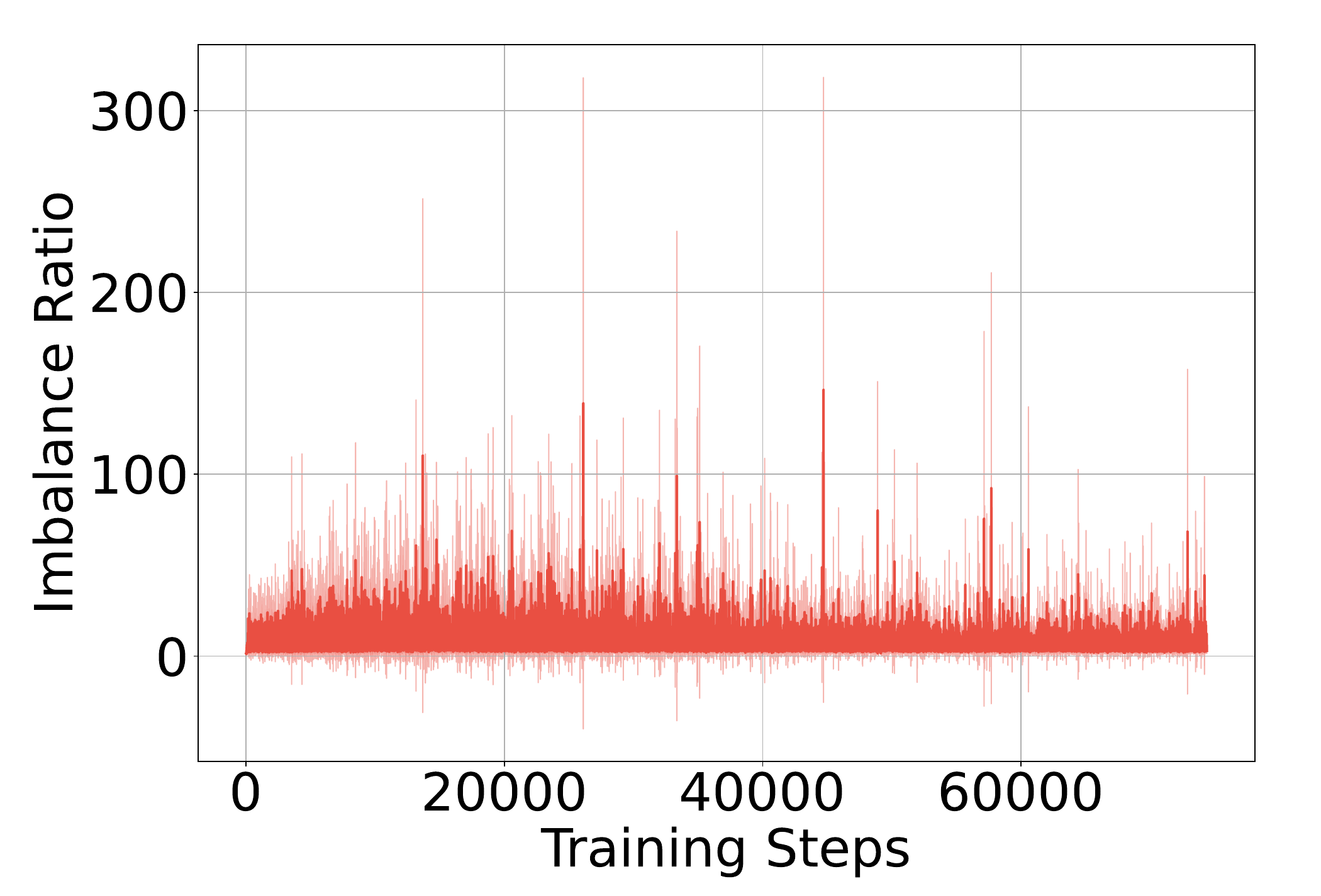}}
    \caption{Statistical imbalance ratios of MTL approaches.}
    \label{fig:imb_stat_city}
\end{figure}
\section{Motivation and Observation}
A substantial body of previous studies~\cite{sener2018multi,liu2021conflict,yu2020gradient,navon2022multi} have primarily focused on addressing the conflict issue rather than the imbalance issue. In this section, we aim to provide empirical insights into the significance of imbalance and elucidate how imbalance-sensitivity can bring benefits to current popular optimization-based MTL paradigms. Based on these insights, we naturally deduce our design in the next section. 



\subsection{Why Does Imbalance Matter More?} 
To begin, we conducted experiments on the CityScapes dataset~\cite{cordts2016cityscapes} to statistically analyze the imbalance ratios of representative optimization-based MTL methods (e.g., PCGrad~\cite{yu2020gradient}, CAGrad~\cite{liu2021conflict}, Nash-MTL~\cite{navon2022multi}). The results of these experiments are presented in Figure~\ref{fig:imb_stat_city}. From the depicted results, it is evident that all the methods exhibit significant imbalance during training, which poses a substantial challenge when attempting to optimize all individuals simultaneously, thereby underscoring the importance of addressing the imbalance issue.
\begin{figure}
    \centering
    \subfloat[Multi-Task Objective]{\includegraphics[width = 0.155\textwidth]{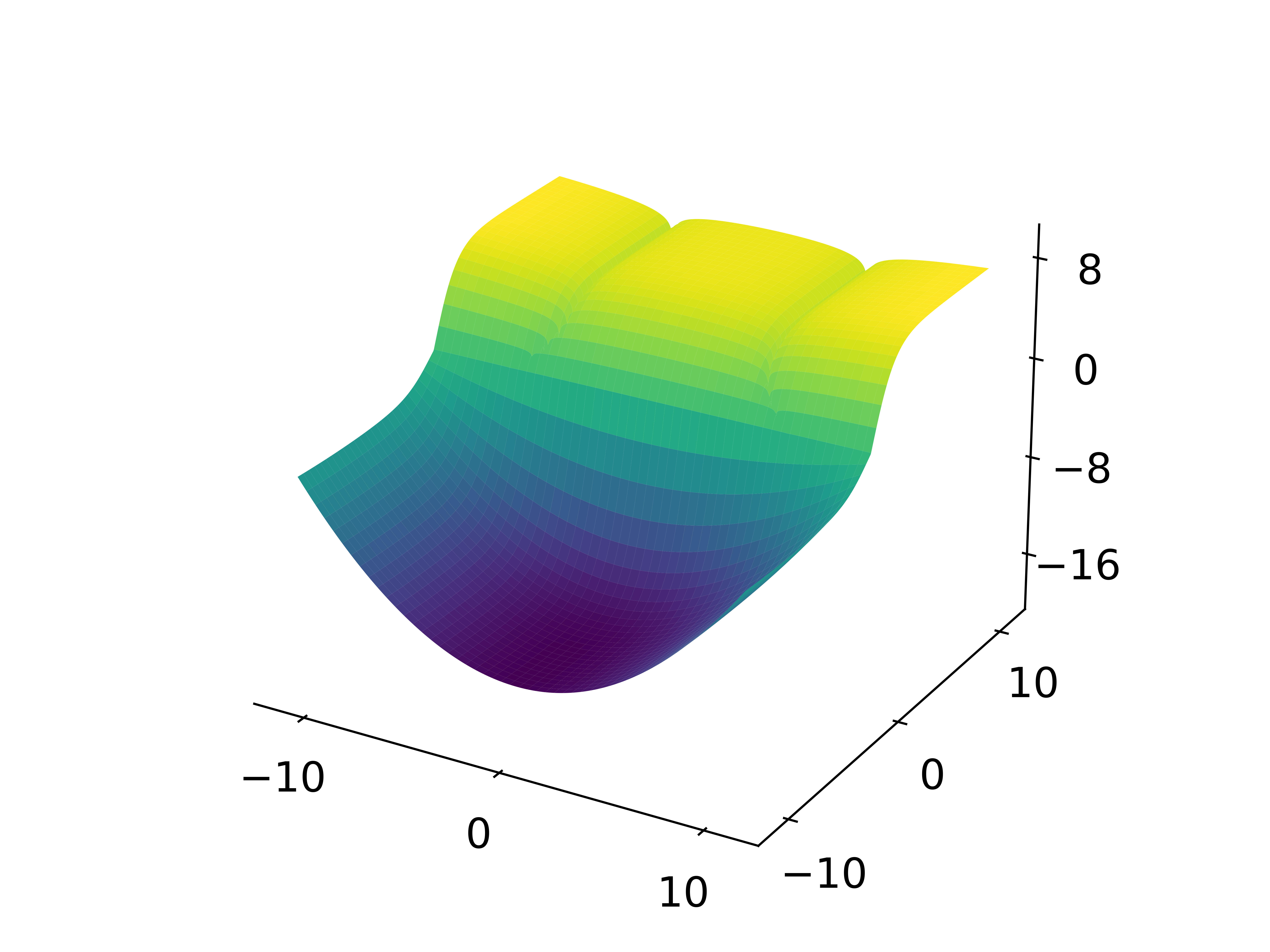}}
    \hfill
    \subfloat[Task 1 Objective]{\includegraphics[width = 0.155\textwidth]{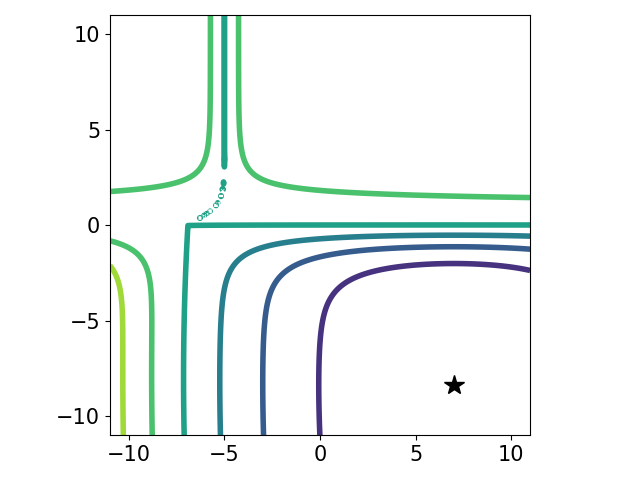}}
    \hfill
    \subfloat[Task 2 Objective]{\includegraphics[width = 0.155\textwidth]{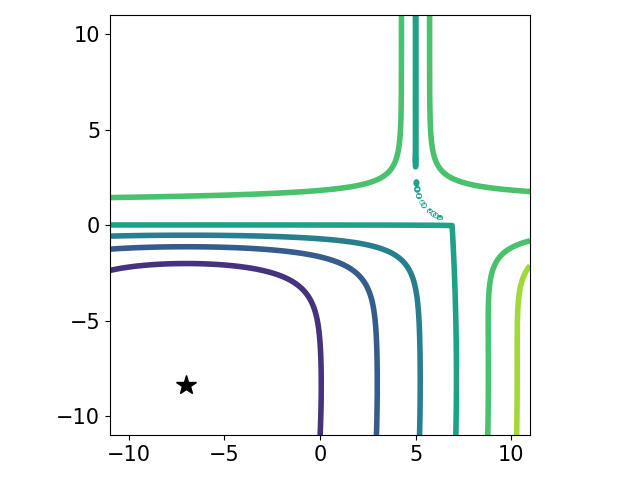}}
    \hfill  \\
    \subfloat[LS]{\includegraphics[width = 0.13\textwidth]{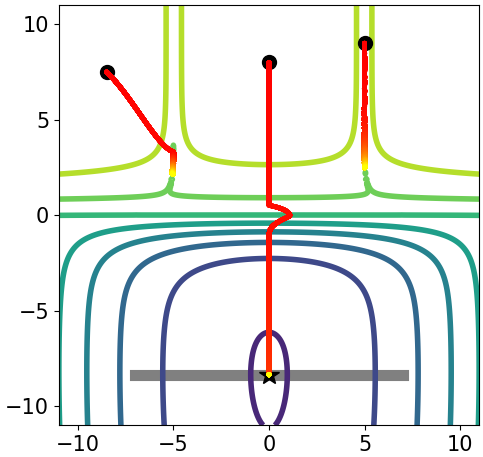}}
    \hfill
    \subfloat[CAGrad]{\includegraphics[width = 0.13\textwidth]{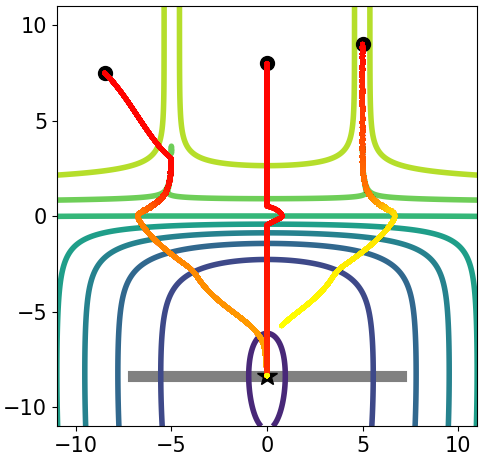}}
    \hfill
    \subfloat[\texttt{IMGrad}]{\includegraphics[width = 0.155\textwidth]{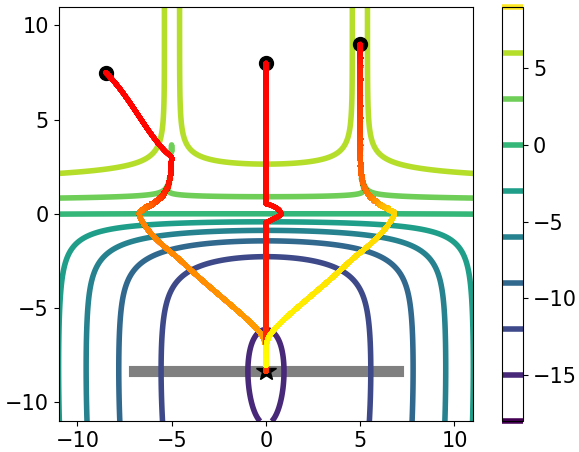}}
    \caption{Comparison of MTL approaches on the toy examples. We use the tool provided CAGrad to generate the synthetic toy examples with two objective shown in (b) and (c). In this case, both objective are equally weighted.} 
    \label{fig:toy_bal}
\end{figure}

Secondly, to demonstrate the higher priority of imbalance issue, we show the toy example results that present imbalance and conflict among gradients in the following cases:
\begin{itemize}
    \item Conflict ({\rightsign}); Imbalance ({\falsesign}): In Figure~\ref{fig:toy_bal}, we manually create scenarios where conflict exists but imbalance is absent. By closely examining the \underline{center} trajectories in Figure~\ref{fig:toy_bal} (d)(e)(f), we observe that all methods can easily reach the optimal point when imbalance is absent, regardless of the presence of conflicts. This observation suggests that the sole existence of conflicts has limited impact on optimization, emphasizing the importance of addressing the imbalance issue.
    \item Conflict ({\falsesign}); Imbalance ({\rightsign}): Simulating an optimization trajectory without conflicts among individuals can indeed be challenging. Therefore, we adopt the setting from Nash-MTL~\cite{navon2022multi} to handcraft an imbalance-dominated optimization scenario. The resulting trajectories are depicted in Figure~\ref{fig:toy_imb}. It is evident that all the compared approaches fail to converge at the desired global optimum from all initial starts under the extreme imbalance circumstances, though most of them reach the Pareto front. Additionally, the trajectories at the sides in Figure~\ref{fig:toy_bal} (d)(e)(f) also highlight the issue of progress hindered by imbalance. Specifically, CAGrad fails to reach the global optimum compared to \texttt{IMGrad} despite undergoing the same number of optimization steps.
\end{itemize}

\subsection{The Impacts of the Imbalance Issue}
In Table~\ref{table:moo_para_comp}, we list and compare mainstream optimization-based MTL approaches. The table focuses on two key properties: conflict-averse and imbalance-sensitive properties. It is observed that most MTL approaches possess the conflict-averse property due to their design nature. However, only a few approaches are imbalance-sensitive~\footnote{We provide imbalance-sensitive analysis for IMTL, Nash-MTL, and FAMO in the \textbf{Appendix}.}, and currently, there are no methods that possess both properties simultaneously. 
\begin{figure}
    \centering
    \subfloat[PCGrad]{\includegraphics[width = 0.155\textwidth]{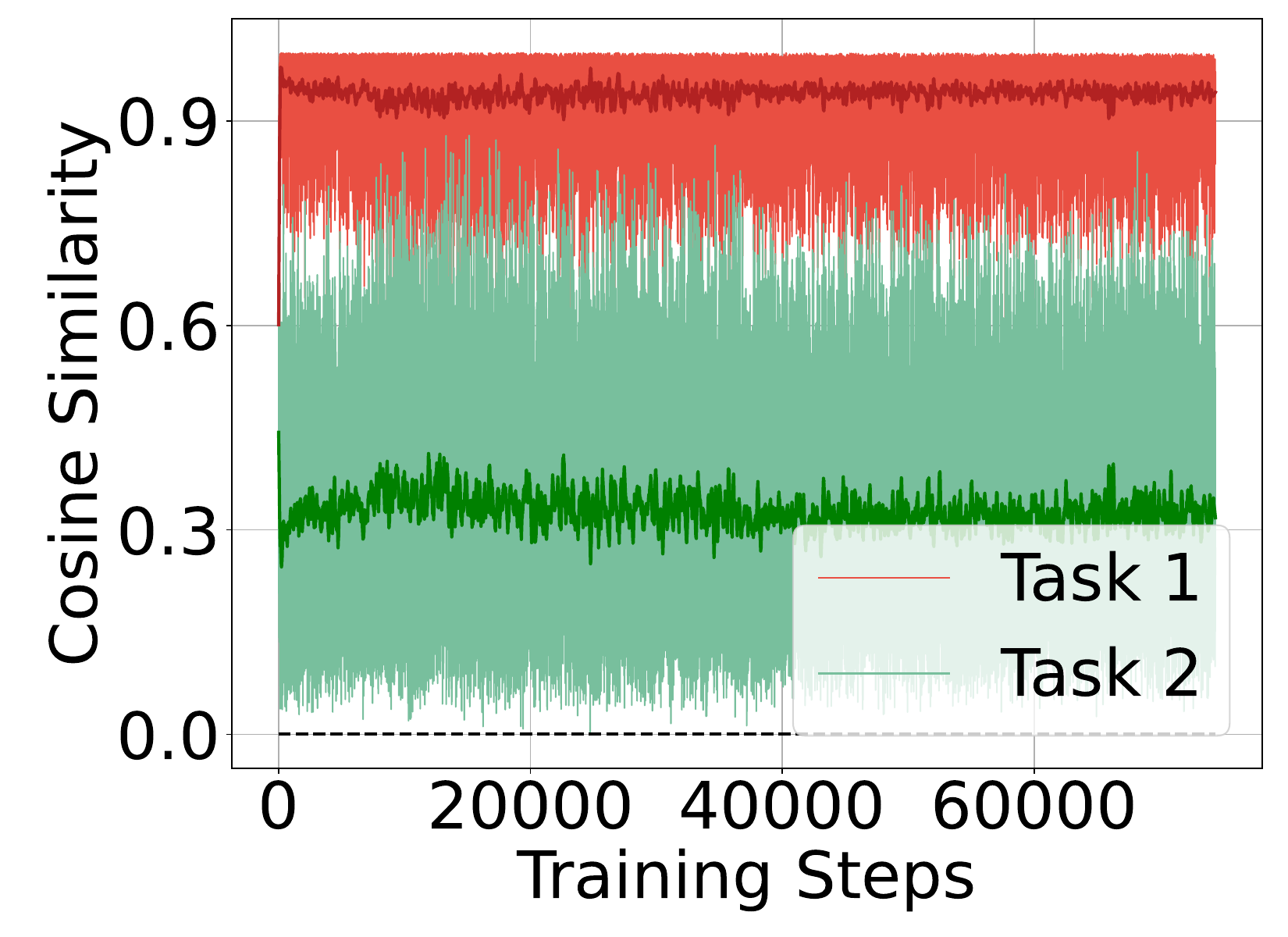}}
    \hfill
    \subfloat[CAGrad]{\includegraphics[width = 0.155\textwidth]{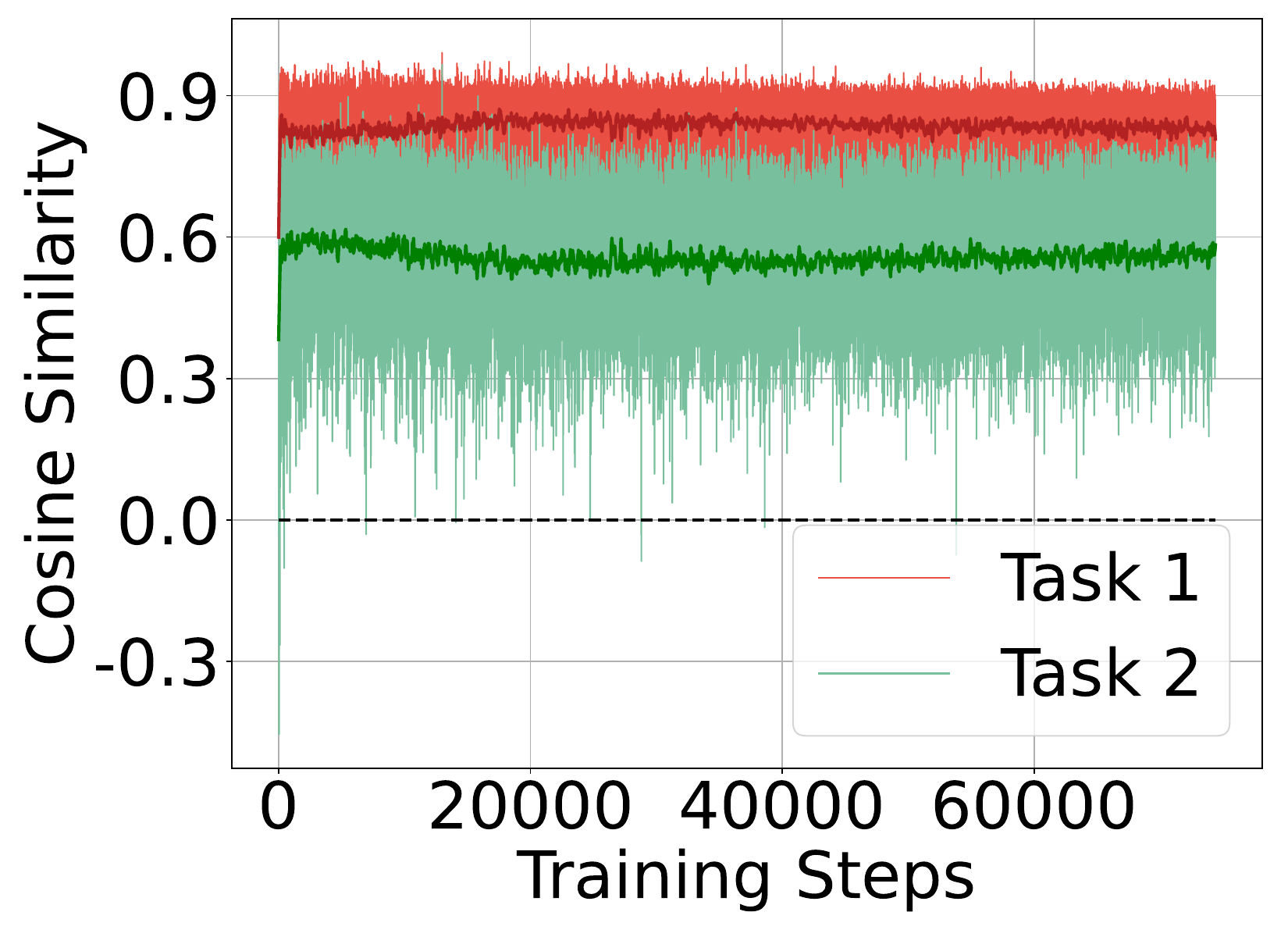}}
    \hfill
    \subfloat[Nash-MTL]{\includegraphics[width = 0.155\textwidth]{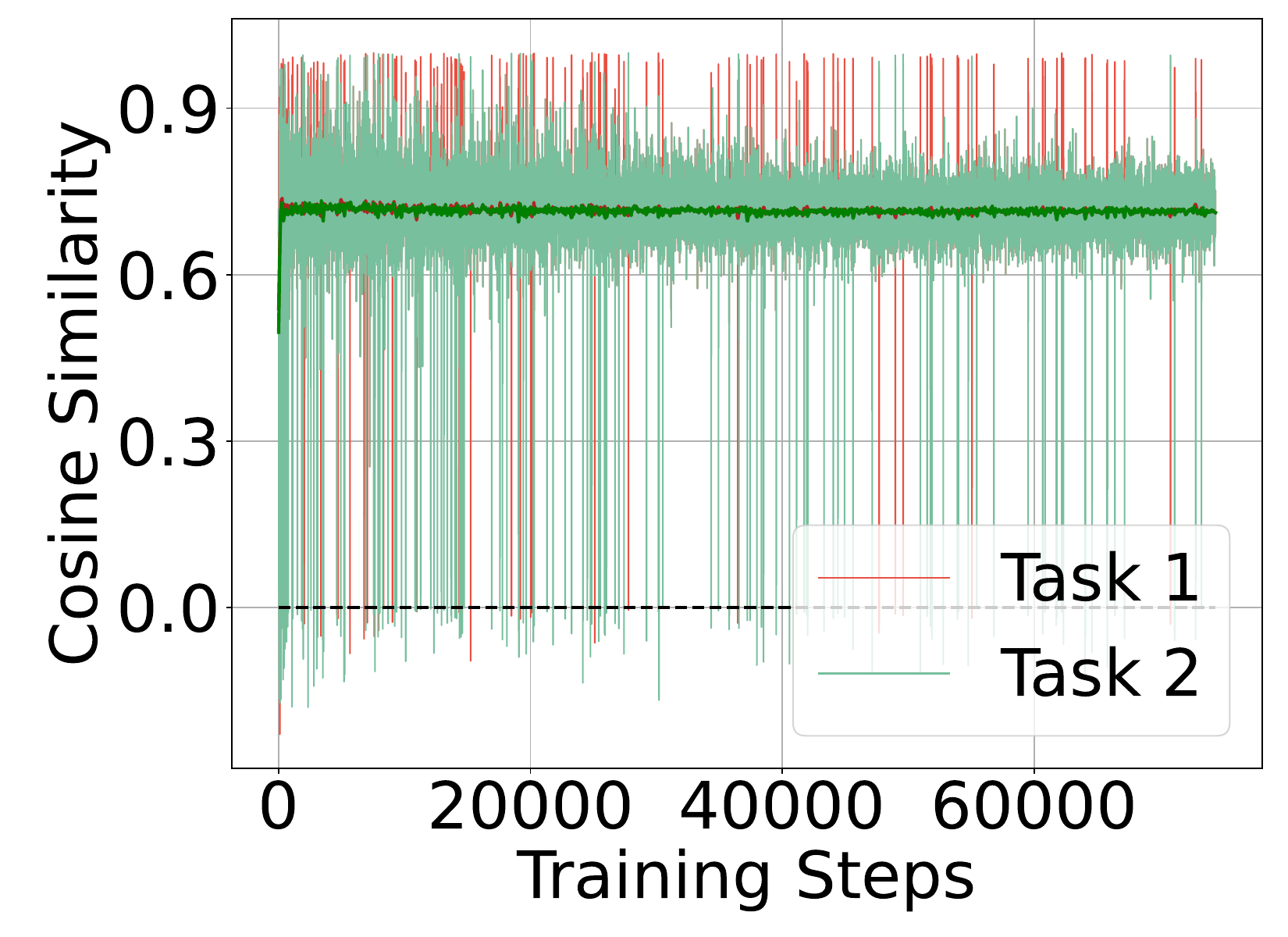}}
    \hfill  \\
    \subfloat[PCGrad ($\Delta m\% = 22.46$)]{\includegraphics[width = 0.155\textwidth]{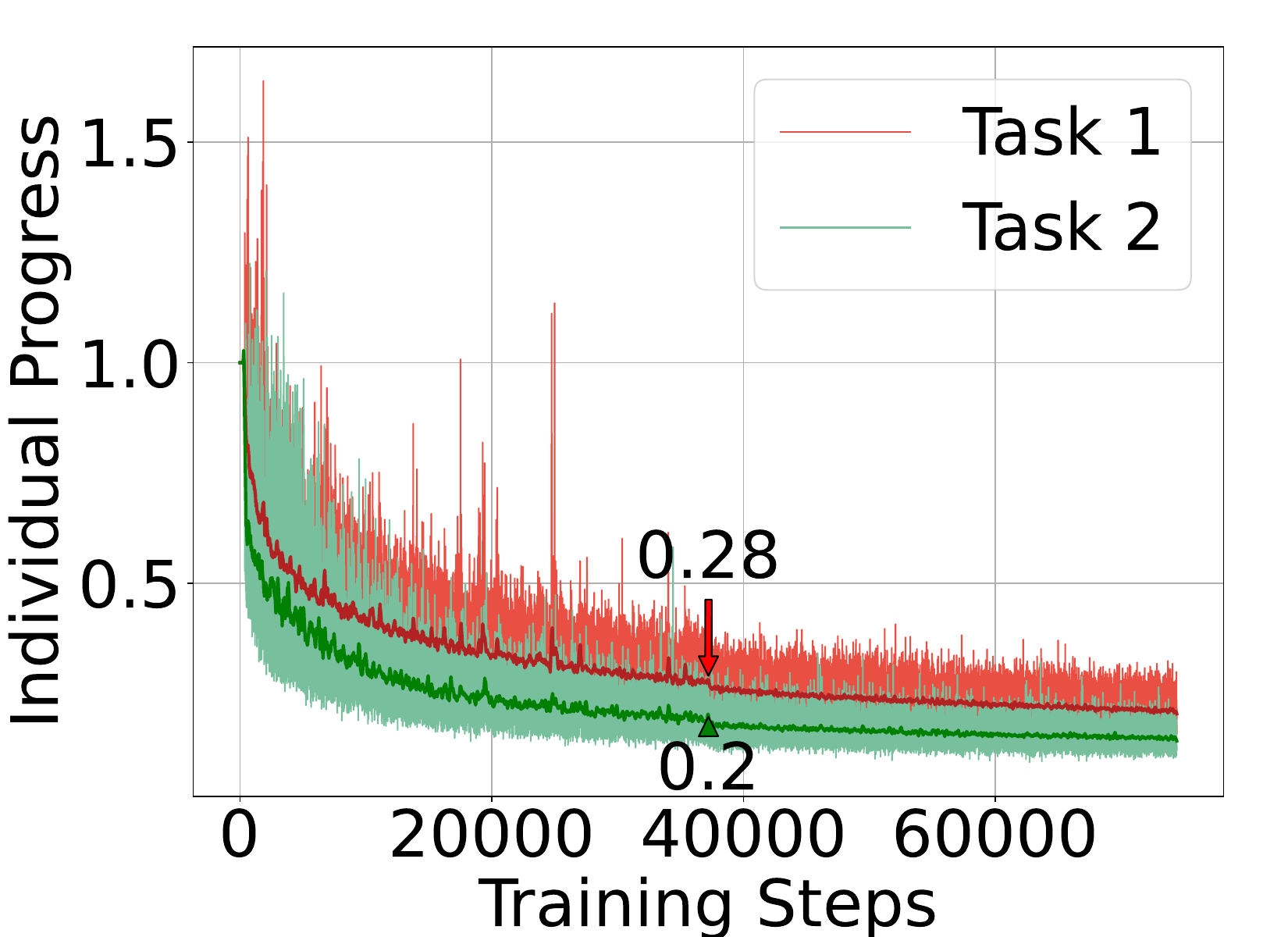}}
    \hfill
    \subfloat[CAGrad ($\Delta m\% = 9.97$)]{\includegraphics[width = 0.155\textwidth]{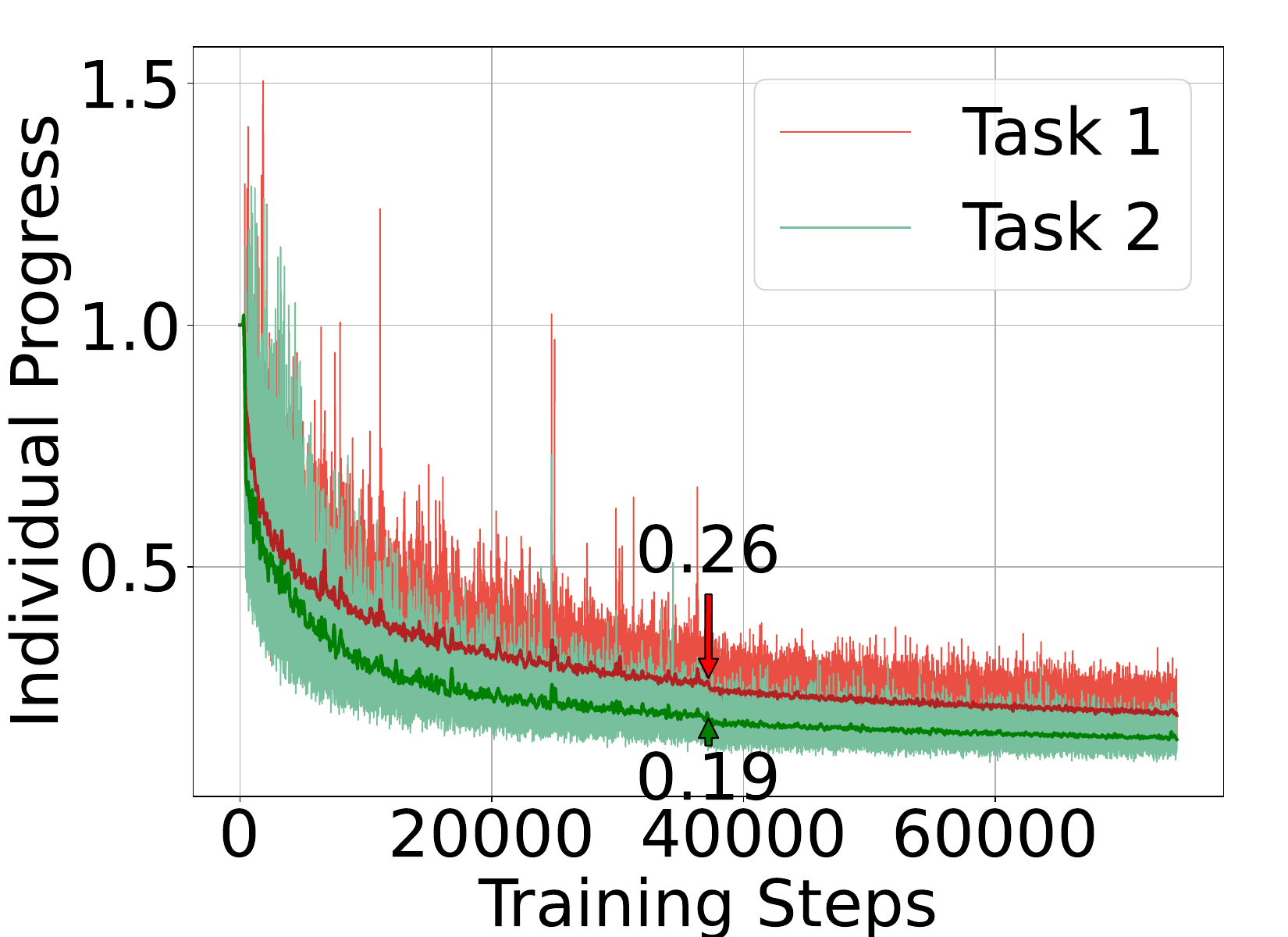}}
    \hfill
    \subfloat[Nash-MTL ($\Delta m\% = 8.20$)]{\includegraphics[width = 0.155\textwidth]{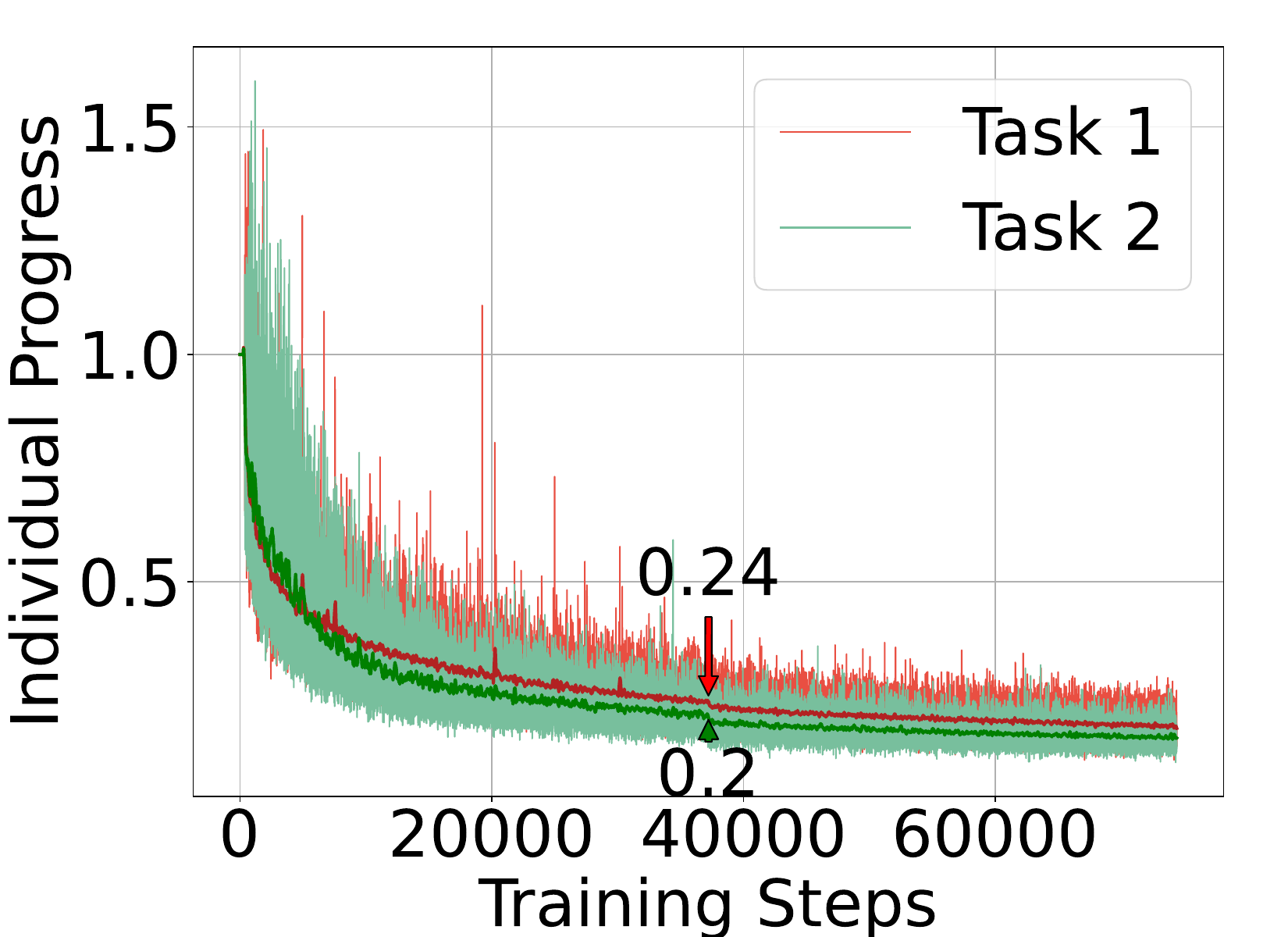}}
    \caption{Individual gradient similarity and progress analysis of MTL algorithms on CityScapes. (a)-(c) show the gradient similarities between individuals and the combined gradient; (d)-(e) present the progress of individuals during optimization.}
    \label{fig:sim_pro}
\end{figure}
Furthermore, we analyze two imbalance-deduced issues that occur and impede past solutions during optimization: Pareto failure and imbalanced individual progress.

\noindent \textbf{Pareto Failure}: As shown in Figure~\ref{fig:sim_pro} (a)(b)(c), it is evident that PCGrad effectively mitigates conflict issue by projecting individuals onto orthogonal directions with respect to others. In contrast, CAGrad exhibits a certain probability of failing to preserve the conflict issue due to its inherent compromise between conflict-averse and convergence. This compromise is inevitably influenced by the issue of imbalance. As illustrated in Figure~\ref{fig:cagrad_imgrad_illu}, CAGrad tends to prioritize the combined gradient that deviates from the individual with the least norm when encountering imbalanced scenarios, leading to potential conflicts. Surprisingly, although Nash-MTL imposes a strong constraint for the Pareto property, i.e., $\forall i, -\varphi_i(\bm{\omega}) \le 0, \varphi_i(\bm{\omega}) = \log(\omega_i) + \log(\bm{g_i^{\top}}\bm{G}\bm{\omega}), \bm{G} = [\bm{g_1}, \bm{g_2}, ..., \bm{g_K}]$, it often fails to achieve such a guarantee. This failure can be attributed to the presence of negative terms in $\bm{g_i^{\top}}\bm{G}$, indicating conflicts between $\bm{g_i}$ and $\bm{g_j}$
 . Consequently, this leads to infeasible errors in the \textit{cvxpy}~\cite{diamond2016cvxpy} implementation, and the Nash-MTL algorithm chooses to skip the current step when such errors occur. As a result, Nash-MTL frequently encounters Pareto failures due to the co-existence of imbalance and conflict, as depicted in Figure~\ref{fig:sim_pro} (c). \texttt{IMGrad} demonstrates a tendency to acquire a combined gradient that effectively preserves the Pareto property as the imbalance ratio increases. 

\noindent \textbf{Imbalanced Individual Progress}:
We employ an individual progress metric proposed by~\cite{chen2018gradnorm}, which is defined as follows:
\begin{align}
    r_i(t) = \mathcal{L}_i(t) / \mathcal{L}_i(0)
\end{align}
where $\mathcal{L}_i(t)$ represents the individual loss value at $t$ time. As depicted in Figure~\ref{fig:sim_pro} (d)(e)(f), Nash-MTL demonstrates a narrower gap in terms of individual progress compared to CAGrad and PCGrad. This can be attributed to the more balanced combination employed by Nash-MTL, as indicated by the cosine similarity in (a)(b)(c). Consequently, Nash-MTL exhibits superior overall performance, characterized by a smaller $\Delta m\%$. Specifically, $\Delta m\%$ is widely adopted to evaluate the overall degradation compared to independently trained models, which are considered as the reference oracles. Its formal definition can be found in the \textbf{Performance Evaluation} section. 

Unfortunately, none of the above methods get rid of both Pareto failure and imbalanced individual progress, primarily due to their limited focus on the imbalance issue.


\subsection{Benefits of Integrating Imbalance-Sensitivity}
The toy results depicted in Figure~\ref{fig:toy_imb} and Figure~\ref{fig:toy_bal} demonstrate that among the methods evaluated, only our proposed \texttt{IMGrad}, which incorporates imbalance-sensitivity, consistently arrives at the optimal point from all initial starts.

To further elucidate the advantages of imbalance-sensitivity in optimization-based MTL, we have implemented a na\"{\i}ve method called \textit{Adaptive Threshold}. This baseline selectively applies optimization-based MTL approaches only when the imbalance ratio surpasses a specific threshold. The results of this implementation on CityScapes are presented in Figure~\ref{fig:benefit} (a). It is evident that all baselines exhibit varying performance as the imbalance ratio fluctuates, emphasizing the significance of imbalance-sensitivity. Notably, all baselines outperform their respective vanilla versions under specific threshold conditions, providing additional evidence of the effectiveness of injecting imbalance-sensitivity.
\begin{figure}
    \centering
    \subfloat[Imbalance Sensitivity]{\includegraphics[width = 0.23\textwidth]{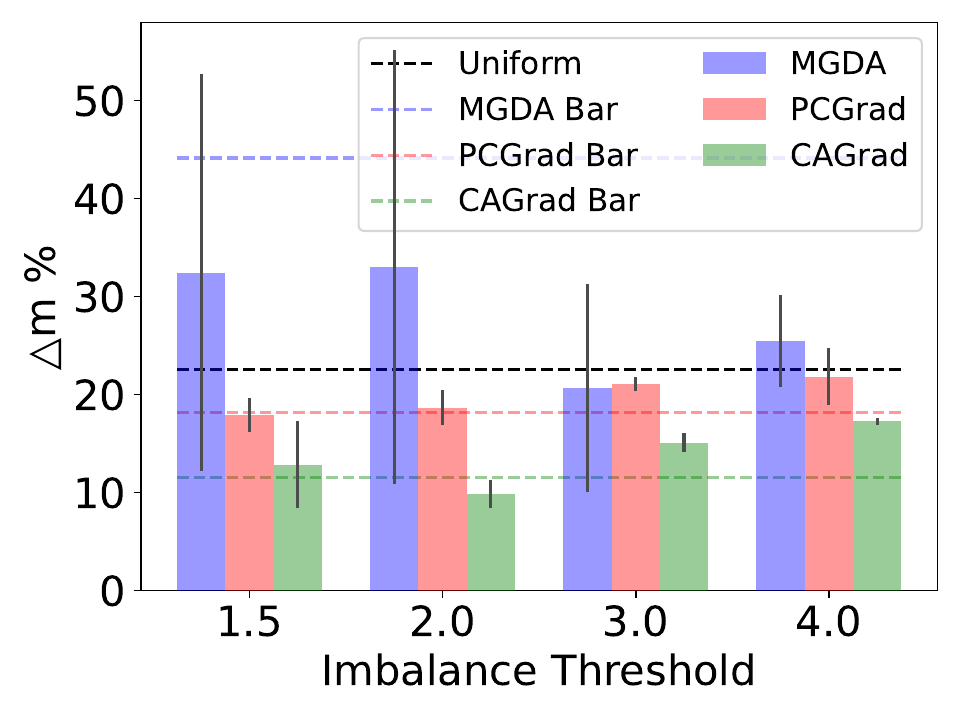}}
    \hfill
    \subfloat[Conflict Sensitivity]{\includegraphics[width = 0.23\textwidth]{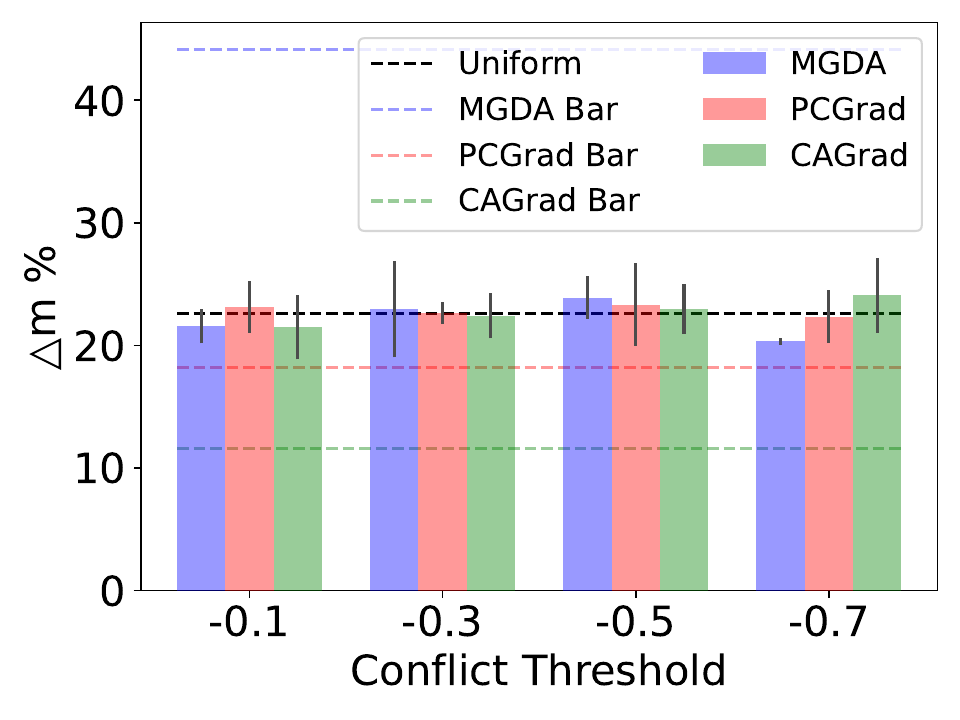}}
    \caption{Imbalance and conflict sensitivity examination.}
    \label{fig:benefit}
\end{figure}

Additionally, we have conducted a series of control group experiments to further support our findings. Similarly, we only apply optimization-based MTL when the gradient similarity falls below a certain threshold. As depicted in Figure~\ref{fig:benefit} (b), all baselines demonstrate relatively stable performance compared to those in (a) and fail to outperform the vanilla version, except for MGDA (which itself performs worse than LS). This outcome further reinforces the claim that imbalance matters more.

\section{Principal Design} \label{sec:design}
In this section, taking CAGrad as the baseline, we present the principal design of \texttt{IMGrad}, encompassing its formulation in the objective function and the practical implementation. And we provide convergence and speedup analysis in the \textbf{Appendix}.

\subsection{Injecting Imbalance-Sensitivity}
As a widely adopted baseline, CAGrad strikes a balance between Pareto property and globe convergence, and its \underline{dual objective} is formulated as follows:
\begin{align} \label{eqn:cagrad}
    \underset{\bm{d} \in \mathbb{R}^m}{\rm max}\underset{\bm{\omega} \in \mathcal{W}}{\rm min}  \bm{g_{\omega}^{\top}}\bm{d}  \ \ \ \ {\rm s.t.} \left \| \bm{d} - \bm{g_0} \right \| \le c\left \| \bm{g_0} \right \|   
\end{align}
where $\bm{d}$ represents the combined gradient, while $\bm{g_0}$ denotes the averaged gradient, and $c$ is the hyper-parameter. 

To alleviate the imbalance-deduced Pareto failures or individual progress issue as illustrated in Figure~\ref{fig:sim_pro}, a logical approach is to maximize the projected norm of the combined gradient across all individuals. To achieve this, we incorporate a stronger constraint ($\bm{g_{i}^{\top}}\bm{d} - \left \| \bm{g_{i}} \right \|^2$) into Eqn.~\ref{eqn:cagrad}, which encourages projected norms that surpass individual norms. This formulation is reflected in our objective presented in Eqn.~\ref{eqn:new_form}, and subsequently, we derive the corresponding Lagrangian equations in Eqn.~\ref{eqn:lag}.
\begin{align} 
     \underset{\bm{d} \in \mathbb{R}^m}{\rm max}\underset{\bm{\omega} \in \mathcal{W}}{\rm min}  \bm{g_{\omega}^{\top}}\bm{d} - \mu (\bm{g_{\omega}^{\top}}\bm{d} - \left \| \bm{g_{\omega}} \right \|^2 ) \ \ {\rm s.t.} \left \| \bm{d} - \bm{g_0} \right \| \le c\left \| \bm{g_0} \right \|\label{eqn:new_form} \\
    \underset{\bm{d} \in \mathbb{R}^m}{\rm max}\underset{\lambda \ge 0, \bm{\omega} \in \mathcal{W}}{\rm min}  \bm{g_{\omega}^{\top}}\bm{d}  - \lambda (\left \| \bm{d} - \bm{g_0} \right \|^2 - \phi)/2 \ \ \ \ \ \ \ \ \ \ \ \ \ \ \label{eqn:lag} \\
    - \mu (\bm{g_{\omega}^{\top}}\bm{d} - \left \| \bm{g_{\omega}} \right \|^2 ), \ \ \ \lambda > 0, \mu > 0\ \ \ \ \ \ \ \ \ \ \ \ \ \ \ \ \ \  \nonumber 
\end{align}

\begin{figure}
    \centering
    \includegraphics[width=\linewidth]{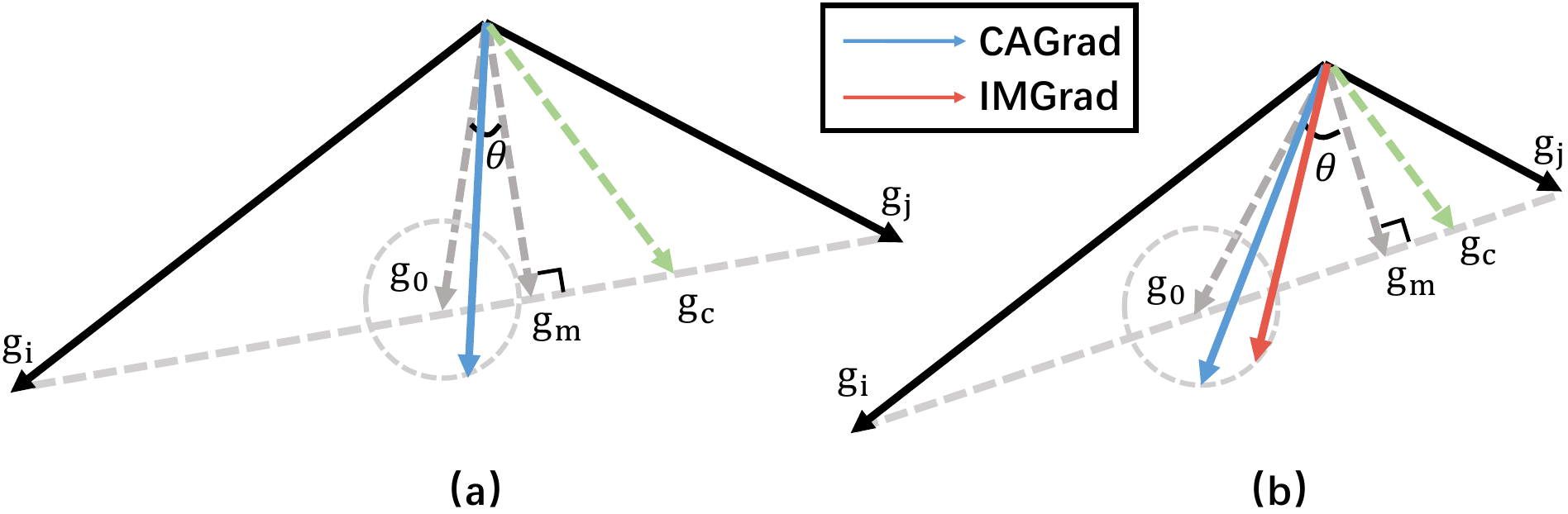}
    \caption{Multi-objective optimization Comparison between CAGrad and \texttt{IMGrad}. Here we suppose the angles between $\bm{g_i}$ and $\bm{g_j}$ in (a) and (b) are same. $\bm{g_m}$ can be obtained via MGDA.}
    \label{fig:cagrad_imgrad_illu}
\end{figure}

The strong duality property holds for the aforementioned objective, as supported by convex programming principles and the fulfillment of Slater's condition. Consequently, we interchange the positions of the minimum and maximum operators:
\begin{align} \label{obj:exg}
    \underset{\lambda \ge 0, \bm{\omega} \in \mathcal{W}}{\rm min} \underset{\bm{d} \in \mathbb{R}^m}{\rm max} 
    (1-\mu)\bm{g_{\omega}^{\top}}\bm{d} \\
    - \frac{\lambda}{2} (\left \| \bm{d} - \bm{g_0} \right \|^2 - \phi) + \mu  \left \| \bm{g_{\omega}} \right \|^2 \nonumber
\end{align}

\noindent With $\lambda, \omega$ fixing, the optimal $\bm{d}$ is achieved when $\bm{d} = \bm{g_0} + \frac{(1-\mu)\bm{g_{\omega}}}{\lambda}$. Substitude the optimal $\bm{d}$ into Eqn.~\ref{obj:exg}, yielding the following problem:
\begin{align}
    \underset{\lambda \ge 0, \bm{\omega} \in \mathcal{W}}{\rm min}
    (1-\mu)\bm{g_{\omega}^{\top}}\bm{g_0} + \mu  \left \| \bm{g_{\omega}} \right \|^2  \\
    + \frac{(1-\mu)^2}{2\lambda} \left \| \bm{g_{\omega}} \right \|^2 + \frac{\lambda}{2} \phi \nonumber
\end{align}
After optimizing out the $\lambda$ we have
\begin{align}
    \underset{\bm{\omega} \in \mathcal{W}}{\rm min} (1-\mu)\bm{g_{\omega}^{\top}}\bm{g_0} + \mu  \left \| \bm{g_{\omega}} \right \|^2 + (1 - \mu)\sqrt{\phi}\left \| \bm{g_{\omega}} \right \|
\end{align}
where $\lambda = (1 - \mu) \left \| \bm{g_{\omega}} \right \| / \phi^{1/2}$, and finally we have the optimization objective in Eqn.~\ref{eqn:final}. By solving this objective, we can obtain $\bm{g_{\omega}}$ and have $\bm{d} = \bm{g_0} + \frac{\phi^{1/2}}{\left \| \bm{g_{\omega}} \right \|}\bm{g_{\omega}}$.
\begin{align} \label{eqn:final}
    \underset{\bm{\omega} \in \mathcal{W}}{\rm min} (1-\mu)(\underbrace{\bm{g_{\omega}^{\top}}\bm{g_0} + \sqrt{\phi}\left \| \bm{g_{\omega}} \right \|}_{\rm \color{cvprblue} CAGrad}) + \mu \underbrace{ \left \| \bm{g_{\omega}} \right \|^2}_{\rm \color{cvprblue} \sim MGDA}
\end{align}
Upon careful examination of Eqn.~\ref{eqn:final}, it becomes evident that the final objective can be decomposed into two distinct components: CAGrad and MGDA. As depicted in Figure~\ref{fig:cagrad_imgrad_illu} (a), the gradient obtained by solving the practical objective in Eqn.~\ref{eqn:cagrad_prac}, denoted as $\bm{g_c}$ (represented by the green dotted line), predominantly resides within the region bounded by $\bm{g_m}$ and $\bm{g_j}$. However, in the case of an extreme imbalance scenario, as illustrated in Figure~\ref{fig:cagrad_imgrad_illu} (b), the corresponding $\bm{g_c}$ tends to lean towards the dominant gradient $\bm{g_i}$, thereby increasing the risk of conflicting with $\bm{g_j}$ and resulting in Pareto failures. When confronted with such a situation characterized by varying imbalances, it is desirable for $\mu$ to adaptively adjust $\bm{g_c}$ to consistently avoid Pareto failures while still promoting individual progress when the imbalance is less pronounced. Consequently, we establish a connection between $\mu$ and the gradient imbalances, effectively controlling the constraint ($\bm{g_{i}^{\top}}\bm{d} - \left \| \bm{g_{i}} \right \|^2$) adaptively based on the imbalance circumstances.

Multiple alternatives exist for quantifying the imbalance ratio among individuals~\footnote{Please refer to the \textbf{Appendix} for additional alternatives.}. We here choose to compute $\cos\theta$ to represent the imbalance ratio (see negative correlation between imbalance ratio and $\cos\theta$ in the \textbf{Appendix}), where $\theta$ denotes the angle between $\bm{g_0}$ and $\bm{g_m}$. As a result, Eqn.~\ref{eqn:final} can be re-written as:
\begin{align} \label{eqn:rewritten}
    \underset{\bm{\omega} \in \mathcal{W}}{\rm min} (1-\cos\theta)(\bm{g_{\omega}^{\top}}\bm{g_0} + \sqrt{\phi}\left \| \bm{g_{\omega}} \right \|) + \cos\theta \left \| \bm{g_{\omega}} \right \|^2
\end{align}

\noindent \textbf{Simplification}: As a matter of fact, CAGrad itself contains decoupled components in its \underline{practical objective}:
\begin{align} \label{eqn:cagrad_prac}
    \underset{\bm{\omega} \in \mathcal{W}}{\rm min} \underbrace{\bm{g_{\omega}^{\top}}\bm{g_0}}_{\rm \color{cvprblue} Push\ Away\ from\ \bm{g_0}} + \underbrace{\sqrt{\phi}\left \| \bm{g_{\omega}} \right \|}_{\rm \color{cvprblue} \sim MGDA}
\end{align}
where $\bm{g_{\omega}^{\top}}\bm{g_0}$ tends to push away from $\bm{g_0}$ and $\sqrt{\phi}\left \| \bm{g_{\omega}} \right \|$ plays the role of MGDA does. Thus we can simplify the Eqn.~\ref{eqn:rewritten} as:
\begin{align} \label{eqn:final_simp}
    \underset{\bm{\omega} \in \mathcal{W}}{\rm min} (1-\cos\theta)\bm{g_{\omega}^{\top}}\bm{g_0} + \cos\theta \sqrt{\phi} \left \| \bm{g_{\omega}} \right \|^2
\end{align}

\subsection{Augment Nash-MTL with Imbalance Sensitivity}
As stated in the previous \textbf{Pareto Failure} analysis, while Nash-MTL effectively addresses the imbalance issue and appears to be naturally conflict-averse, its implementation often leads to frequent Pareto failures. To address this problem, let's first examine its decoupled objective:
\begin{align}
    \underset{\omega}{\rm min} \underbrace{\sum_{i} \bm{g_i^{\top}}\bm{G}\bm{\omega}}_{\rm \color{cvprblue} Push\ Away\ from\ \bm{g_0}} + \underbrace{\varphi(\bm{\omega})}_{\rm \color{cvprblue} Strike\ balance\ among\ individuals}\\
    {\rm s.t.}\  \forall i, -\varphi_i(\bm{\omega}) \le 0,\ \ \omega_i > 0 \ \ \ \ \nonumber
\end{align}
where $\varphi_i(\bm{\omega}) = \log(\omega_i) + \log(\bm{g_i^{\top}}\bm{G}\bm{\omega}), \bm{G} = [\bm{g_1}, \bm{g_2}, ..., \bm{g_K}]$. $\sum_{i} \bm{g_i^{\top}}\bm{G}\bm{\omega}$ tends to push away from $\bm{g_0}$ and $\varphi(\bm{\omega})$ strives balance among individuals. Intuitively, we expect to preserve the Pareto property when encountering extremely imbalanced scenarios; therefore, $\sum_{i} \bm{g_i^{\top}}\bm{G}\bm{\omega}$ should be given more weight:
\begin{align}
    \underset{\omega}{\rm min}\ (1 - \cos\theta)\sum_{i} \bm{g_i^{\top}}\bm{G}\bm{\omega} + \cos\theta \varphi(\bm{\omega})
\end{align}
With the proper assumption of H-Lipschitz on gradients, we can still avoid Pareto failure with the derived weights among individuals from the last step. In a word, we augment Nash-MTL by injecting imbalance sensitivity to reduce Pareto failures. Please refer to the \textbf{Appendix} for more details. 



\subsection{Implementation}
We implement our code with Python 3.8, PyTorch 1.4.0 and cvxpy 1.3.1, while all experiments are carried out on Tesla V100 GPUs. We follow the setting and general implementation of~\cite{liu2021conflict}, and the toy example generation is borrowed from~\cite{navon2022multi,senushkin2023independent}. See more implementation details in the \textbf{Appendix}.

\section{Performance Evaluation}  \label{sec:delta_m}
Following the evaluation protocol in~\cite{navon2022multi} and taking it as the baseline, we conduct experiments under the supervised learning and reinforcement learning scenarios. Specifically, two scene understanding and one image classification benchmarks are involved in supervised learning, and the classical MT10 benchmark is adopted for reinforcement learning. The examination of Pareto failures, individual task progress, a sensitivity analysis of $\mu$, the verification of negative correlation between imbalance ratio and $\cos \theta$, speed analysis, and more visualizations are also provided in the \textbf{Appendix}, please refer them for more details.

\noindent \textbf{Evaluation metric}. 
In addition to reporting individual performance, we also incorporate a widely used metric, $\Delta m\%$~\cite{maninis2019attentive}, which evaluates the overall degradation compared to independently trained models that are considered as the reference oracles. The formal definition of $\Delta m\%$ is given as: $\Delta m\% = \frac{1}{K}\sum_{k=1}^K (-1)^{\delta _k}(M_{m,k} - M_{b,k}) / M_{b,k}$. $M_{m,k}$ and $M_{b,k}$ represent the metric $M_k$ for the compared method and the independent model, respectively. The value of $\delta _k$ is assigned as 1 if a higher value is better for $M_k$, and 0 otherwise.

\subsection{Supervised Learning}
Customary evaluation in supervised learning for MTL involves assessing the ability of MTL approaches to handle multiple scene understanding and classification tasks. For scene understanding tasks, we follow previous studies~\cite{liu2021conflict,liutowards,navon2022multi} and employ a Multi-Task Attention Network (MTAN)\cite{liu2019end} as the fundamental architecture for all MTL approaches. Our experiments are conducted on two well-established datasets: NYUv2\cite{silberman2012indoor} and CityScapes~\cite{cordts2016cityscapes}. To ensure fair comparisons, we adopt the same training strategy as described in prior works~\cite{liu2021conflict,navon2022multi}. Specifically, models are trained for 200 epochs using the Adam optimizer, with an initial learning rate of 1e-4, which decays to 5e-5 after 100 epochs. For the image classification task, we utilize a 9-layer convolutional neural network (CNN) as the backbone, with linear layers serving as task-specific heads, and conduct experiments on CelebA~\cite{liu2015deep}. The model is trained using the Adam optimizer for 15 epochs, with an initial learning rate of 3.0e-4 and a batch size of 256.
\begin{table}[h]
\setlength\tabcolsep{3pt}
\centering
\caption{\textbf{Scene understanding} (\textit{CityScapes}, 2 tasks). We report MTAN model performance averaged over 3 random seeds.}
\label{table:city}
\footnotesize
\begin{tabular}{lllll|l}
\hline \toprule 
\multicolumn{1}{c}{\multirow{3}{*}{Method}} &
  \multicolumn{2}{c}{Segmentation} &
  \multicolumn{2}{c|}{Depth} &   
  \multicolumn{1}{c}{\multirow{3}{*}{$\Delta$ m\% $\downarrow$}} \\ \cmidrule(r){2-3} \cmidrule(r){4-5} 
\multicolumn{1}{c}{} &
  \multicolumn{2}{c}{\multirow{1}{*}{(Higher Better)}} &
  \multicolumn{2}{c|}{\multirow{1}{*}{(Lower Better)}} &
  \multicolumn{1}{c}{} 
\\ \cmidrule(r){2-3} \cmidrule(r){4-5}
\multicolumn{1}{c}{} &
  \multicolumn{1}{c}{mIoU} &
  Pix. Acc. &
  \multicolumn{1}{c}{Abs. Err.} &
  \multicolumn{1}{c|}{Rel. Err.} &
  \multicolumn{1}{c}{} \\ \cmidrule(r){1-6}  
Independent &
  \multicolumn{1}{c}{74.01}  &
                     93.16   &                  
  \multicolumn{1}{c}{0.0125} &
                     27.77  &                 
                     -
   \\ \cmidrule(r){1-6} 
LS &
  \multicolumn{1}{c}{75.18}  &
                     93.49   &
  \multicolumn{1}{c}{0.0155} &
                     46.77  &
                     22.60
   \\ 
RLW &
  \multicolumn{1}{c}{74.57}  &
                     93.41   &
  \multicolumn{1}{c}{0.0158} &
                     47.79  &
                     24.37
   \\ 
DWA &
  \multicolumn{1}{c}{75.24}  &
                     93.52   &
  \multicolumn{1}{c}{0.0160} &
                     44.37  &
                     21.43
   \\ 
MGDA &
  \multicolumn{1}{c}{68.84}  &
                     91.54   &
  \multicolumn{1}{c}{0.0309} &
                     33.50  &
                     44.14
   \\ 
GradDrop &
  \multicolumn{1}{c}{75.27}  &
                     93.53   &
  \multicolumn{1}{c}{0.0157} &
                     47.54  &
                     23.67
   \\ 
PCGrad &
  \multicolumn{1}{c}{75.13}  &
                     93.48   &
  \multicolumn{1}{c}{0.0154} &
                     42.07  &
                     18.21
\\
CAGrad &
  \multicolumn{1}{c}{75.16}  &
                     93.48   &
  \multicolumn{1}{c}{0.0141} &
                     37.60  &
                     11.58
   \\ 
IMTL &
  \multicolumn{1}{c}{75.33}  &
                     93.49   &
  \multicolumn{1}{c}{0.0135} &
                     38.41  &
                     11.04
   \\  
Nash-MTL &
  \multicolumn{1}{c}{75.41}  &
                   \cellcolor{mygrey}93.66   &
  \multicolumn{1}{c}{0.0129} &
                     35.02  &
                     6.82
   \\  
MoCo &
  \multicolumn{1}{c}{\cellcolor{mygrey}75.42}  &
                     93.55   &
  \multicolumn{1}{c}{0.0149} &
                     34.19  &
                     9.90
   \\
FAMO &
  \multicolumn{1}{c}{74.54}  &
                     93.29   &
  \multicolumn{1}{c}{0.0145} &
                     \cellcolor{mygrey}32.59  &
                     8.13
   \\  
   \cmidrule(r){1-6}  
\texttt{IMGrad} &
  \multicolumn{1}{c}{75.13}  &
                   93.45   &
  \multicolumn{1}{c}{\cellcolor{mygrey}0.0128 } &
                     34.95  &
                     \cellcolor{mygrey}6.61 
   \\  
 \bottomrule \hline
\end{tabular}
\end{table}

\begin{table*}[h]
\centering
\setlength\tabcolsep{5pt}
\caption{\textbf{Scene understanding} (\textit{NYUv2}, 3 tasks). We report MTAN model performance averaged over 3 random seeds.} 
\label{table:nyu}
\begin{tabular}{llllllllll|l}
\hline \toprule 
\multicolumn{1}{c}{\multirow{4}{*}{Method}} &
  \multicolumn{2}{c}{Segmentation} &
  \multicolumn{2}{c}{Depth} &
  \multicolumn{5}{c|}{Surface Normal} &   
  \multicolumn{1}{c}{\multirow{4}{*}{$\Delta$ m\% $\downarrow$}} \\ \cmidrule(r){2-3} \cmidrule(r){4-5} \cmidrule(r){6-10}
\multicolumn{1}{c}{} &
  \multicolumn{2}{c}{\multirow{2}{*}{(Higher Better)}} &
  \multicolumn{2}{c}{\multirow{2}{*}{(Lower Better)}} &
  \multicolumn{2}{c}{Angle Distance}  &
  \multicolumn{3}{c|}{Within $t^{\circ}$} &
  \multicolumn{1}{c}{} \\ \cmidrule(r){6-10}
\multicolumn{1}{c}{} &
  \multicolumn{2}{c}{} &
  \multicolumn{2}{c}{} &
  \multicolumn{2}{c}{(Lower Better)} &
  \multicolumn{3}{c|}{(Higher Better)} &
  \multicolumn{1}{c}{} \\ \cmidrule(r){2-3} \cmidrule(r){4-5} \cmidrule(r){6-10}
\multicolumn{1}{c}{} &
  \multicolumn{1}{c}{mIoU} &
  Pix. Acc. &
  \multicolumn{1}{c}{Abs Err} &
  Rel Err &
  \multicolumn{1}{c}{Mean} &
  \multicolumn{1}{c}{Median} &
  \multicolumn{1}{c}{11.25} &
  \multicolumn{1}{c}{22.5} &
  30 &
  \multicolumn{1}{c}{} \\ \cmidrule(r){1-11} 
Independent &
  \multicolumn{1}{c}{38.30}  &
                     63.76   &                  
  \multicolumn{1}{c}{0.68} &
                     0.28  &                 
  \multicolumn{1}{c}{25.01}  &
  \multicolumn{1}{c}{19.21}  &
  \multicolumn{1}{c}{30.14}  &
  \multicolumn{1}{c}{57.20}  &
                     69.15   &
                     -
   \\ \cmidrule(r){1-11}
LS &
  \multicolumn{1}{c}{39.29}  &
                     65.33   &                  
  \multicolumn{1}{c}{0.55} &
                     0.23  &                 
  \multicolumn{1}{c}{28.15}  &
  \multicolumn{1}{c}{23.96}  &
  \multicolumn{1}{c}{22.09}  &
  \multicolumn{1}{c}{47.50}  &
                     61.08   &
                     \ 5.46    
   \\
RLW &
  \multicolumn{1}{c}{37.17}  &
                     63.77   &                  
  \multicolumn{1}{c}{0.58} &
                     0.24  &                 
  \multicolumn{1}{c}{28.27}  &
  \multicolumn{1}{c}{24.18}  &
  \multicolumn{1}{c}{22.26}  &
  \multicolumn{1}{c}{47.05}  &
                     60.62   &
                     \ 7.67
   \\
DWA &
  \multicolumn{1}{c}{39.11}  &
                     65.31   &                  
  \multicolumn{1}{c}{0.55} &
                     0.23  &                 
  \multicolumn{1}{c}{27.61}  &
  \multicolumn{1}{c}{23.18}  &
  \multicolumn{1}{c}{24.17}  &
  \multicolumn{1}{c}{50.18}  &
                     62.39   &
                     \ 3.49
   \\
MGDA &
  \multicolumn{1}{c}{30.47}  &
                     59.90   &
  \multicolumn{1}{c}{0.61} &
                     0.26  &
  \multicolumn{1}{c}{\cellcolor{mygrey}24.88}  &
  \multicolumn{1}{c}{\cellcolor{mygrey}19.45}  &
  \multicolumn{1}{c}{29.18}  &
  \multicolumn{1}{c}{\cellcolor{mygrey}56.88}  &
                     \cellcolor{mygrey}69.36   &   
                     \ 1.47
   \\ 
GradDrop &
  \multicolumn{1}{c}{39.39}  &
                     65.12   &
  \multicolumn{1}{c}{0.55} &
                     0.23  &
  \multicolumn{1}{c}{27.48}  &
  \multicolumn{1}{c}{22.96}  &
  \multicolumn{1}{c}{23.38}  &
  \multicolumn{1}{c}{49.44}  &
                     62.87   &
                     \ 3.61
   \\ 
PCGrad &
  \multicolumn{1}{c}{38.06}  &
                     64.64   &
  \multicolumn{1}{c}{0.56} &
                     0.23  &
  \multicolumn{1}{c}{27.41}  &
  \multicolumn{1}{c}{22.80}  &
  \multicolumn{1}{c}{23.86}  &
  \multicolumn{1}{c}{49.83}  &
                     63.14   &
                     \ 3.83
   \\ 
CAGrad &
  \multicolumn{1}{c}{39.79}  &
                     65.49   &
  \multicolumn{1}{c}{0.55} &
                     0.23  &
  \multicolumn{1}{c}{26.31}  &
  \multicolumn{1}{c}{21.58}  &
  \multicolumn{1}{c}{25.61}  &
  \multicolumn{1}{c}{52.36}  &
                     65.58   &
                     \ 0.29
   \\ 
IMTL &
  \multicolumn{1}{c}{39.35}  &
                     65.60   &
  \multicolumn{1}{c}{0.54} &
                     0.23  &
  \multicolumn{1}{c}{26.02}  &
  \multicolumn{1}{c}{21.19}  &
  \multicolumn{1}{c}{26.20}  &
  \multicolumn{1}{c}{53.13}  &
                     66.24   &
                     -0.59
   \\
Nash-MTL &
  \multicolumn{1}{c}{40.13}  &
                     65.93   &
  \multicolumn{1}{c}{0.53} &
                     0.22  &
  \multicolumn{1}{c}{25.26}  &
  \multicolumn{1}{c}{20.08}  &
  \multicolumn{1}{c}{28.40}  &
  \multicolumn{1}{c}{55.47}  &
                     68.15   &
                     -4.04
   \\  
MoCo &
  \multicolumn{1}{c}{\cellcolor{mygrey}40.30}  &
                     66.07   &
  \multicolumn{1}{c}{0.56} &
                     {\cellcolor{mygrey}0.21}  &
  \multicolumn{1}{c}{26.67}  &
  \multicolumn{1}{c}{21.83}  &
  \multicolumn{1}{c}{25.61}  &
  \multicolumn{1}{c}{51.78}  &
                     64.85   &
                     \ 0.16
   \\   
FAMO &
  \multicolumn{1}{c}{38.88}  &
                     64.90   &
  \multicolumn{1}{c}{0.55} &
                     0.22  &
  \multicolumn{1}{c}{25.06}  &
  \multicolumn{1}{c}{19.57}  &
  \multicolumn{1}{c}{\cellcolor{mygrey}29.21}  &
  \multicolumn{1}{c}{56.61}  &
                     68.98   &
                     -4.10
   \\
   \cmidrule(r){1-11}

\texttt{IMGrad} &
  \multicolumn{1}{c}{40.20 }  &
                     {\cellcolor{mygrey}66.19}  &
  \multicolumn{1}{c}{\cellcolor{mygrey}0.52 } &
                     0.22  &
  \multicolumn{1}{c}{25.15 }  &
  \multicolumn{1}{c}{19.94 }  &
  \multicolumn{1}{c}{28.69 }  &
  \multicolumn{1}{c}{55.80 }  &
                     68.44   &
                     {\cellcolor{mygrey}-4.57}
   \\
   \bottomrule \hline
\end{tabular}
\end{table*}
 
\noindent\textbf{NYUv2.} NYUv2 is a widely used indoor scene understanding dataset for MTL benchmarking, encompassing three tasks: semantic segmentation, depth estimation, and surface normal prediction. The results, presented in Table~\ref{table:nyu}, show that \texttt{IMGrad} surpasses the previous SOTA in terms of $\Delta m\%$, highlighting the effectiveness of incorporating imbalance sensitivity. Specifically, \texttt{IMGrad} achieves best performance on segmentation and depth tasks without much promise on other tasks. 

\noindent \textbf{CityScapes.} The CityScapes dataset is used for MTL evaluation, focusing on semantic segmentation and depth estimation tasks. Following the previous experimental setup, we utilize a coarser version that categorizes segmentation into 7 classes. The results, presented in Table~\ref{table:city}, indicate that \texttt{IMGrad} exhibits a similar trend to its performance on NYUv2 and achieves SOTA results in terms of $\Delta m\%$.

\begin{table}[h]
\setlength\tabcolsep{5pt}
\centering
\caption{\textbf{Reinforcement learning} (\textit{MT10}, 10 tasks) and \textbf{image classification} (\textit{CelebA}, 40-task).} 
\label{table:mtrl}
\footnotesize
\begin{tabular}{l|l||l|l}
\hline \toprule 
\multicolumn{2}{c}{\textbf{MT10}} & \multicolumn{2}{c}{\textbf{CelebA}} \\
\cmidrule(lr){1-2} \cmidrule(lr){3-4} 
Method  &   Success $\pm$ SEM $\uparrow$  & Method & \textbf{$\Delta$m\% $\downarrow$}\\ \midrule 
LS                  &  0.49 $\pm$ 0.070  & LS & 4.15 \\
STL SAC &  0.90 $\pm$ 0.032  & SI & 7.20
   \\ \cmidrule(lr){1-2}
MTL SAC &   0.49 $\pm$ 0.073 & RLW & 1.46
   \\ 
MH SAC &  0.54 $\pm$ 0.047 & DWA & 3.20
   \\ 
SM &  0.73 $\pm$ 0.043  & UW & 3.23
   \\ 
CARE &  0.84 $\pm$ 0.051  & MGDA & 14.85
   \\ 
PCGrad & 0.72 $\pm$ 0.022 & PCGrad & 3.17
   \\ 
CAGrad &  0.83 $\pm$ 0.045 & CAGrad & 2.48
   \\ 
Nash-MTL & 0.91 $\pm$ 0.031 & Nash-MTL & 2.84
\\ 
FAMO & 0.83 $\pm$ 0.050 & FAMO & 1.21
\\ \midrule
\rowcolor{mygrey} \texttt{IMGrad} &   0.93 $\pm$ 0.068 {\color{cvprblue}(\textbf{+0.10})}  &  \texttt{IMGrad} &    1.27
   \\
 \bottomrule \hline
\end{tabular}
\end{table}  
\noindent \textbf{CelebA.} CelebA is a widely used face attributes dataset containing over 200,000 images annotated with 40 attributes. Recently, it has been adopted as a 40-task MTL benchmark to evaluate a model’s ability to handle a large number of tasks. The results, presented in Table~\ref{table:mtrl}, are averaged over three random seeds. While \texttt{IMGrad} does not achieve the best performance, it consistently ranks among the top methods, underscoring the importance of imbalance sensitivity for learning massive tasks.

\subsection{Reinforcement Learning}
Reinforcement learning is another domain where MTL is often essential, as it seeks to acquire a policy capable of succeeding across various manipulation tasks. To evaluate the generalizability of our proposed method, we use CAGrad as the baseline and conduct experiments on the MT10 environment from the Meta-World benchmark~\cite{yu2020meta}. The results, presented in Table~\ref{table:mtrl}, report the average success rate on the validation set over 10 random seeds. Consistent with the improvements observed in supervised learning evaluations, \texttt{IMGrad} enhances CAGrad by over 0.10, achieving SOTA performance on this benchmark. It is worth noting that Nash-MTL does not provide an official implementation for reinforcement learning benchmarks. As a result, we did not augment it for evaluation in this context.  

\section{Conclusion}
In this paper, we begin by empirically demonstrating the significance of addressing the imbalance issue in optimization-based MTL. We assert that incorporating imbalance-sensitivity is crucial for avoiding Pareto failures and promoting balanced individual progress. Building upon this motivation, we propose \texttt{IMGrad}, a method derived from a projection norm constraint, which is further simplified as an adaptive balancer between decoupled objectives. Through extensive experiments, we validate the effectiveness of our proposed approach. We believe that our explicit emphasis on the imbalance issue, rather than the conflict issue, provides valuable insights for the future development of optimization-based MTL.

\bibliographystyle{named}
\bibliography{ijcai24} 

\end{document}


\maketitle

\section{Implementation Details} \label{sec:imp}

\subsection{MTL Baselines}
\noindent \textbf{CAGrad}: CAGrad strikes a balance between Pareto optimality and globe convergence by regulating the combined gradients in proximity to the average gradient:
\begin{align} \label{app_eqn:cagrad}
    \underset{\bm{d} \in \mathbb{R}^m}{\rm max}\underset{\bm{\omega} \in \mathcal{W}}{\rm min}  \bm{g_{\omega}^{\top}}\bm{d}  \ \ \ \ {\rm s.t.} \left \| \bm{d} - \bm{g_0} \right \| \le c\left \| \bm{g_0} \right \|   
\end{align}

In our experimental setup, we utilize the official code~\footnote{\url{https://github.com/Cranial-XIX/CAGrad}} that encompasses the implementations of MGDA, PCGrad, and CAGrad. Additionally, we extend this implementation to include \texttt{IMGrad}. For more comprehensive information, please consult the official implementation.

\noindent \textbf{Nash-MTL}: Nash-MTL provides the individual progress guarantee via the following objective:
\begin{align} \label{eqn:nash}
    \min_{\bm{\omega}} \sum_i \beta_i(\bm{\omega}) + \varphi_i(\bm{\omega}) \\
    s.t. \forall i, - \varphi_i(\bm{\omega}) \le 0, \ \ \ \omega_i > 0.
\end{align}
where $\varphi_i(\bm{\omega}) = \log(\omega_i) + \log(\bm{g_i^{\top}}\bm{G}\bm{\omega}), \ \bm{G} = [\bm{g_1}, \bm{g_2}, ..., \bm{g_K}]$. As demonstrated, the individual progress is ensured through the projection, subject to the constraint $\beta_i = \bm{g_i^{\top}}\bm{G}\bm{\omega} \ge \frac{1}{\omega_i}$. \uline{However, in cases where conflicting individuals arise, such as when $\bm{g_i}$ conflicts with $\bm{g_j}$, the official implementation of Nash-MTL opts to skip the current step, which poses a potential risk in handling these conflicting individuals.} It is important to acknowledge that the experiments conducted on CityScapes in the main text regarding Nash-MTL were performed using our re-implementation, as the official version does not offer the corresponding implementation. 

\noindent \textbf{\underline{Our Extension to Nash-MTL}}:
As previously stated, Nash-MTL opts to skip the current step and retain the weights from the previous step only when negative terms exist in $\bm{g_i^{\top}}\bm{G}$, indicating a conflict issue in the current step. Moreover, the co-existence of both imbalance and conflict issues necessitates a specialized design for MTL. By assuming H-Lipschitz continuity on gradients, we expect a similar imbalance status between consecutive steps. To address the conflict issue, we incorporate imbalance sensitivity, which allows greater weighting on $\sum_{i} \bm{g_i^{\top}}\bm{G}\bm{\omega}$.

\subsection{Datasets}
\noindent \textbf{NYUv2}: NYUv2 is an indoor scene dataset comprising 1449 RGBD images, accompanied by dense per-pixel labeling encompassing 13 classes. This dataset is commonly employed for tasks such as semantic segmentation, depth estimation, and surface normal prediction.

\noindent \textbf{CityScapes}: CityScapes is a renowned benchmark dataset for multi-task learning, comprising 5000 high-resolution street-view images accompanied by dense per-pixel labels for tasks such as semantic segmentation and depth estimation. In accordance with the previous configuration~\cite{liu2021conflict}, we resized the images to 128*256 dimensions prior to inputting them into the model, aiming to enhance computational efficiency.

\noindent \textbf{MT10}: MT10 is a widely recognized benchmark for multi-task reinforcement learning, encompassing 10 robot manipulation tasks. For visual references, please consult~\citep{liu2021conflict}. Our implementation builds upon the official implementation of CAGrad and leverages the well-established MTRL environment~\footnote{\url{https://github.com/facebookresearch/mtrl}}.

\subsection{Synthetic Examples}
We adopt the toy optimization example introduced in~\citep{liu2021conflict,navon2022multi}, which involves two objectives that consider the variables $\bm{\vartheta} = (\vartheta_1, \vartheta_2)$:
\begin{small}
\begin{align}
    \mathcal{L}_1(\bm{\vartheta}) = c_1(\bm{\vartheta})f_1(\bm{\vartheta}) + c_2(\bm{\vartheta})g_1(\bm{\vartheta})\ \ \ \ \ \ \ \ \ \ \ \ \ \ \ \ \ \ \ \ \ \ \ \\
    \mathcal{L}_2(\bm{\vartheta}) = c_1(\bm{\vartheta})f_2(\bm{\vartheta}) + c_2(\bm{\vartheta})g_2(\bm{\vartheta})\ \ \ \ \ \ \ {\rm where}  \ \ \ \ \ \  \nonumber \\
    f_1(\bm{\vartheta}) = \log (\max(| 0.5(-\vartheta_1 -7) - \tanh(\vartheta_2) |, 5.e-6)) + 6, \nonumber  \ \ \ \ \ \ \\
    f_2(\bm{\vartheta}) = \log (\max(| 0.5(-\vartheta_1 +3) - \tanh(\vartheta_2) + 2|, 5.e-6)) + 6, \nonumber  \\ 
    g_1(\bm{\vartheta}) = ((-\vartheta_1 + 7)^2 + 0.1 * (-\vartheta_2-8)^2)/10 - 20, \ \ \ \ \ \ \ \ \ \nonumber  \\
    g_2(\bm{\vartheta}) = ((-\vartheta_1 - 7)^2 + 0.1 * (-\vartheta_2-8)^2)/10 - 20, \ \ \ \ \ \ \ \ \ \nonumber  \\    
    c_1(\bm{\vartheta}) = \max (\tanh(0.5*\vartheta_2), 0), \ \ \ \ \ \ \ \ \ \ \ \ \ \ \ \ \ \ \ \ \ \ \ \ 
 \nonumber \\
    c_2(\bm{\vartheta}) = \max(\tanh(-0.5*\vartheta_2), 0). \ \ \ \ \ \ \ \ \ \ \ \ \ \ \ \ \ \ \ \ \ \nonumber
\end{align}
\end{small}

The initial points in Figure 2 of the main text are $\{ (-8.5, 7.5), (-8.5, 5), (0, 0), (9, 9), (10, -8)\}$, while those in Figure 4 are $\{(-8.5, 7.5), (0, 8), (5, 9)\}$.

\section{Convergence Analysis} \label{app_sec:conv}
 Let's first recall the convergence analysis of CAGrad in Theorem~\ref{thm:2}.
\begin{theorem}(Convergence of CAGrad) \label{thm:2}
With a fix step size $\alpha$ and the assumption of H-Lipschitz ($0<H\le 1/\alpha$) on gradients, i.e., $\left \| \nabla  \mathcal{L}_i(\bm{\theta}) - \nabla \mathcal{L}_i(\bm{\theta'}) \right \| \leq H \left \|  \bm{\theta} - \bm{\theta'} \right \|$ for i = 1, 2, ..., K. Denote $\bm{d^*(\theta_t)}$ as the optimization direction of \texttt{IMGrad} at step $t$, then we have:

(1) If $0\le c< 1$, then CAGrad converges to stationary points of $\mathcal{L_}_0$ convergence rate in that
\begin{align}
    \sum_{t=0}^{T}\left \| \bm{g_0(\theta_t)} \right \|^2 \le \frac{2(\mathcal{L}(\bm{\theta_{0}}) - \mathcal{L}(\bm{\theta_{T+1}}))}{\alpha(1-c^2)}
\end{align}
\end{theorem}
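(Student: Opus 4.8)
The plan is to run the standard descent-lemma argument for smooth nonconvex optimization, letting the trust-region constraint $\|\bm{d}-\bm{g_0}\|\le c\|\bm{g_0}\|$ simultaneously control the descent term $\bm{g_0}^\top\bm{d}$ and the curvature term $\|\bm{d}\|^2$. Throughout, write $\mathcal{L}_0=\tfrac1K\sum_i\mathcal{L}_i$ for the averaged objective (so $\bm{g_0}=\nabla\mathcal{L}_0$) and abbreviate $\bm{d}:=\bm{d^*(\theta_t)}$ for the CAGrad direction at step $t$, with update $\bm{\theta_{t+1}}=\bm{\theta_t}-\alpha\,\bm{d}$. Since each $\nabla\mathcal{L}_i$ is $H$-Lipschitz, the triangle inequality makes the average $\nabla\mathcal{L}_0$ $H$-Lipschitz as well, so $\mathcal{L}_0$ is $H$-smooth and the descent lemma gives
\[
\mathcal{L}_0(\bm{\theta_{t+1}})\ \le\ \mathcal{L}_0(\bm{\theta_t})-\alpha\,\bm{g_0}^\top\bm{d}+\tfrac{H\alpha^2}{2}\|\bm{d}\|^2\ \le\ \mathcal{L}_0(\bm{\theta_t})-\alpha\,\bm{g_0}^\top\bm{d}+\tfrac{\alpha}{2}\|\bm{d}\|^2 ,
\]
where the last inequality is exactly where the hypothesis $H\le 1/\alpha$ enters.

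The one genuinely non-mechanical step is lower-bounding $\bm{g_0}^\top\bm{d}$, and the key observation is that this follows from feasibility of $\bm{d}$ alone, not from the max--min optimality of the CAGrad direction: squaring $\|\bm{d}-\bm{g_0}\|\le c\|\bm{g_0}\|$ gives $\|\bm{d}\|^2-2\,\bm{g_0}^\top\bm{d}+\|\bm{g_0}\|^2\le c^2\|\bm{g_0}\|^2$, i.e.
\[
\bm{g_0}^\top\bm{d}\ \ge\ \tfrac12\|\bm{d}\|^2+\tfrac12(1-c^2)\|\bm{g_0}\|^2 .
\]
Substituting this into the preceding display cancels the $\|\bm{d}\|^2$ contributions and leaves the clean per-step decrease $\mathcal{L}_0(\bm{\theta_{t+1}})\le\mathcal{L}_0(\bm{\theta_t})-\tfrac{\alpha}{2}(1-c^2)\,\|\bm{g_0}(\bm{\theta_t})\|^2$, which is a strict decrease precisely because $c<1$.

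Finally I would telescope this inequality over $t=0,\dots,T$: the left-hand side collapses to $\mathcal{L}_0(\bm{\theta_0})-\mathcal{L}_0(\bm{\theta_{T+1}})$, and rearranging yields $\sum_{t=0}^{T}\|\bm{g_0}(\bm{\theta_t})\|^2\le \dfrac{2\,(\mathcal{L}(\bm{\theta_0})-\mathcal{L}(\bm{\theta_{T+1}}))}{\alpha(1-c^2)}$, the claimed rate; note that as $c\to 1$ the constant blows up, consistent with the loss of any descent guarantee in that limit. I do not anticipate a substantive obstacle --- the entire argument hinges on recognizing that the feasibility constraint itself produces the quadratic lower bound in the second paragraph, after which the smoothness step and the telescoping are routine bookkeeping.
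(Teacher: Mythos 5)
Your proposal is correct and follows essentially the same route as the paper's own proof: both apply the descent lemma with $H\le 1/\alpha$, then use only the feasibility constraint $\|\bm{d}-\bm{g_0}\|\le c\|\bm{g_0}\|$ (the paper via the polarization identity $2\bm{g_0}^\top\bm{d}=\|\bm{g_0}\|^2+\|\bm{d}\|^2-\|\bm{g_0}-\bm{d}\|^2$, you via squaring the constraint directly, which is the same algebra) to obtain the per-step decrease $-\tfrac{\alpha}{2}(1-c^2)\|\bm{g_0}(\bm{\theta_t})\|^2$ and telescope. Your observation that only feasibility, not optimality, of $\bm{d^*}$ is used is exactly the point the paper itself makes immediately after its proof.
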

 
\begin{proof}
\begin{align*}
    \mathcal{L}(\bm{\theta_{t+1}}) &- \mathcal{L}(\bm{\theta_{t}})  = \mathcal{L}(\bm{\theta_{t}} -\alpha \bm{d^*(\theta_t)}) - \mathcal{L}(\bm{\theta_{t}}) \\
    &\leq -\alpha \bm{g_0(\theta_t)}^\top \bm{d^*(\theta_t)} + \frac{H\alpha^2}{2}\left \| \bm{d^*(\theta_t)} \right \|^2    \\
    &\leq -\alpha \bm{g_0(\theta_t)}^\top \bm{d^*(\theta_t)} + \frac{\alpha}{2}\left \| \bm{d^*(\theta_t)} \right \|^2 \\ &\ \ \ \ \ {\color{magenta} //\alpha \le 1/H}    \nonumber  \\
    &\leq -\frac{\alpha}{2}\left (  \left \| \bm{g_0(\theta_t)} \right \|^2 + \left \| \bm{d^*(\theta_t)} \right \|^2 \\&- \left \| \bm{g_0(\theta_t)}  - \bm{d^*(\theta_t)}\right \|^2 \right ) + \frac{\alpha}{2}\left \| \bm{d^*(\theta_t)} \right \|^2  \nonumber  \\
    &= - \frac{\alpha}{2} \left ( \left \| \bm{g_0(\theta_t)} \right \|^2 - \left \| \bm{g_0(\theta_t)}  - \bm{d^*(\theta_t)}\right \|^2 \right )    \\
    &\leq - \frac{\alpha}{2}(1-c^2)\left \| \bm{g_0(\theta_t)} \right \|^2 \\ &\ \ \ \ \ {\color{magenta} // {\rm CAGrad's\ Bound}} \nonumber
\end{align*}
Using telescoping sums, we have $\mathcal{L}(\bm{\theta_{t+1}}) - \mathcal{L}(0) = -(\alpha\beta/2)(1-c^2)\sum_{t=0}^T\left \| \bm{g_m(\theta_t)} \right \|^2$. Thus, we further have
\begin{align}
    \underset{t \le T}{\rm min} \left \| \bm{g_0(\theta_t)} \right \|^2 &\le \frac{1}{T+1}\sum_{t=0}^{T}\left \| \bm{g_0(\theta_t)} \right \|^2 \nonumber \\ &\le \frac{2(\mathcal{L}(\bm{\theta_{0}}) - \mathcal{L}(\bm{\theta_{T+1}}))}{\alpha(1-c^2)(T+1)}
\end{align}
Generally, if $\mathcal{L}$ is lower bounded, then $\underset{t \le T}{\rm min} \left \| \bm{g_0(\theta_t)} \right \|^2 = \mathcal{O}(1/T)$. Besides, the upper bound of $\underset{t \le T}{\rm min} \left \| \bm{g_0(\theta_t)} \right \|^2$ is harnessed by $\beta$, which is positively related to the imbalance ratio of individuals. 
\end{proof}

Take a close at the above convergence analysis, we can find that the proof only leverages the property of $\left \| \bm{g_0(\theta_t)}  - \bm{d^*(\theta_t)}\right \| \le c\left \| \bm{g_0(\theta_t)}\right \|$, ignoring the solved $\bm{g_{\omega}}$, which is essential for mitigating Pareto failures and imbalanced individual progress claimed in the main text. Hence we re-organize the proof as following:
\begin{proof}
\begin{align*}
    \mathcal{L}(\bm{\theta_{t+1}}) &- \mathcal{L}(\bm{\theta_{t}}) = \mathcal{L}(\bm{\theta_{t}} -\alpha \bm{d^*(\theta_t)}) - \mathcal{L}(\bm{\theta_{t}}) \\
    &\leq -\alpha \bm{g_0(\theta_t)}^\top \bm{d^*(\theta_t)} + \frac{H\alpha^2}{2}\left \| \bm{d^*(\theta_t)} \right \|^2    \\
    &\leq -\alpha \bm{g_0(\theta_t)}^\top \bm{d^*(\theta_t)} + \frac{\alpha}{2}\left \| \bm{d^*(\theta_t)} \right \|^2 \\ &\ \ \ \ \ {\color{magenta} //\alpha \le 1/H} \nonumber \\
    & = - \alpha(1 + c\cos\varphi)\left \| \bm{g_0(\theta_t)} \right \|^2 + \frac{\alpha}{2}\left \| \bm{g_0(\theta_t)} \right \|^2 \\ &+ \frac{\alpha}{2}\left \| \bm{d^*(\theta_t)} \right \|^2 - \frac{\alpha}{2}\left \| \bm{g_0(\theta_t)} \right \|^2 \\
    &\leq - \alpha(1 + c\cos\varphi)\left \| \bm{g_0(\theta_t)} \right \|^2 + \frac{\alpha}{2}\left \| \bm{g_0(\theta_t)} \right \|^2 \\ &+ \frac{\alpha}{2}\left \| \bm{d^*(\theta_t)} -\bm{g_0(\theta_t)} \right \|^2 \\
    & = - \alpha(1 + c\cos\varphi)\left \| \bm{g_0(\theta_t)} \right \|^2 + \frac{\alpha}{2}\left \| \bm{g_0(\theta_t)} \right \|^2 \\ &+ \frac{\alpha c^2}{2}\left \| \bm{g_0(\theta_t)} \right \|^2 \\
    & = - \frac{\alpha}{2}(1-c^2 + 2c\cos\varphi)\left \| \bm{g_0(\theta_t)} \right \|^2 \\ &\ \ \ \ \ {\color{magenta} // {\rm Our\ Bound}}
\end{align*} 
where $\varphi$ is the angle between $\bm{g_0(\bm{\theta_t})}$ and $\bm{g_{\omega}(\bm{\theta_t})}$. And it should noted that although $\left \| \bm{g_0(\theta_t)}  - \bm{d^*(\theta_t)}\right \| \le c\left \| \bm{g_0(\theta_t)}\right \|$ is constrained in CAGrad, the practical implementation always satisfied $\left \| \bm{g_0(\theta_t)}  - \bm{d^*(\theta_t)}\right \| = c\left \| \bm{g_0(\theta_t)}\right \|$, thus we directly apply it in the above analysis. According to the final derived equation, the convergence rate of CAGrad is harnessed by $\cos \varphi$. Specifically, when all individuals are balanced, i.e., $\cos \varphi = 1$, then our bound degrades to the CAGrad's bound. 

In particular, we provide a tighter bound when $0^{\circ } < \varphi \le 90^{\circ }$, but a looser bound when $\varphi \ge 90^{\circ }$ (extreme imbalance). However, the percentage of extreme cases is very small (see Figure 5(b) in the main text) and therefore does not affect its convergence generally. Besides, according to the simplified objective in  Eqn. 11 in the main text, \texttt{IMGrad} tends to choose a larger $\varphi$, leading to a slower convergence to guarantee a more balanced individual progress.

\end{proof}
\section{Additional Experiments} \label{sec:add_exp}


\subsection{Sensitivity of $\mu$} \label{subsec:mu}
One concern arises regarding whether $\mu\ ({\rm cos}\theta)$ effectively adjusts the combined gradient. In other words, does $\mu$ exhibit sufficient sensitivity to address varying imbalance circumstances? To address this question, we provide statistics of $\mu$ for both NYUv2 and CityScapes datasets, which are presented in Figure~\ref{fig:mu}. The results demonstrate that $\mu$ exhibits a wide range of values that effectively cater to the manipulation requirements.
\begin{figure}[h]
    \centering
    \subfloat[NYUv2]{\includegraphics[width = 0.24\textwidth]{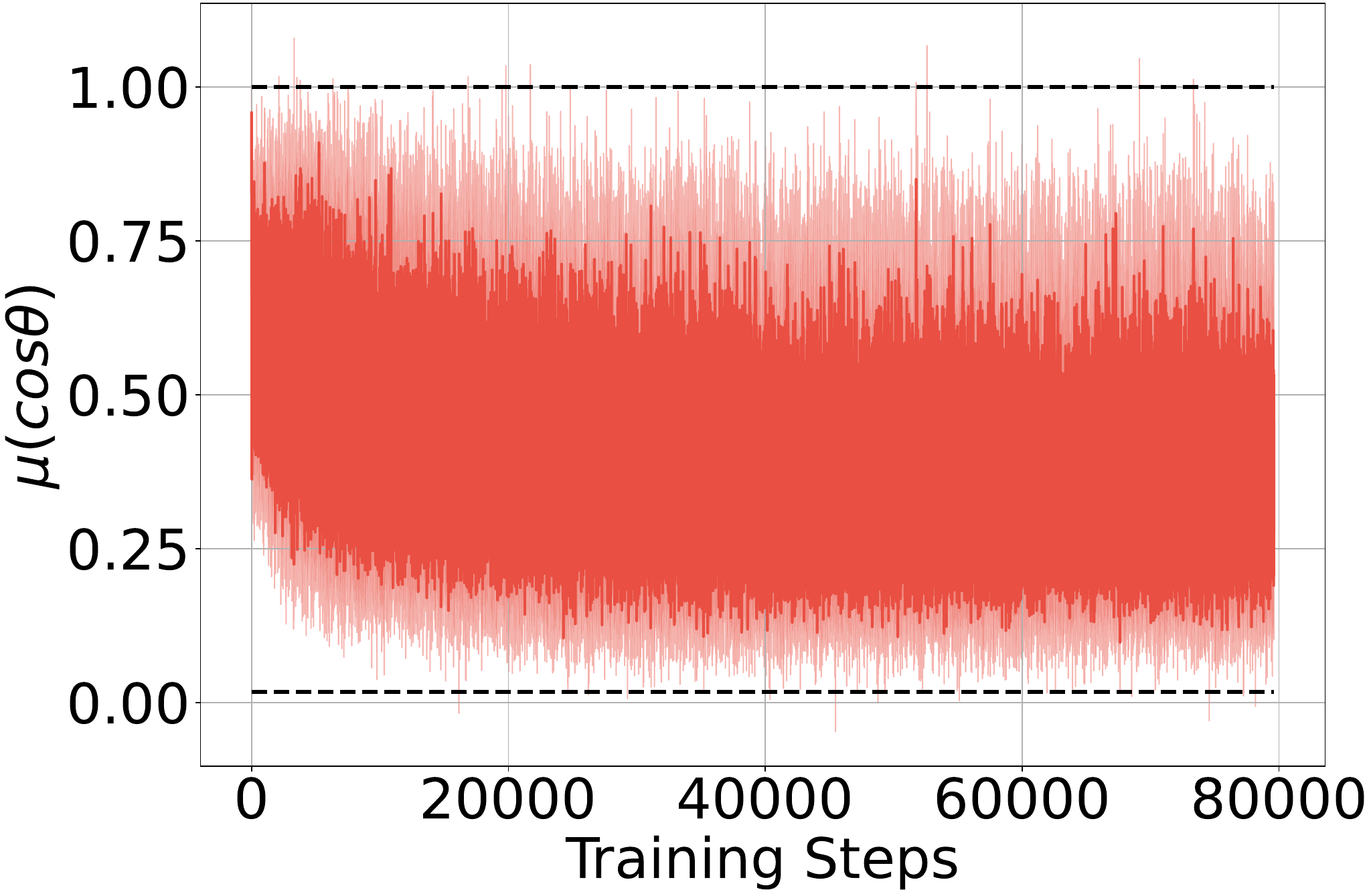}}
    \subfloat[CityScapes]{\includegraphics[width = 0.24\textwidth]{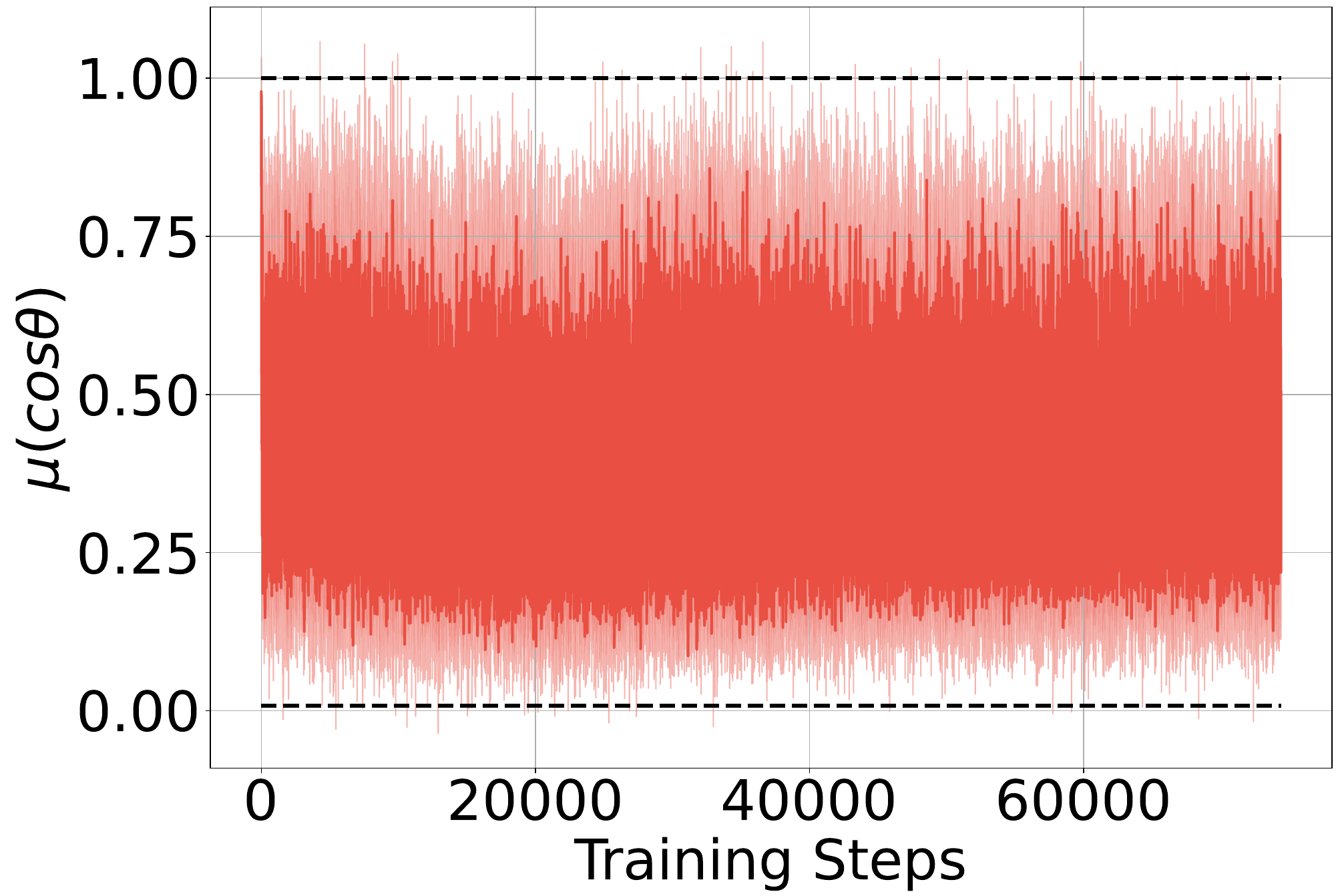}}
    \caption{Sensitivity examinations of $\mu$.}
    \label{fig:mu}
\end{figure}

\begin{table*}[h]
\centering
\caption{$\mu$ alternatives comparison on CityScapes. The GPU time reported here refers to the cumulative duration of the entire run.}
\label{table:alter}
\begin{tabular}{lllllll}
\hline
\multicolumn{1}{c}{\multirow{4}{*}{Method}} &
  \multicolumn{2}{c}{Segmentation} &
  \multicolumn{2}{c}{Depth} &   
  \multicolumn{1}{c}{\multirow{4}{*}{$\Delta$ m\% $\downarrow$}} & 
  \multicolumn{1}{c}{\multirow{4}{*}{GPU Time (h) $\downarrow$}} \\ \cline{2-5}
\multicolumn{1}{c}{} &
  \multicolumn{2}{c}{\multirow{2}{*}{(Higher Better)}} &
  \multicolumn{2}{c}{\multirow{2}{*}{(Lower Better)}} &
  
  \multicolumn{2}{c}{} &
  \multicolumn{1}{c}{} 
\\ \cline{2-5}
\multicolumn{1}{c}{} &
  \multicolumn{1}{c}{mIoU} &
  Pix. Acc. &
  \multicolumn{1}{c}{Abs Err} &
  Rel Err &
  \multicolumn{1}{c}{} \\ \hline
CAGrad &
  \multicolumn{1}{c}{75.16}  &
                     93.48   &
  \multicolumn{1}{c}{0.0141} &
                     37.60  &
                     11.58   &
                     13.83
   \\ \hline
DC &
  \multicolumn{1}{c}{74.20}  &
                     93.36   &
  \multicolumn{1}{c}{0.0132} &
                     34.27  &
                     7.23   &
                     \cellcolor{mygrey}16.90
   \\ 
LM &
  \multicolumn{1}{c}{\cellcolor{mygrey}74.85}  &
                    \cellcolor{mygrey}93.47   &
  \multicolumn{1}{c}{0.0132} &
                     36.56  &
                     8.83     &
                     17.40
   \\ 
PM &
  \multicolumn{1}{c}{74.84}  &
                     93.39   &
  \multicolumn{1}{c}{\cellcolor{mygrey}0.0132} &
                    \cellcolor{mygrey}33.99  &
                    \cellcolor{mygrey}6.58   &
                    20.25
   \\ 
   \hline
\end{tabular}
\end{table*}
\subsection{Verification of Negative Correlation between Imbalance Ratio and \cos$\theta$} \label{subsec:corre}

To justify our choice of using \cos$\theta$ as an alternative for the imbalance ratio in implementing \texttt{IMGrad}, it is necessary to establish their negative correlation. To demonstrate this, we track the imbalance ratio and \cos$\theta$ while running CAGrad on NYUv2 and CityScapes datasets, presenting their statistical results in Figure~\ref{fig:corre}. The results indicate a positive correlation between $\rm 1 / Imb.$ and \cos$\theta$ during optimization, regardless of whether it involves two (CityScapes) or three (NYUv2) tasks. Additionally, we observe that the values of \cos$\theta$ are confined within the range of [0, 1], while the imbalance ratio can vary significantly. This discrepancy in stability makes \cos$\theta$ a more suitable choice for modulation.

\begin{figure}[h]
    \centering
    \subfloat[NYUv2]{\includegraphics[width = 0.23\textwidth]{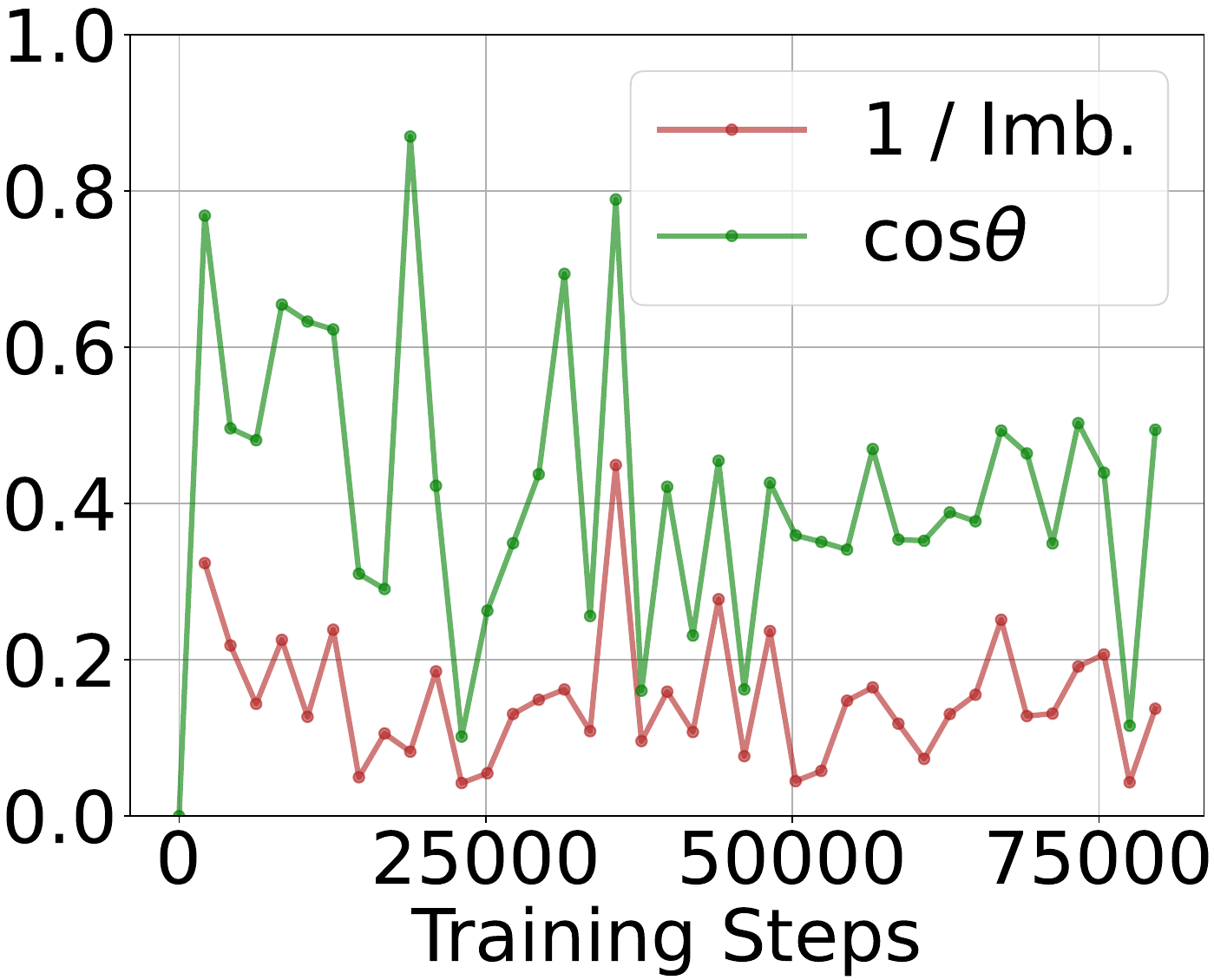}}
    \subfloat[CityScapes]{\includegraphics[width = 0.24\textwidth]{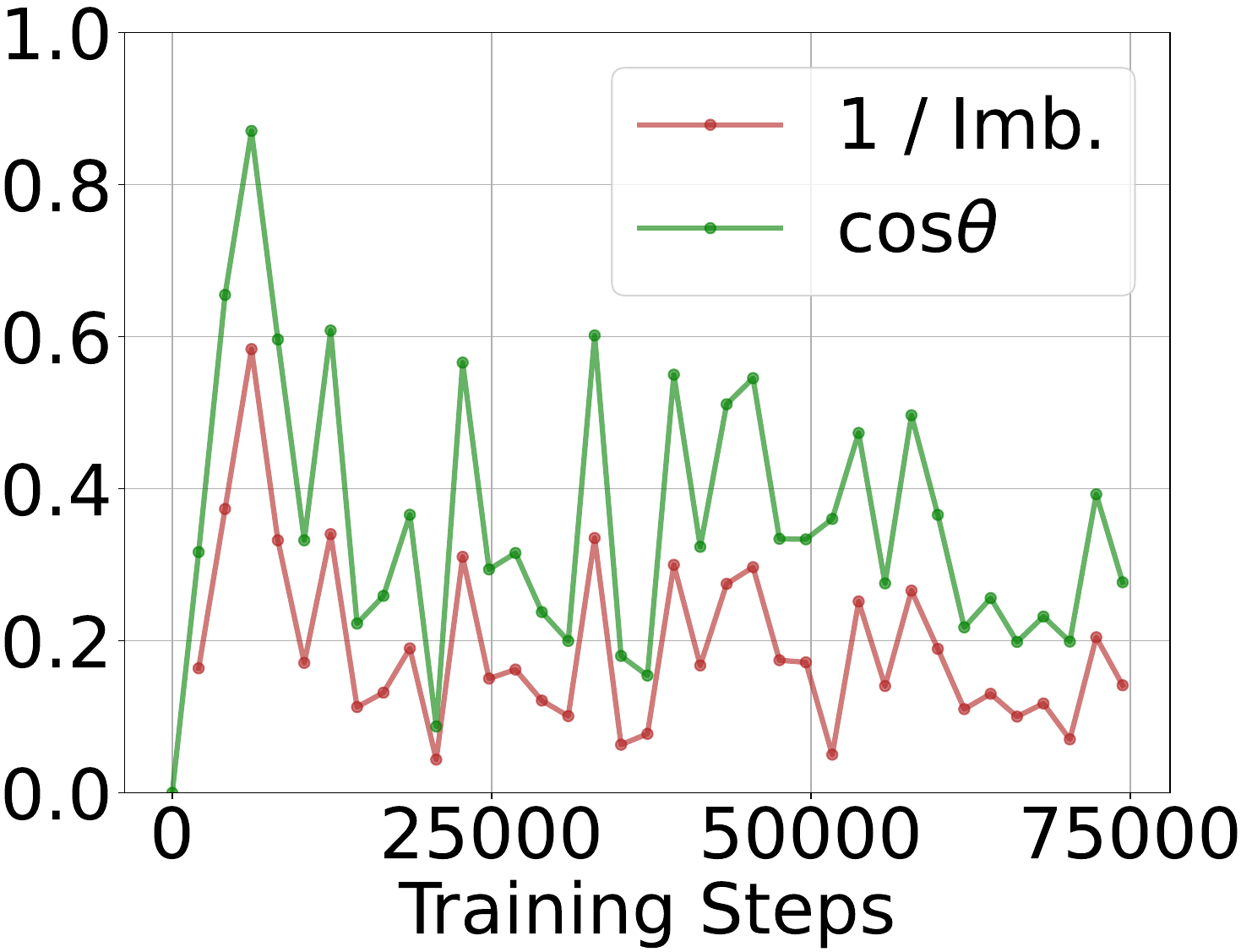}}
    \caption{Negative correlation examination between imbalance ratio and \cos$\theta$.}
    \label{fig:corre}
\end{figure}

\subsection{Alternatives of $\mu$} \label{subsec:alter}
There are several alternatives to quantify the imbalance ratio among tasks, we provide the following choices:
\begin{itemize}
    \item \textbf{\underline{Direct Calculation (DC)}}: DC is the standard operation to obtain the imbalance ratio by directly calculate the their norm ratio, i.e., $\left | \bm{g_i} \right \| / \left | \bm{g_j} \right \|$. Specifically, to control the range of $\mu$, we adopt $\mu = 1 / (1 + \log(\left | \bm{g_i} \right \| / \left | \bm{g_j} \right \|))$.
    \item \textbf{\underline{Least Mean (LM)}}: LM quantifies the imbalance ratio by calculating the cosine similarity between $\bm{g_0}$ and the individual with the smallest norm. The utilization of $\theta$ in LM is depicted in Figure~\ref{fig:mu_illu} (b).
    \item \textbf{\underline{Plumbline Mean (PM)}}: PM is just the operation that adopted in the main text, and is illustrated in Figure~\ref{fig:mu_illu} (a).  
\end{itemize}

The corresponding results are presented in Table~\ref{table:alter}, revealing that PM exhibits the most favorable overall performance. However, LM demonstrates competitiveness in the context of semantic segmentation tasks, while DC shows moderate performance. This outcome can be attributed to the wide range of imbalance ratios, which makes it challenging to achieve adaptive balancing of the decoupled objectives. The inferior performance of LM can be attributed to the variation of $\cos\theta$, which can range between -1 and 1, making it difficult to tune hyper-parameters effectively. 

\begin{figure}[h]
    \centering
    \includegraphics[width = 0.48\textwidth]{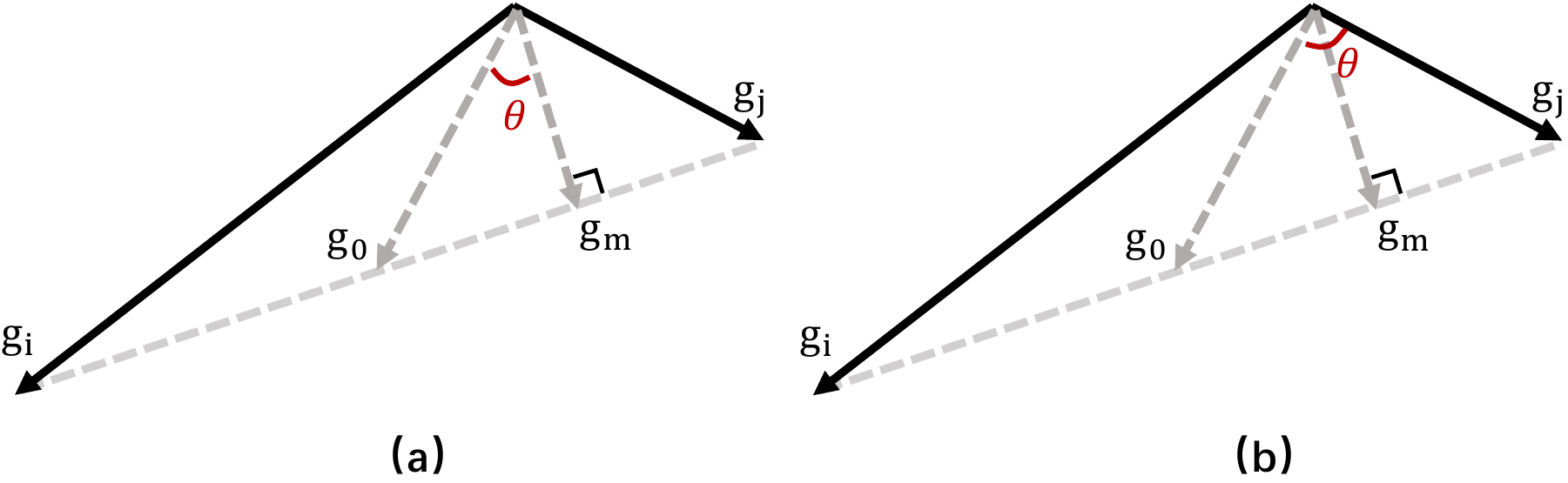}
    \caption{The illustration of different choice of $\mu$. (a) represents the PM, while represents the LM.}
    \label{fig:mu_illu} 
\end{figure}
\begin{figure}[h]
    \centering
    \includegraphics[width = 0.4\textwidth]{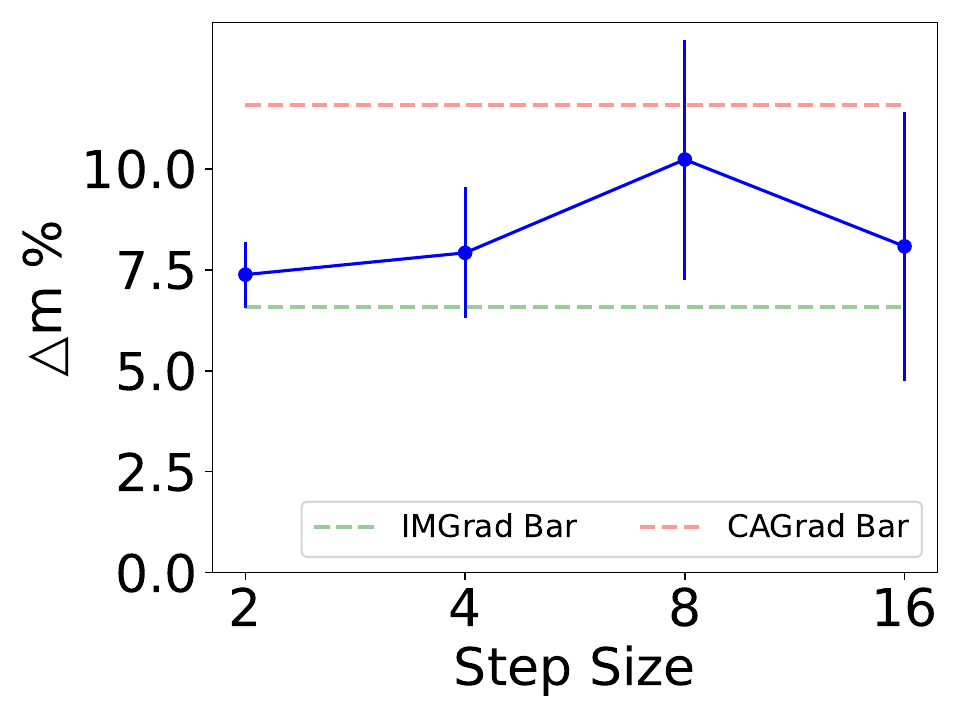}
    \caption{The impact of step size to the overall performance.}
    \label{fig:step_size}
\end{figure}

\begin{figure}[h]
    \centering
    \subfloat[MGDA]{\includegraphics[width = 0.235\textwidth]{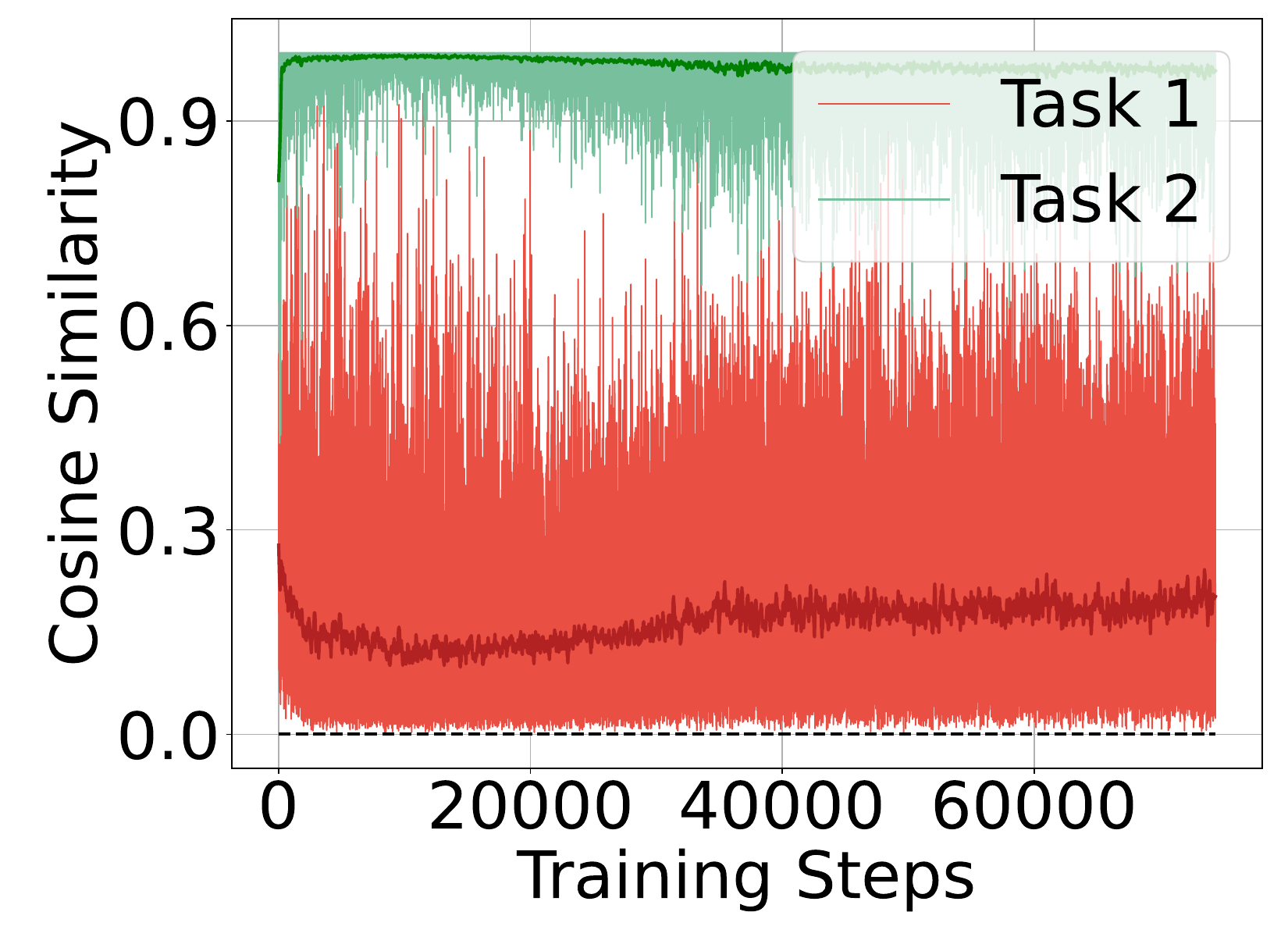}}
    \hfill
    \subfloat[GradDrop]{\includegraphics[width = 0.235\textwidth]{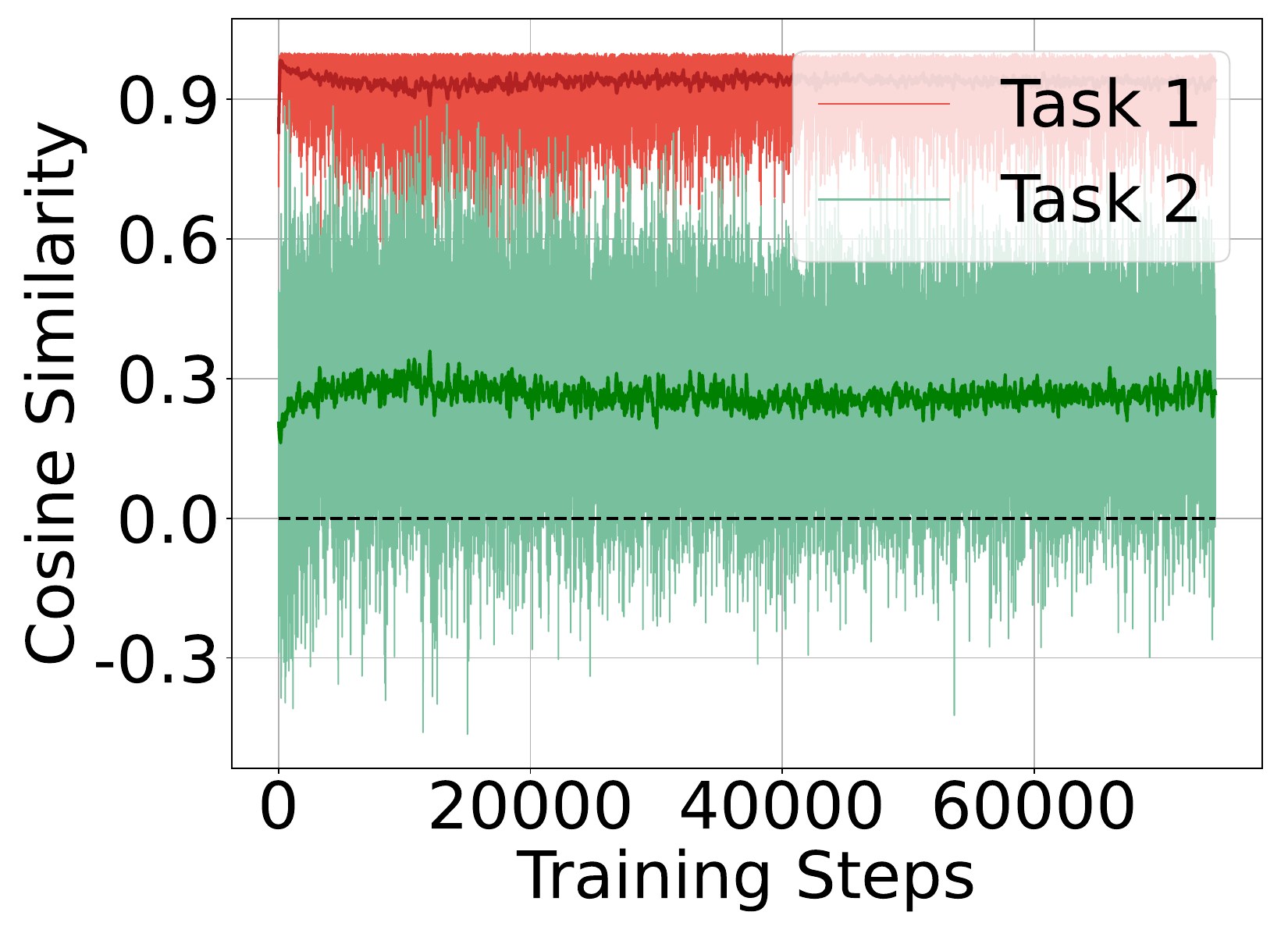}}
    \hfill 
    \subfloat[MGDA ($\Delta m\% = 17.69$)]{\includegraphics[width = 0.235\textwidth]{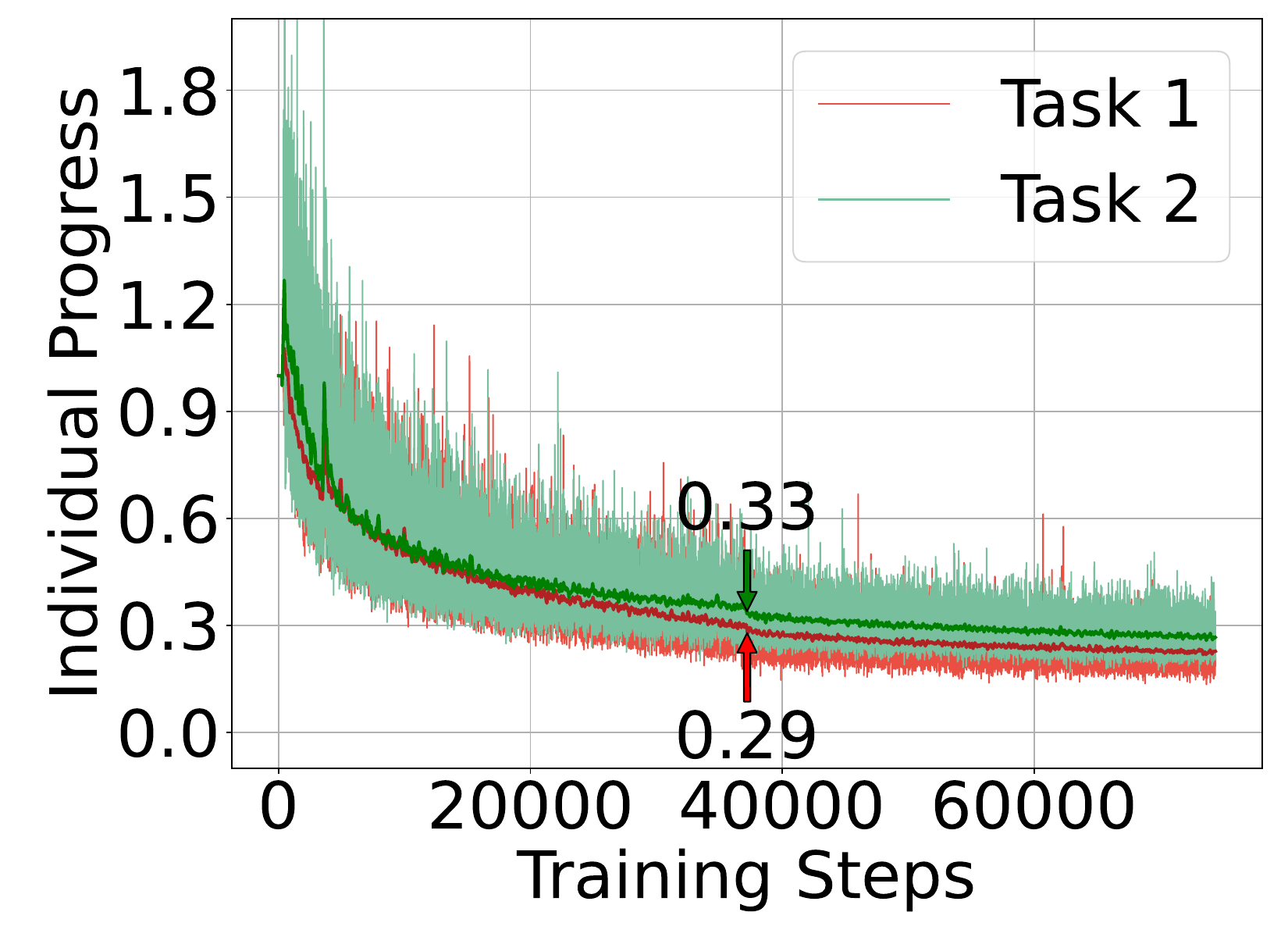}}
    \hfill
    \subfloat[GradDrop ($\Delta m\% = 21.18$)]{\includegraphics[width = 0.235\textwidth]{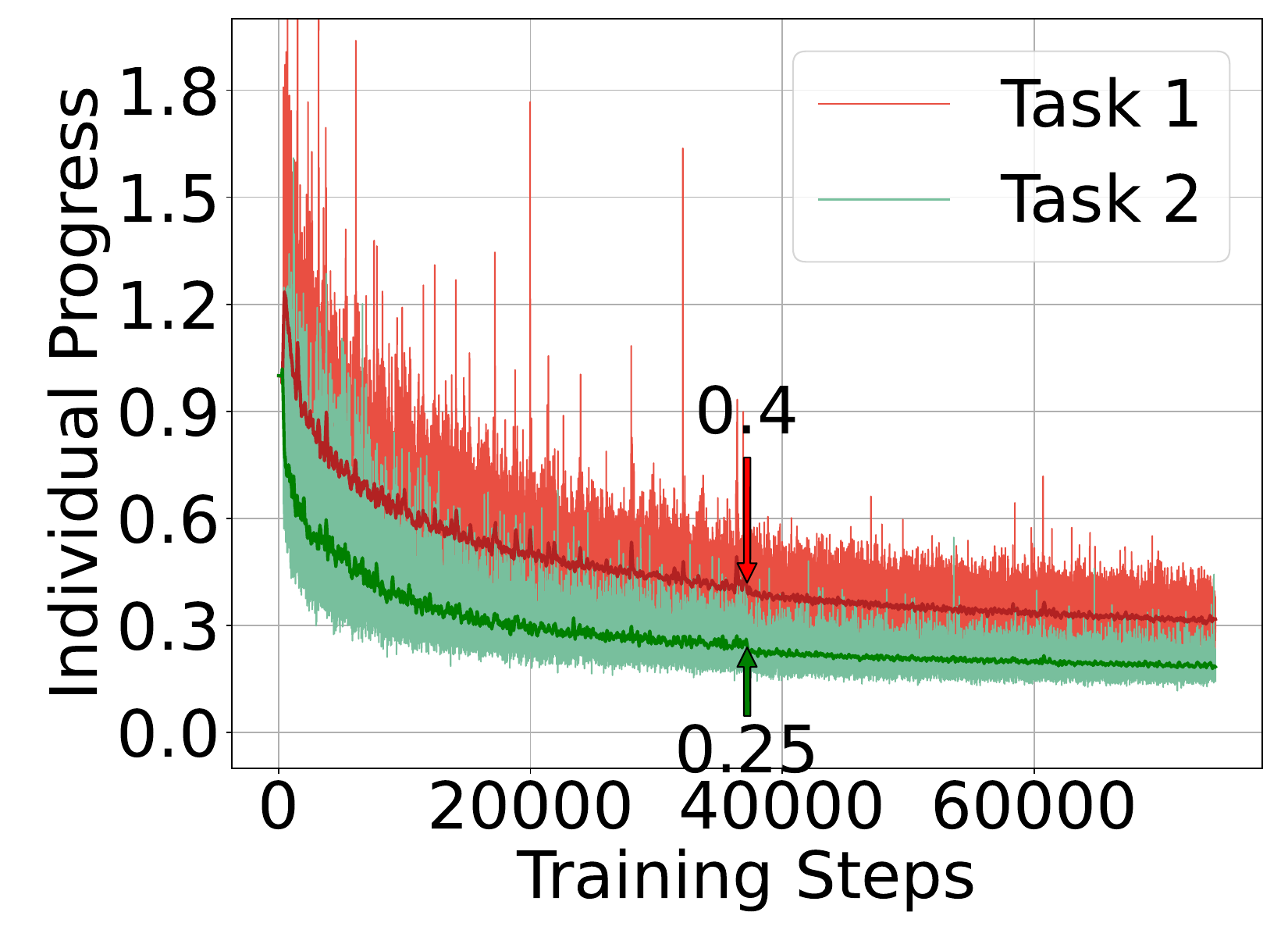}}
    \caption{Pareto failures and individual progress examination. The top row presents the gradient similarity, while the bottom row illustrates the individual progress for MGDA and GradDrop, respectively.}
    \label{fig:app_sim_pro}
\end{figure}
\subsection{Speedup Analysis} \label{app_sec:speed}
While the primary focus of this paper does not revolve around achieving speedup, it is worth mentioning that general speedup approaches, such as sampling a subset of tasks~\citep{liu2021conflict}, utilizing feature-level gradients~\citep{sener2018multi}, and updating $\bm{\omega}$ periodically instead of every iteration~\citep{navon2022multi}, continue to be applicable to methods augmented with \texttt{IMGrad}.
\begin{figure*}[h]
    \centering
    \subfloat[Sim: CAGrad]{\includegraphics[width = 0.235\textwidth]{figs/sim_pro/cagrad_new_cityscapes_sim.pdf}}
    \hfill
    \subfloat[Pro: CAGrad]{\includegraphics[width = 0.235\textwidth]{figs/sim_pro/cagrad_new_cityscapes_pro.pdf}}
    \hfill 
    \subfloat[Sim: Nash-MTL]{\includegraphics[width = 0.235\textwidth]{figs/sim_pro/nash_new_cityscapes_sim.pdf}}
    \hfill
    \subfloat[Pro: Nash-MTL]{\includegraphics[width = 0.235\textwidth]{figs/sim_pro/nash_new_cityscapes_pro.pdf}}
    \hfill  \\
    \subfloat[Sim: CAGrad+\texttt{IMGrad}]{\includegraphics[width = 0.235\textwidth]{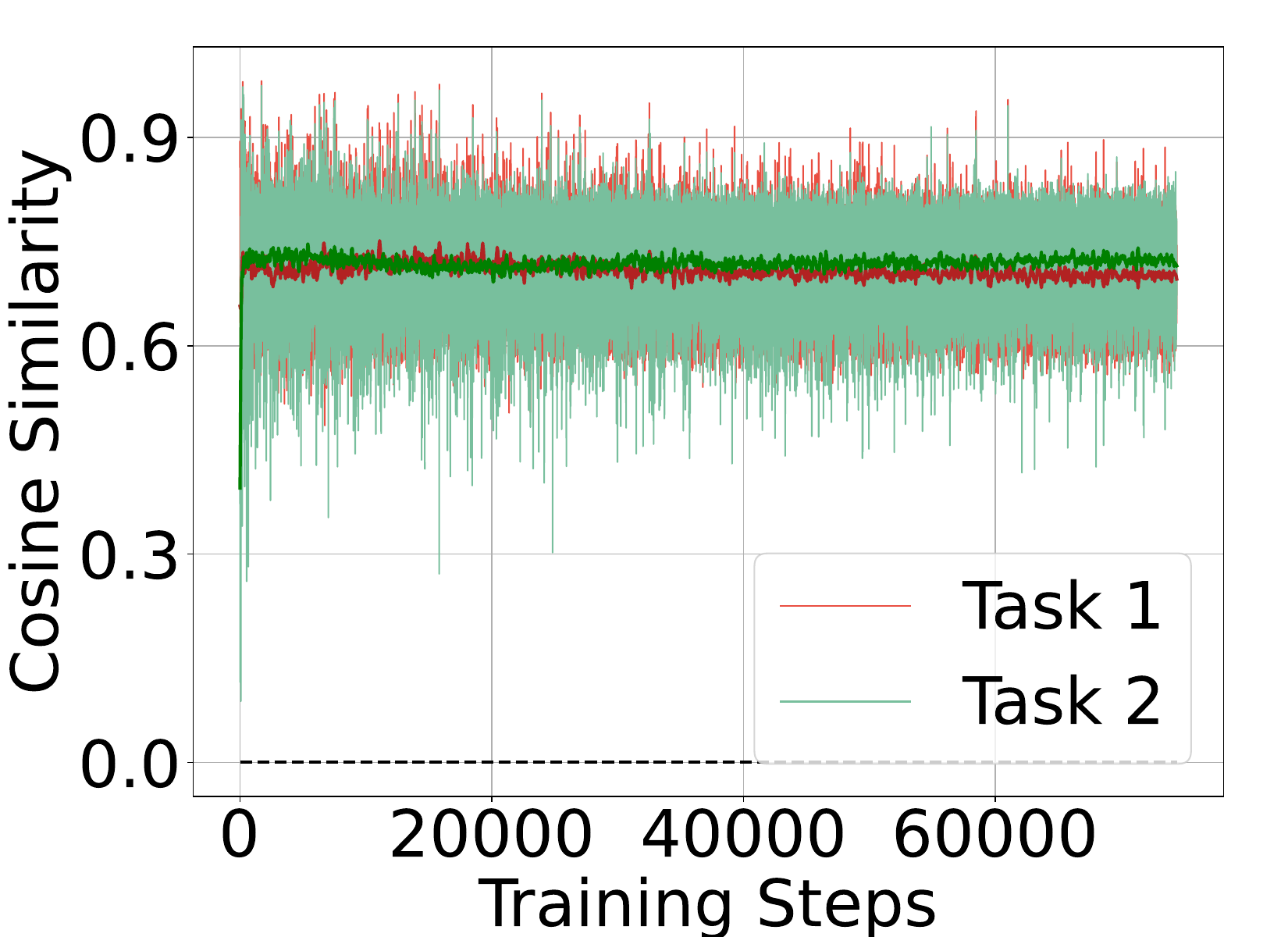}}
    \hfill
    \subfloat[Pro: CAGrad+\texttt{IMGrad}]{\includegraphics[width = 0.235\textwidth]{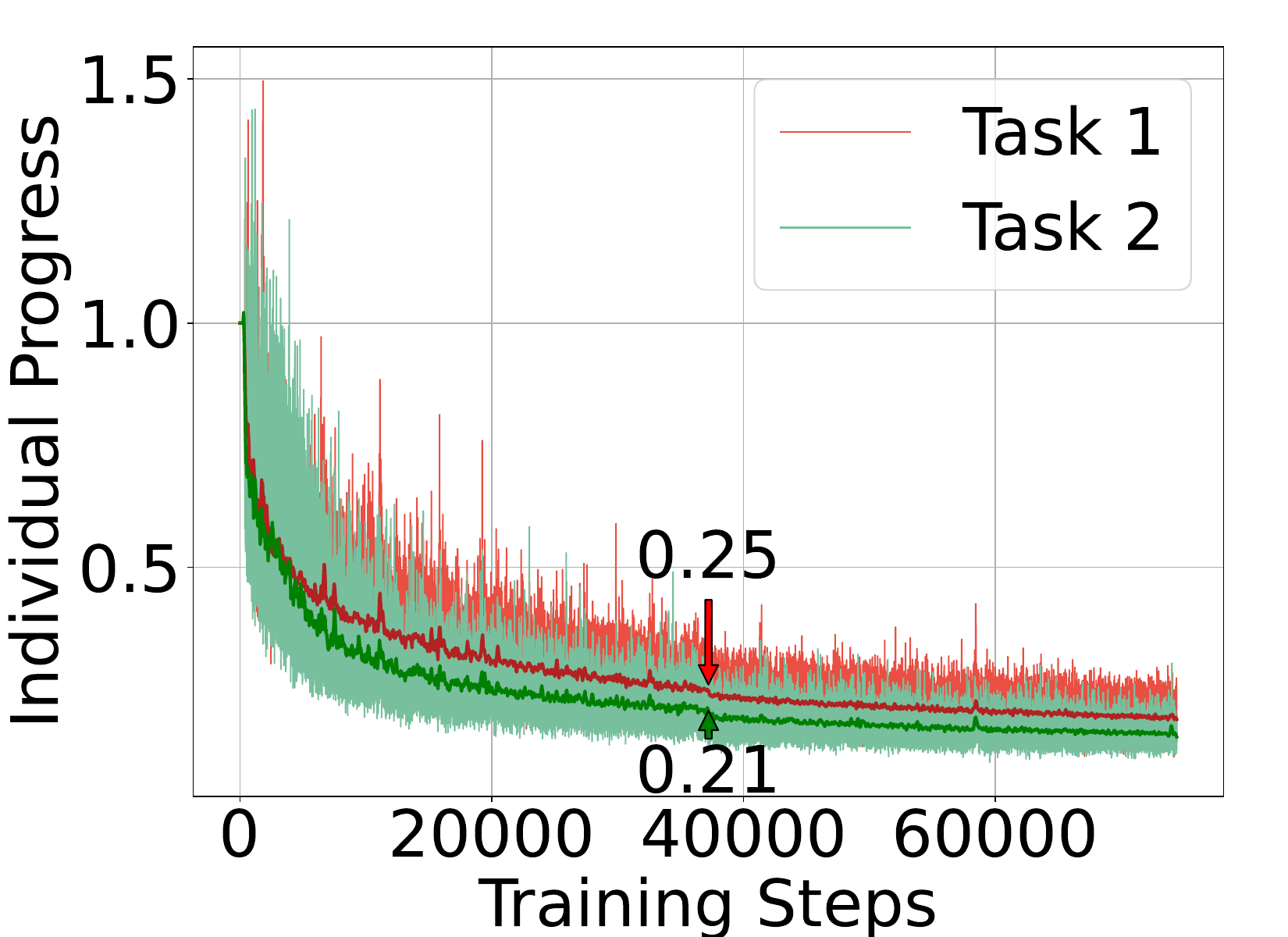}}
    \hfill
    \subfloat[Sim: Nash-MTL+\texttt{IMGrad}]{\includegraphics[width = 0.235\textwidth]{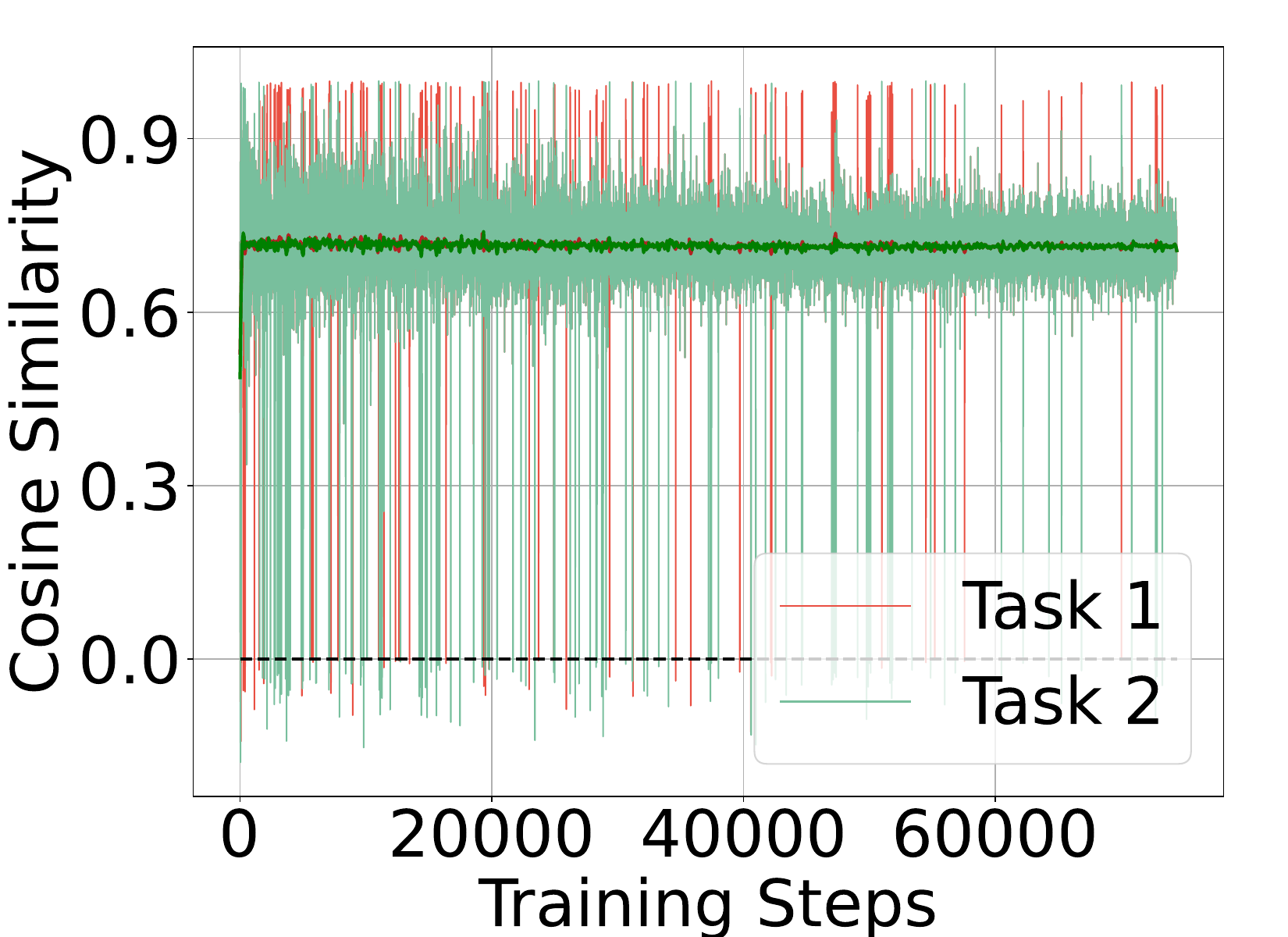}}
    \hfill
    \subfloat[Pro: Nash-MTL+\texttt{IMGrad}]{\includegraphics[width = 0.235\textwidth]{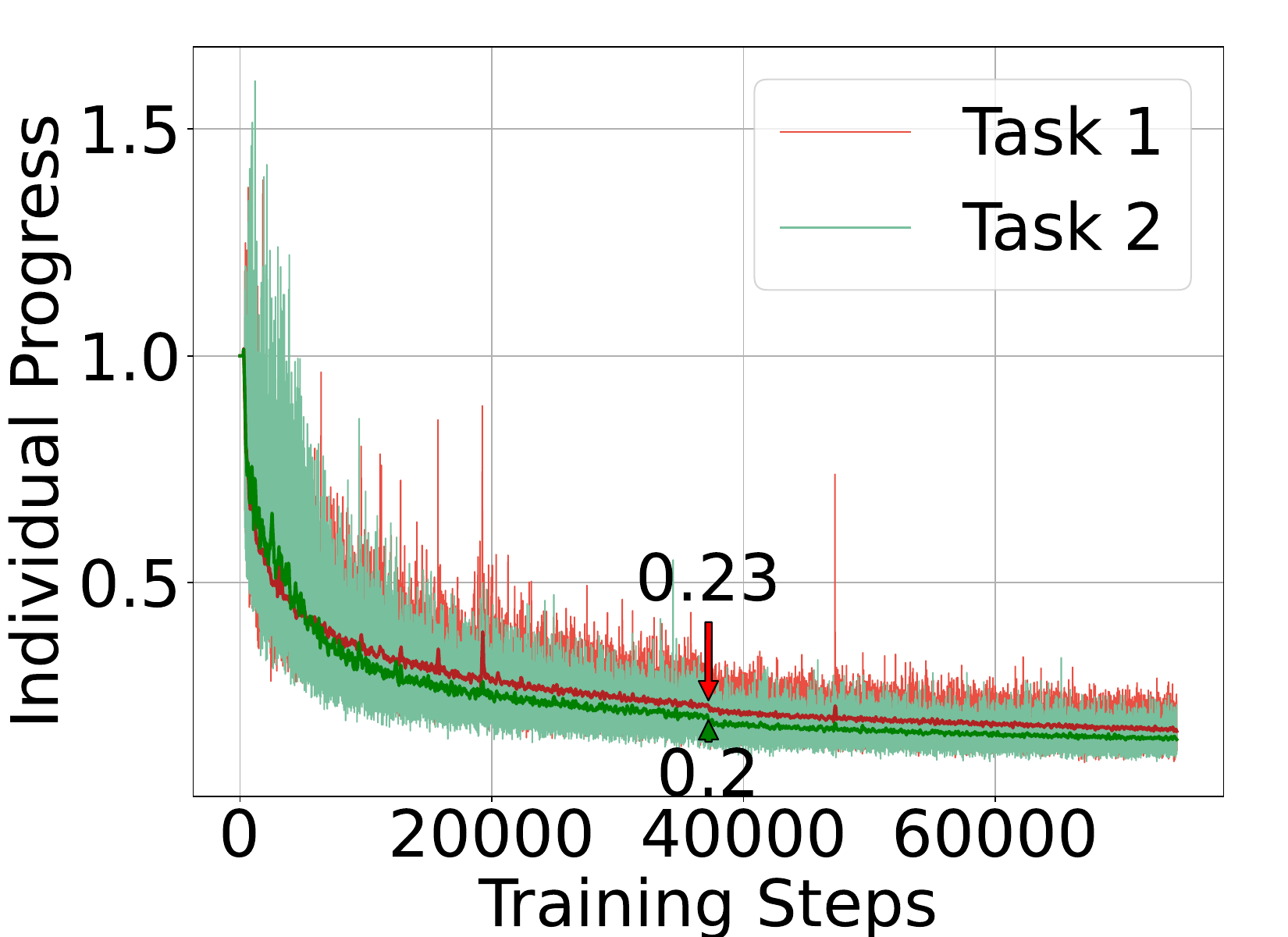}}
    \caption{Pareto failures and individual progress examinations. `Sim' and `Pro' are short for similarity and progress, respectively.}
    \label{fig:sim_pro_imgrad}
\end{figure*}

On the other hand, based on the findings presented in Table~\ref{table:alter}, we have introduced two additional computation-efficient alternatives for $\mu$. These alternatives offer reduced GPU time requirements while still maintaining an acceptable level of performance sacrifice.

We have also implemented a commonly used practical speedup approach, i.e., updating the weights less frequently. In this regard, we have chosen the update step sizes from the set [2, 4, 8, 16], and the corresponding results are presented in Figure~\ref{fig:step_size}. Generally, as the step size increases, the overall performance ($\Delta m\%$) tends to deteriorate, although it still demonstrates improvements compared to the vanilla CAGrad. Interestingly, the performance appears to be better when the step size is set to 16 compared to 4 and 8, suggesting that frequent weight updates may not be necessary. However, further exploration of this aspect is left for future research.

\section{Visualizations}   \label{sec:visual}
\subsection{Synthetic Examples} \label{app_sec:syn}
In order to further illustrate the effectiveness of \texttt{IMGrad} in different imbalance scenarios, we present additional comparative synthetic examples in Table~\ref{table:app_syn}. The results reveal that LS, PCGrad, and CAGrad exhibit failure cases across nearly all scenarios, while IMTL~\citep{liu2021towards} and Nash-MTL fail to reach the global optimum, although they do converge to the Pareto stationary. In contrast, \texttt{IMGrad} consistently achieves the optimal solution in all scenarios, demonstrating its robust and stable performance. This underscores the significance of imbalance-sensitivity in enhancing optimization outcomes.
\begin{table*}[h]
\centering
\caption{Comparison of MTL optimization methods on synthetic two-task benchmark. $(a_1, a_2)$ denotes $a_1 * \mathcal{L}_1 + a_2 * \mathcal{L}_2$.}
\label{table:app_syn}
\footnotesize
\setlength\tabcolsep{1pt}
\begin{tabular}{lllllll}
\toprule
  & \multicolumn{1}{c}{LS}  &  \multicolumn{1}{c}{PCGrad}  &  \multicolumn{1}{c}{CAGrad}  &  \multicolumn{1}{c}{IMTL}  &  \multicolumn{1}{c}{Nash-MTL} & \multicolumn{1}{c}{\texttt{IMGrad}}  \\ \midrule
$(0.1, 0.9)$ &  
    \begin{minipage}[b]{0.30\columnwidth}
		\raisebox{-.5\height}{\includegraphics[width=\linewidth]{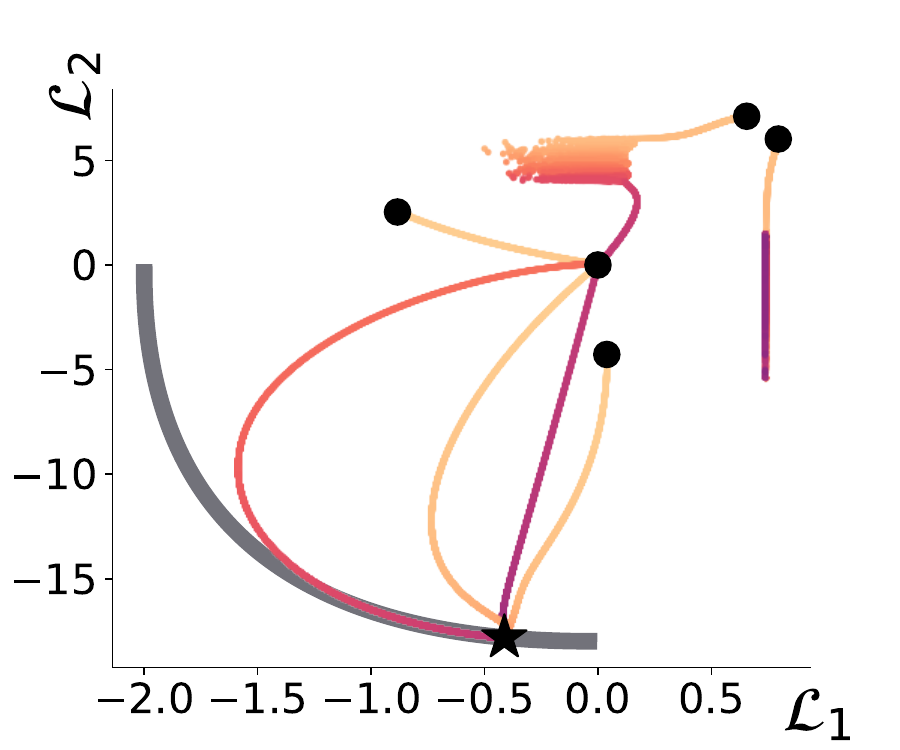}}
	\end{minipage} &
     \begin{minipage}[b]{0.30\columnwidth}
		\raisebox{-.5\height}{\includegraphics[width=\linewidth]{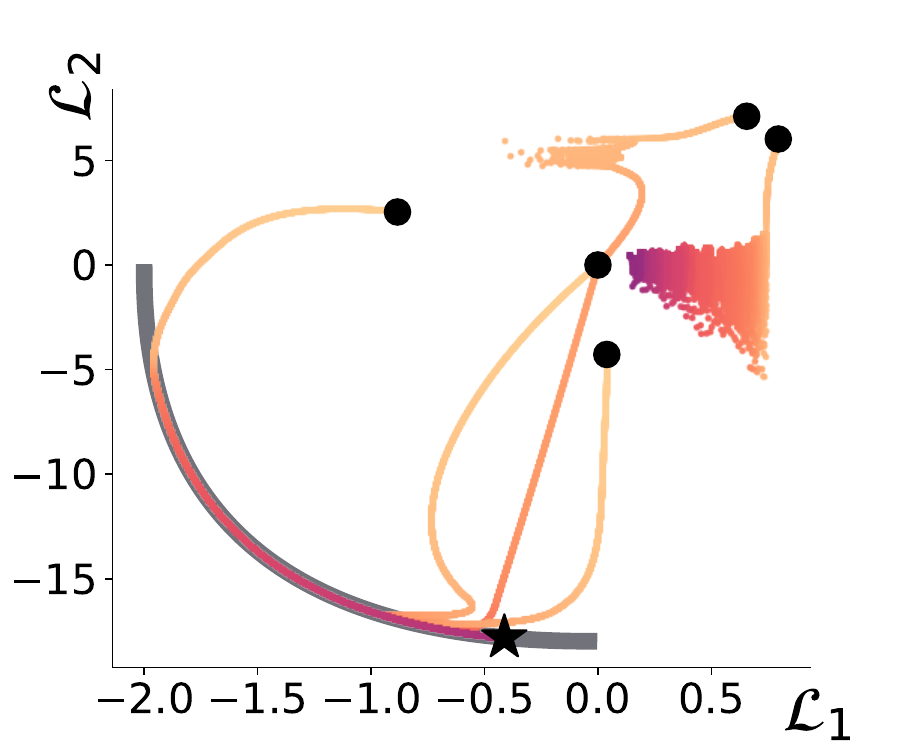}}
	\end{minipage} &
    \begin{minipage}[b]{0.30\columnwidth}
		\raisebox{-.5\height}{\includegraphics[width=\linewidth]{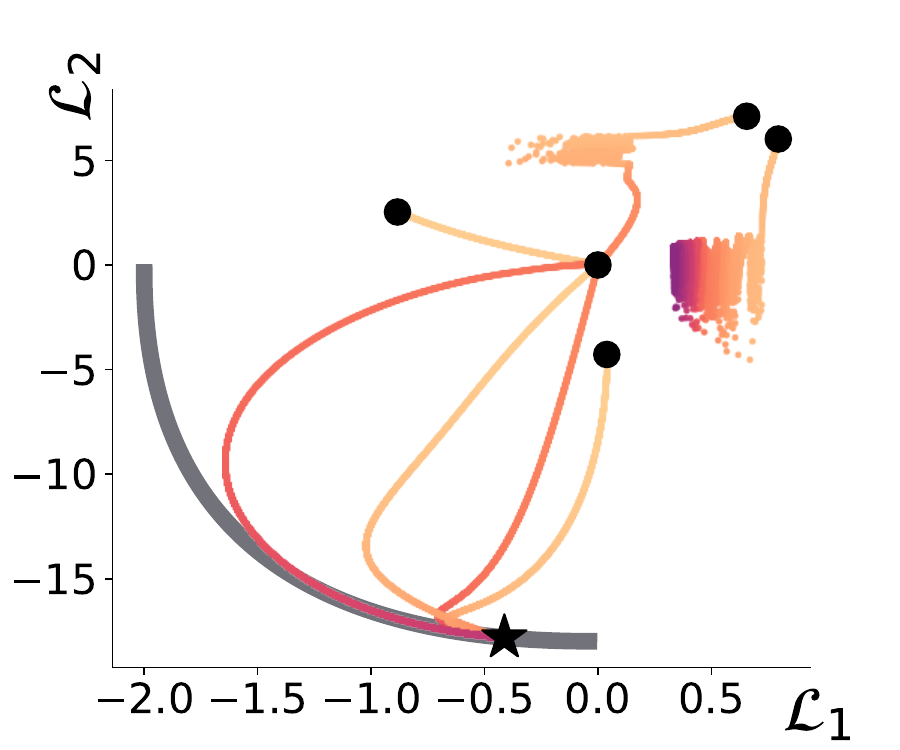}}
	\end{minipage} &  
    \begin{minipage}[b]{0.30\columnwidth}
		\raisebox{-.5\height}{\includegraphics[width=\linewidth]{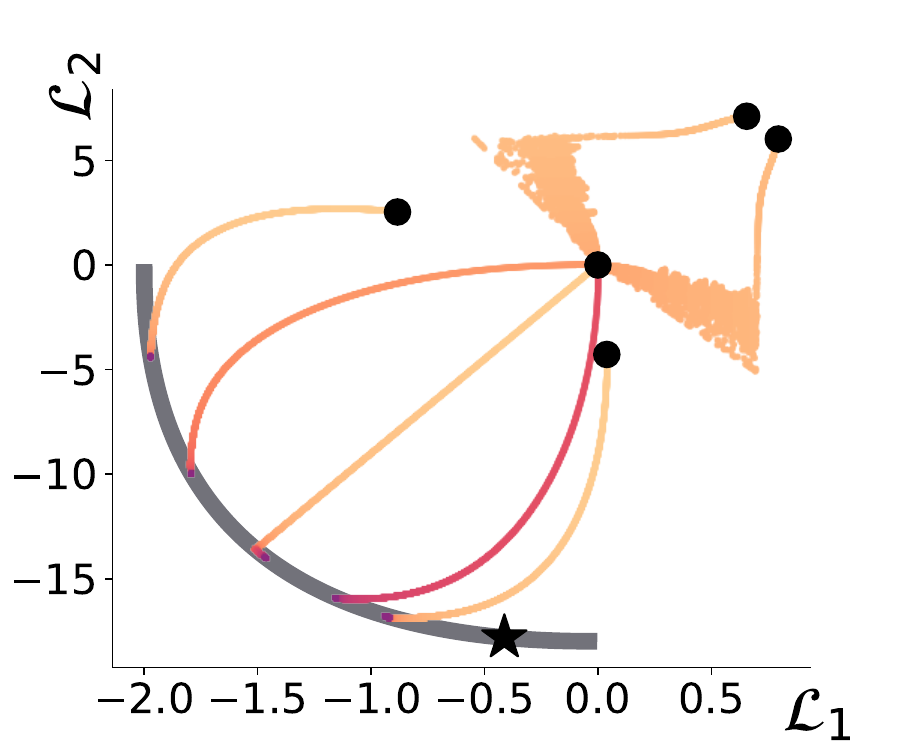}}
	\end{minipage} &
    \begin{minipage}[b]{0.30\columnwidth}
		\raisebox{-.5\height}{\includegraphics[width=\linewidth]{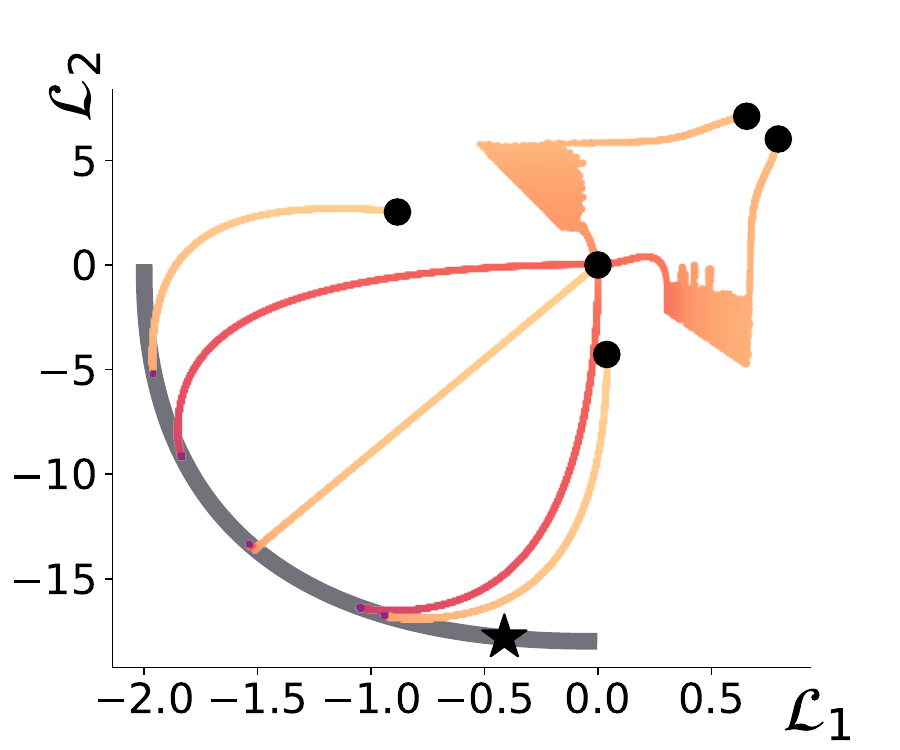}}
	\end{minipage} &
    \begin{minipage}[b]{0.30\columnwidth}
		\raisebox{-.5\height}{\includegraphics[width=\linewidth]{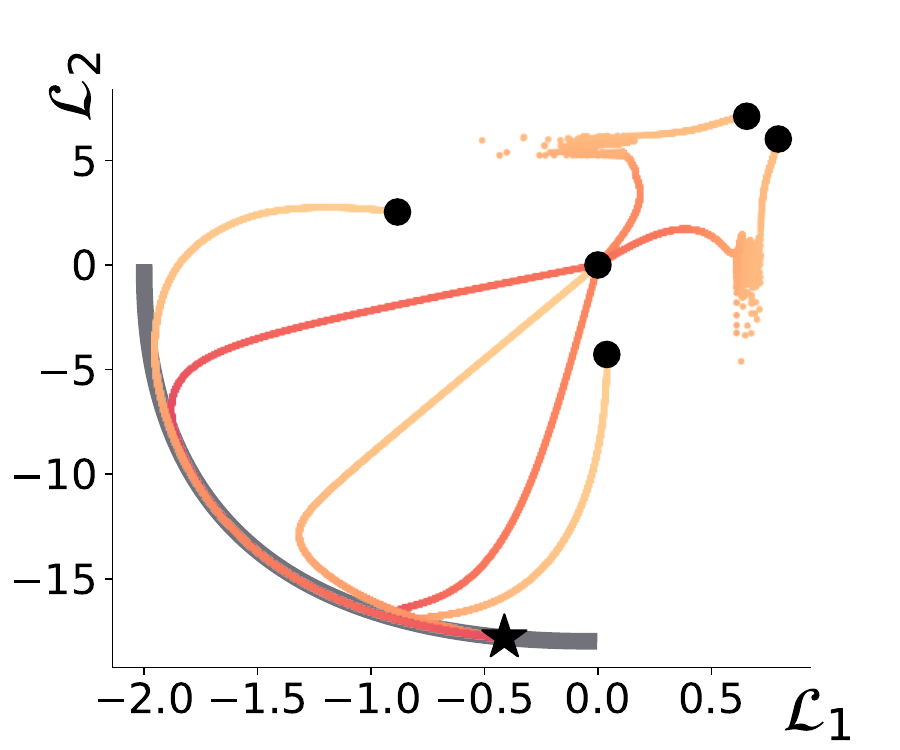}}
	\end{minipage} 
   \\ 

$(0.3, 0.7)$ &  
    \begin{minipage}[b]{0.30\columnwidth}
		\raisebox{-.5\height}{\includegraphics[width=\linewidth]{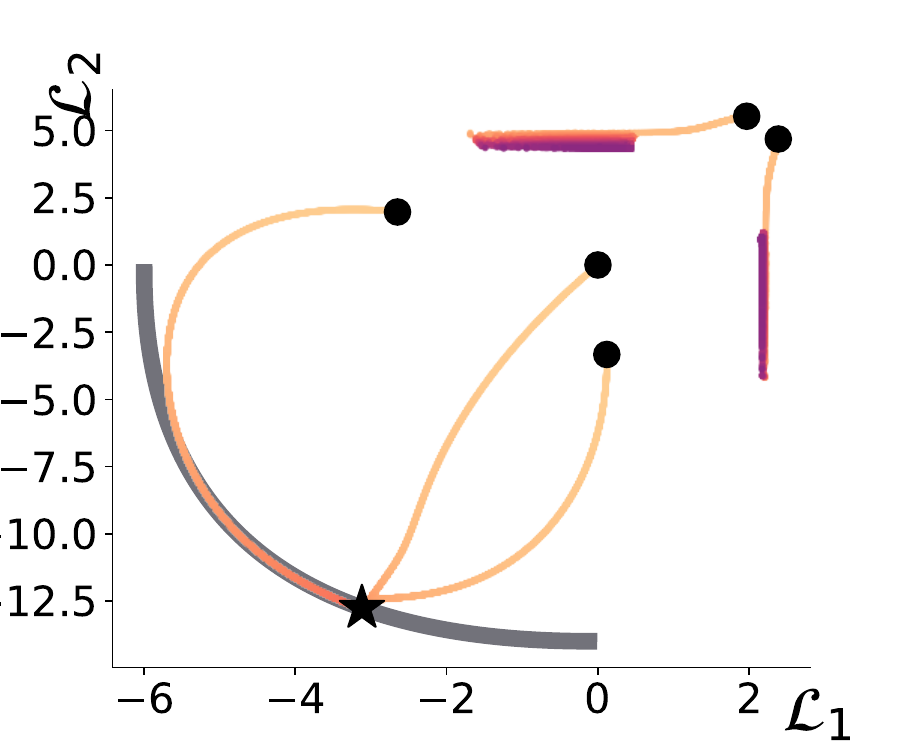}}
	\end{minipage} &
     \begin{minipage}[b]{0.30\columnwidth}
		\raisebox{-.5\height}{\includegraphics[width=\linewidth]{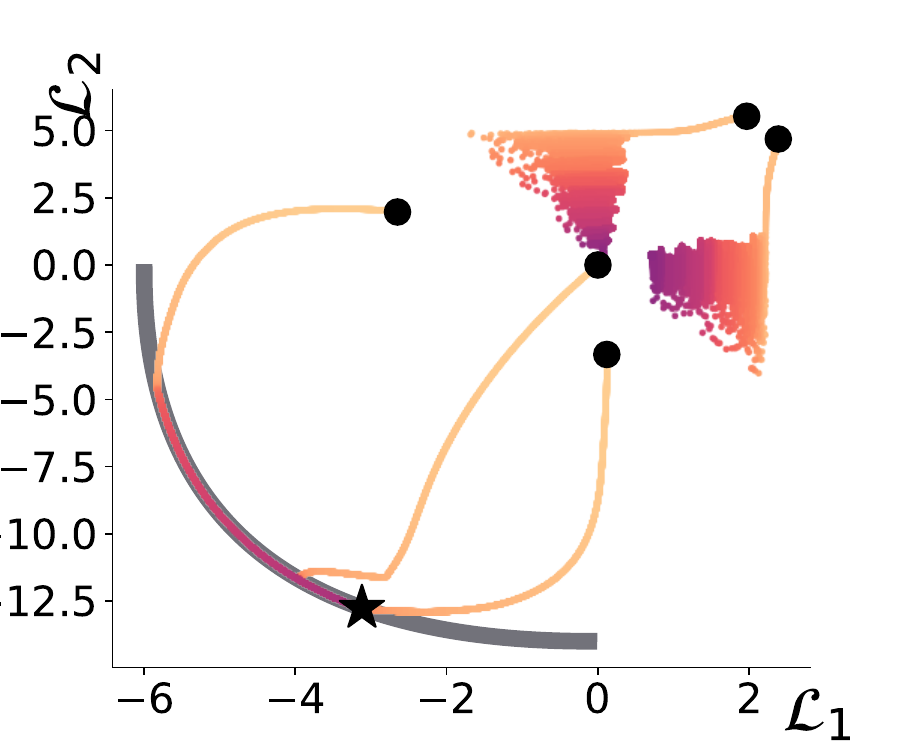}}
	\end{minipage} &
    \begin{minipage}[b]{0.30\columnwidth}
		\raisebox{-.5\height}{\includegraphics[width=\linewidth]{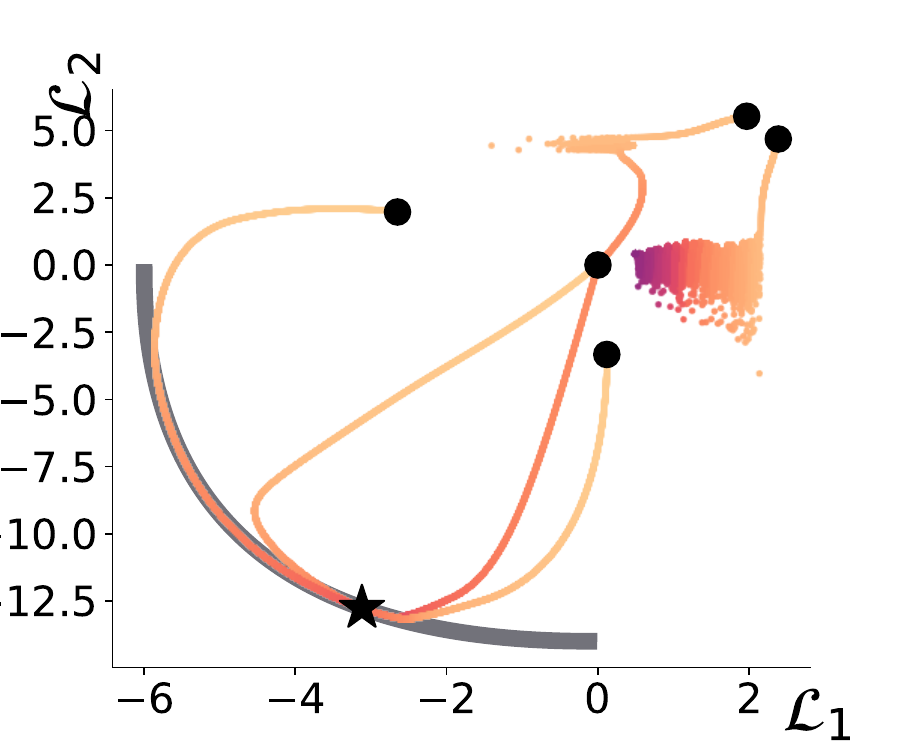}}
	\end{minipage} &  
    \begin{minipage}[b]{0.30\columnwidth}
		\raisebox{-.5\height}{\includegraphics[width=\linewidth]{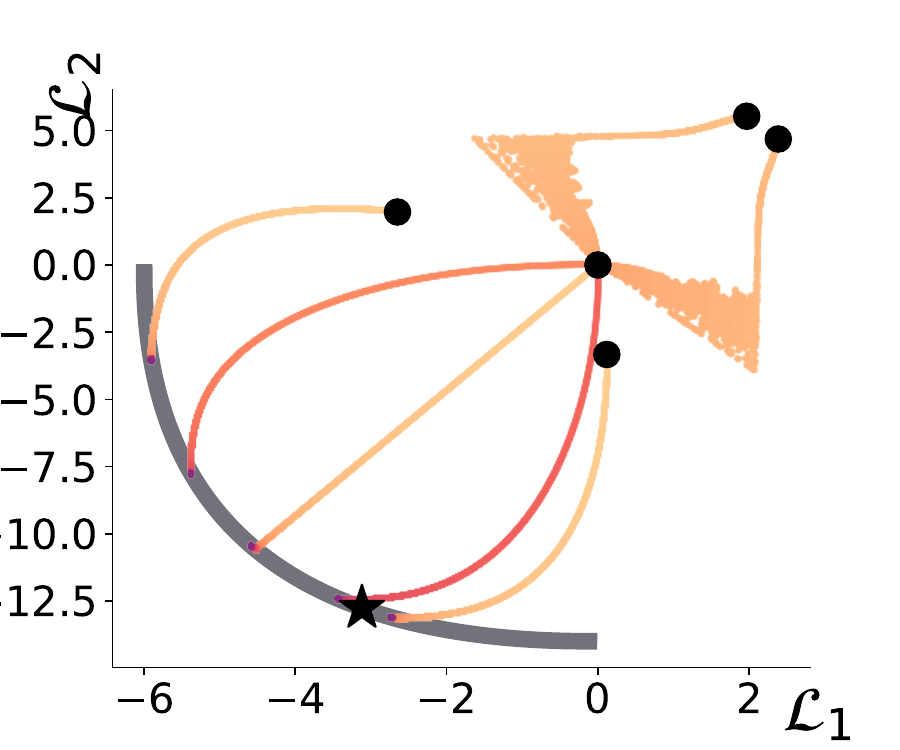}}
	\end{minipage} &
    \begin{minipage}[b]{0.30\columnwidth}
		\raisebox{-.5\height}{\includegraphics[width=\linewidth]{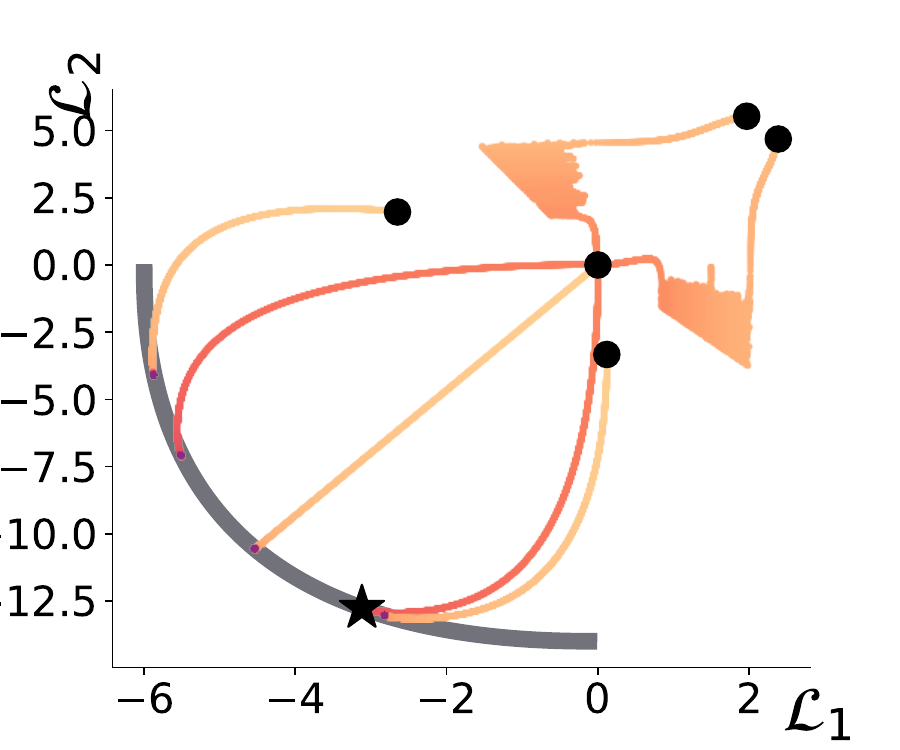}}
	\end{minipage} &
    \begin{minipage}[b]{0.30\columnwidth}
		\raisebox{-.5\height}{\includegraphics[width=\linewidth]{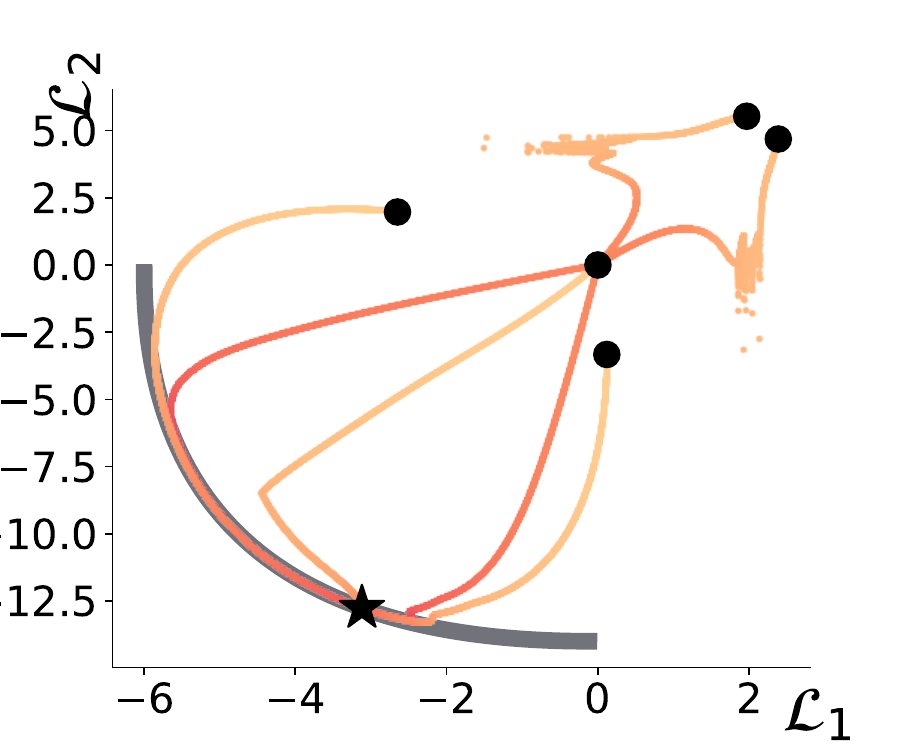}}
	\end{minipage} 
   \\
   
$(0.5, 0.5)$ &  
    \begin{minipage}[b]{0.30\columnwidth}
		\raisebox{-.5\height}{\includegraphics[width=\linewidth]{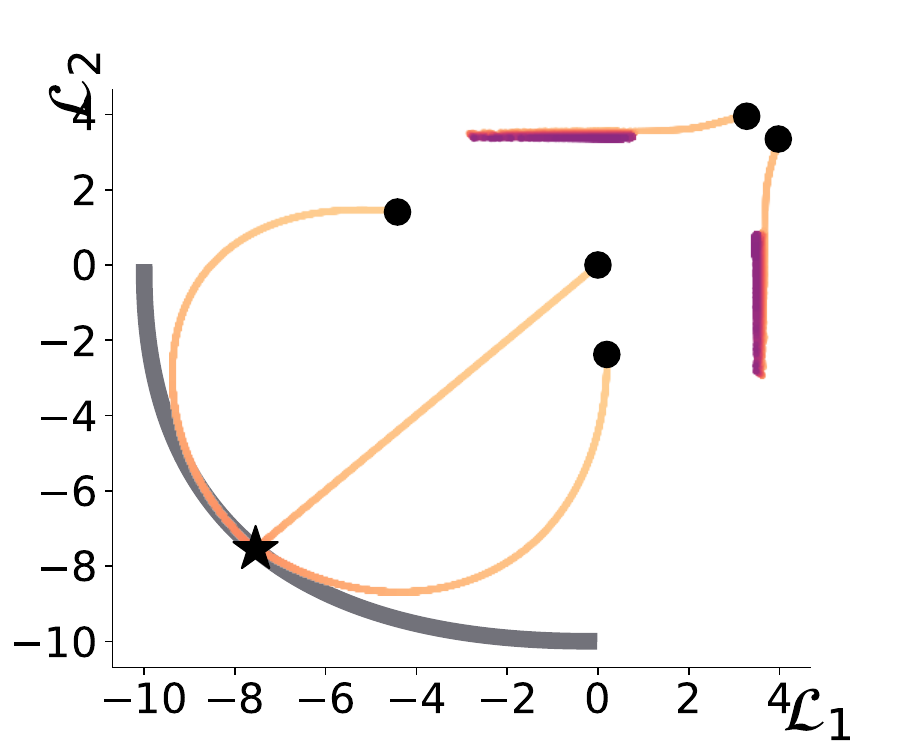}}
	\end{minipage} &
     \begin{minipage}[b]{0.30\columnwidth}
		\raisebox{-.5\height}{\includegraphics[width=\linewidth]{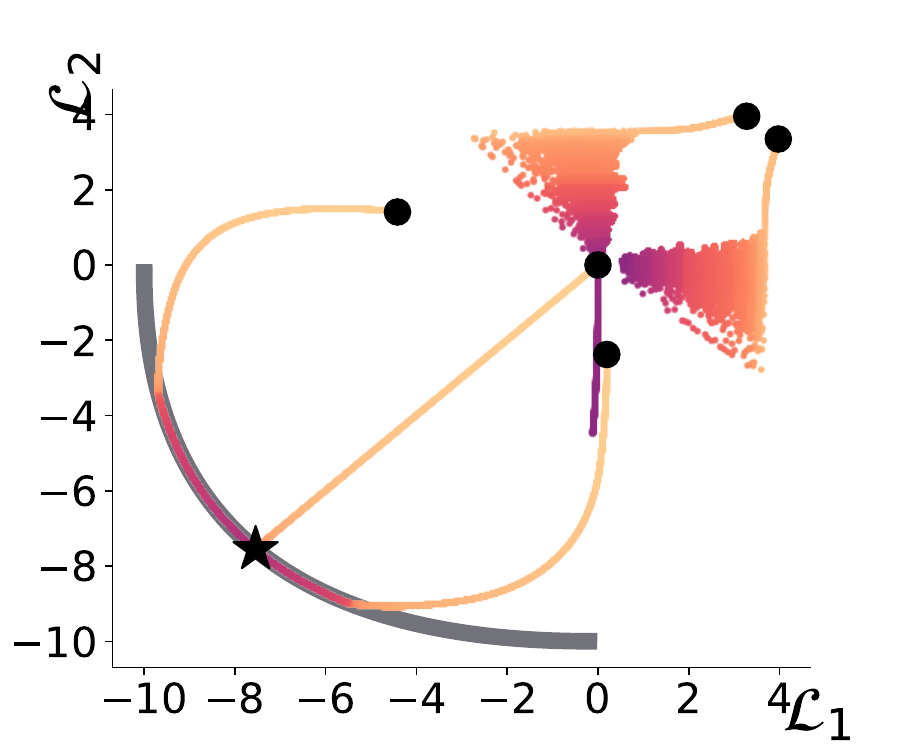}}
	\end{minipage} &
    \begin{minipage}[b]{0.30\columnwidth}
		\raisebox{-.5\height}{\includegraphics[width=\linewidth]{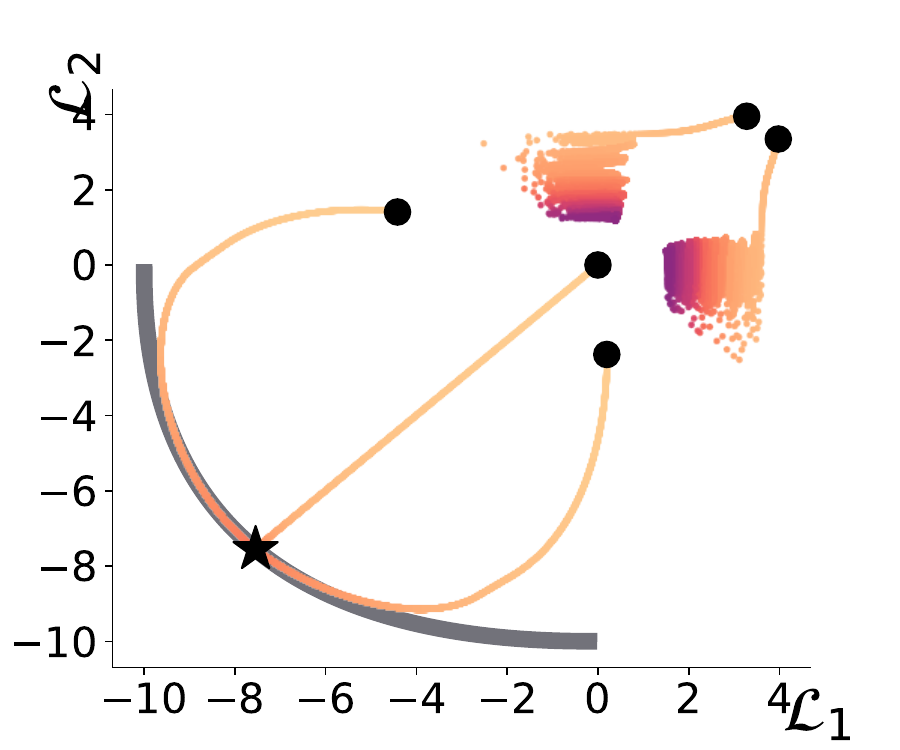}}
	\end{minipage} &  
    \begin{minipage}[b]{0.30\columnwidth}
		\raisebox{-.5\height}{\includegraphics[width=\linewidth]{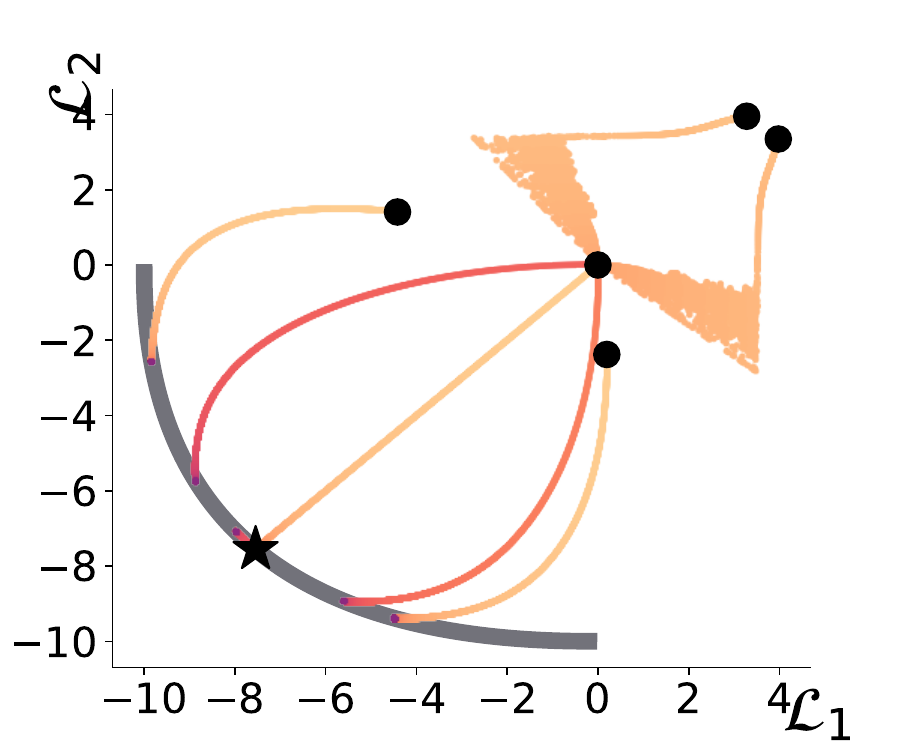}}
	\end{minipage} &
    \begin{minipage}[b]{0.30\columnwidth}
		\raisebox{-.5\height}{\includegraphics[width=\linewidth]{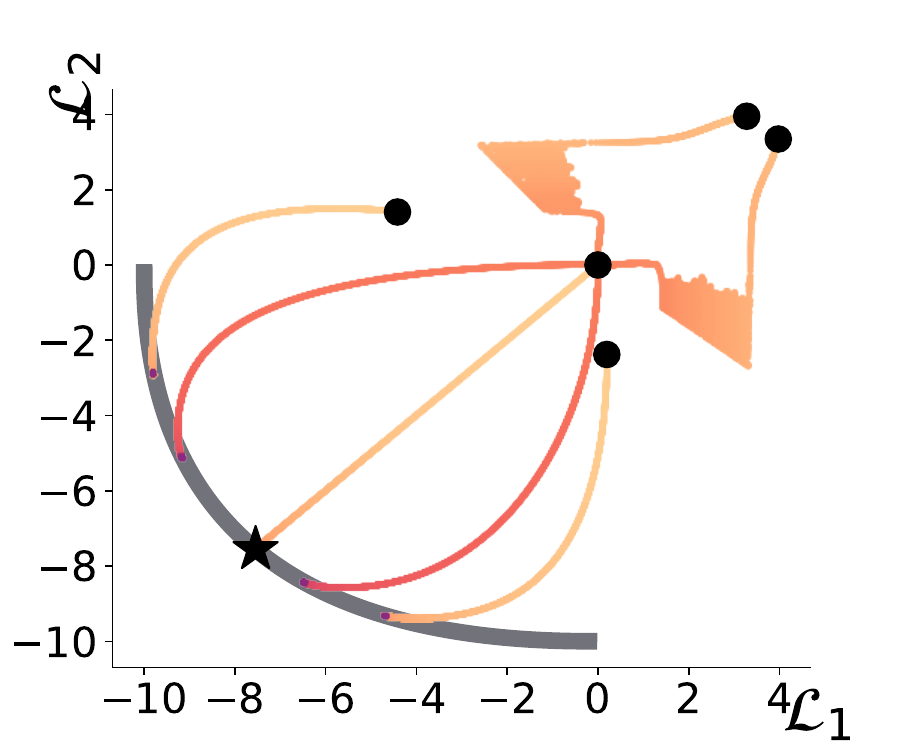}}
	\end{minipage} &
    \begin{minipage}[b]{0.30\columnwidth}
		\raisebox{-.5\height}{\includegraphics[width=\linewidth]{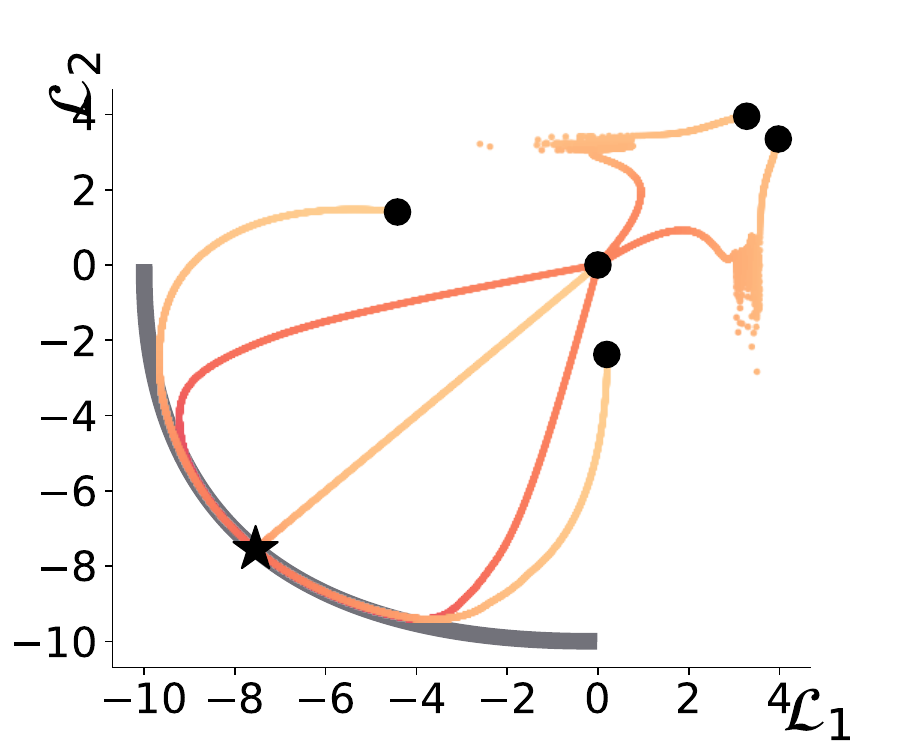}}
	\end{minipage} 
   \\ 

$(0.7, 0.3)$ &  
    \begin{minipage}[b]{0.30\columnwidth}
		\raisebox{-.5\height}{\includegraphics[width=\linewidth]{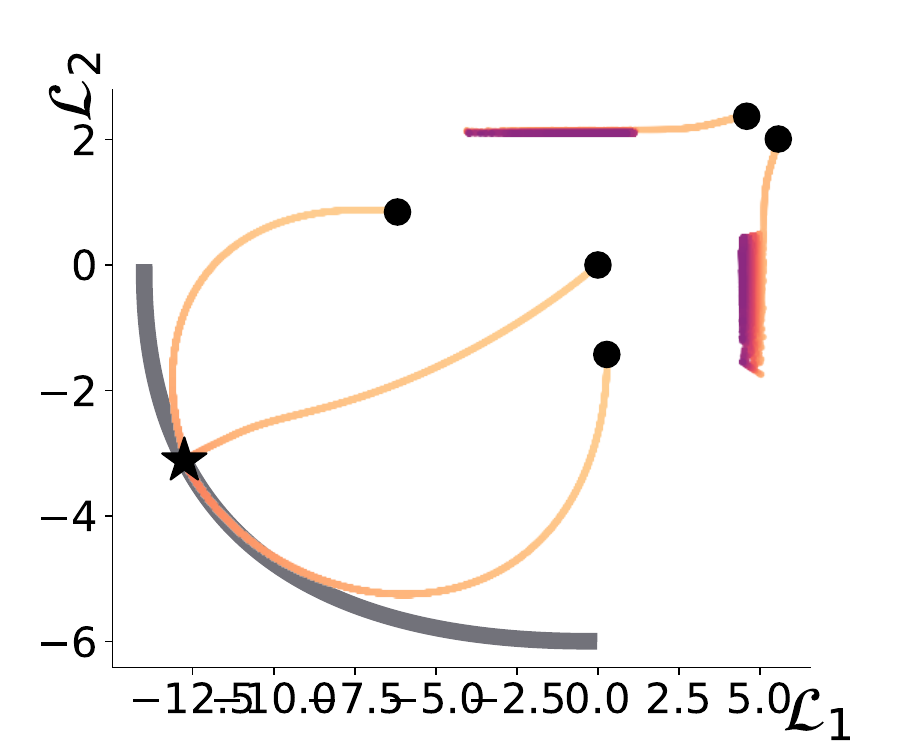}}
	\end{minipage} &
     \begin{minipage}[b]{0.30\columnwidth}
		\raisebox{-.5\height}{\includegraphics[width=\linewidth]{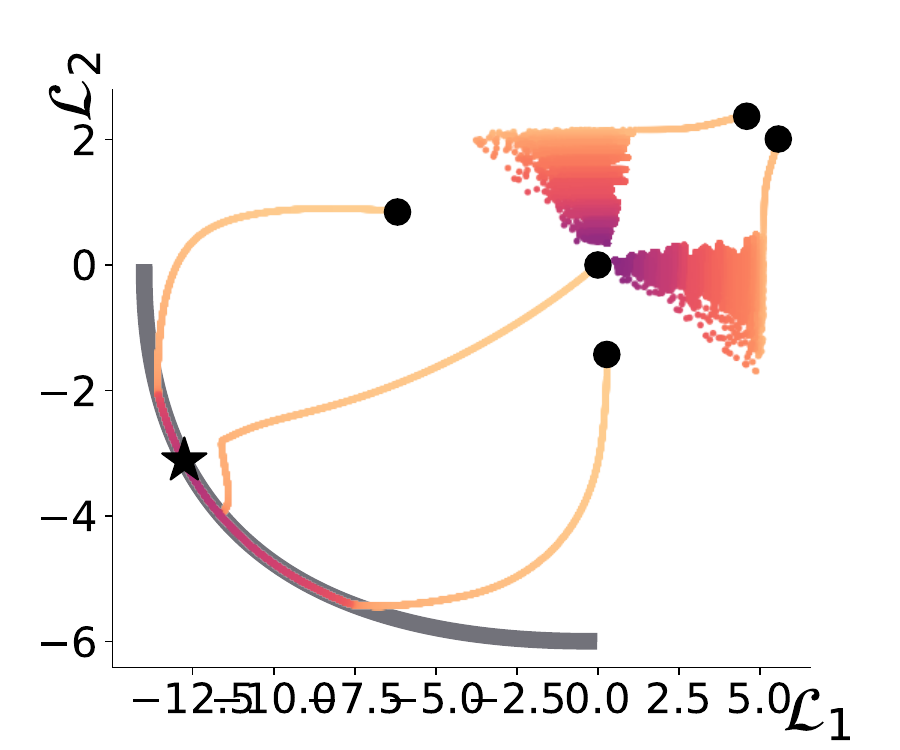}}
	\end{minipage} &
    \begin{minipage}[b]{0.30\columnwidth}
		\raisebox{-.5\height}{\includegraphics[width=\linewidth]{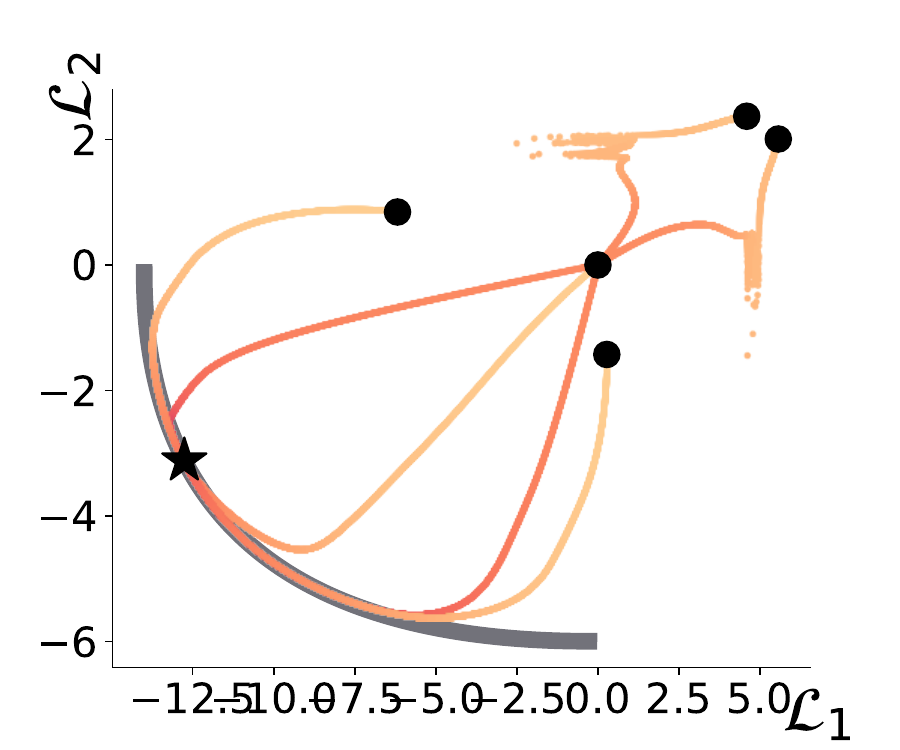}}
	\end{minipage} &  
    \begin{minipage}[b]{0.30\columnwidth}
		\raisebox{-.5\height}{\includegraphics[width=\linewidth]{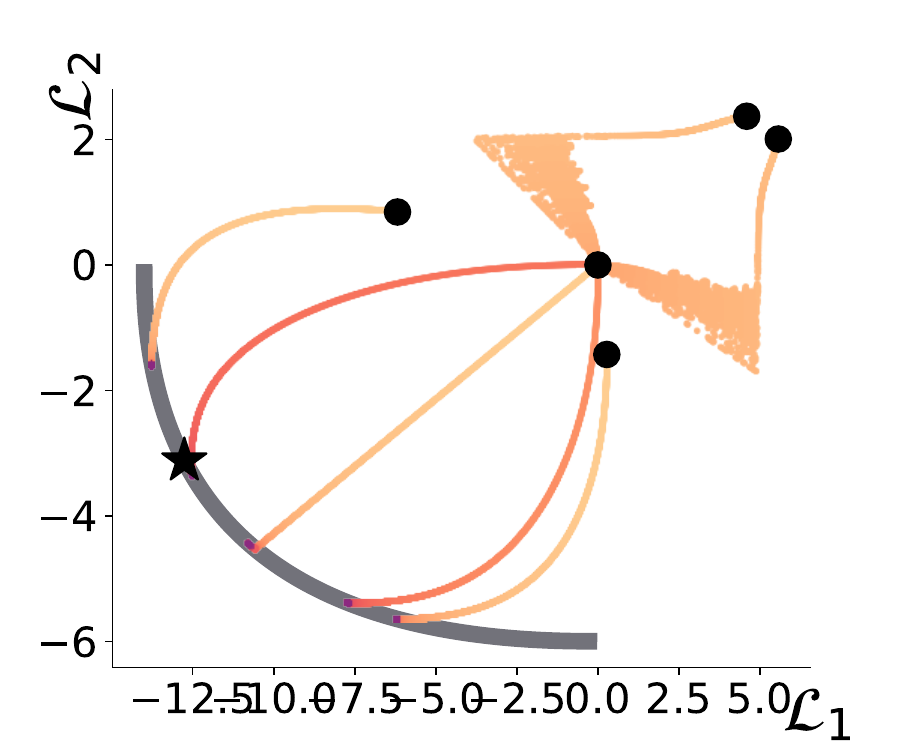}}
	\end{minipage} &
    \begin{minipage}[b]{0.30\columnwidth}
		\raisebox{-.5\height}{\includegraphics[width=\linewidth]{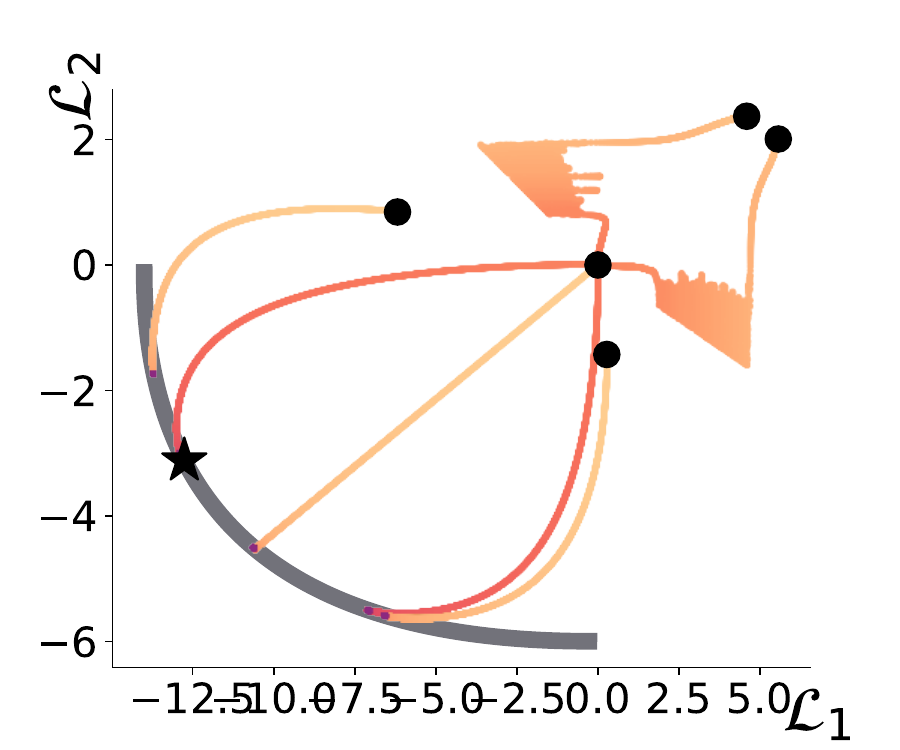}}
	\end{minipage} &
    \begin{minipage}[b]{0.30\columnwidth}
		\raisebox{-.5\height}{\includegraphics[width=\linewidth]{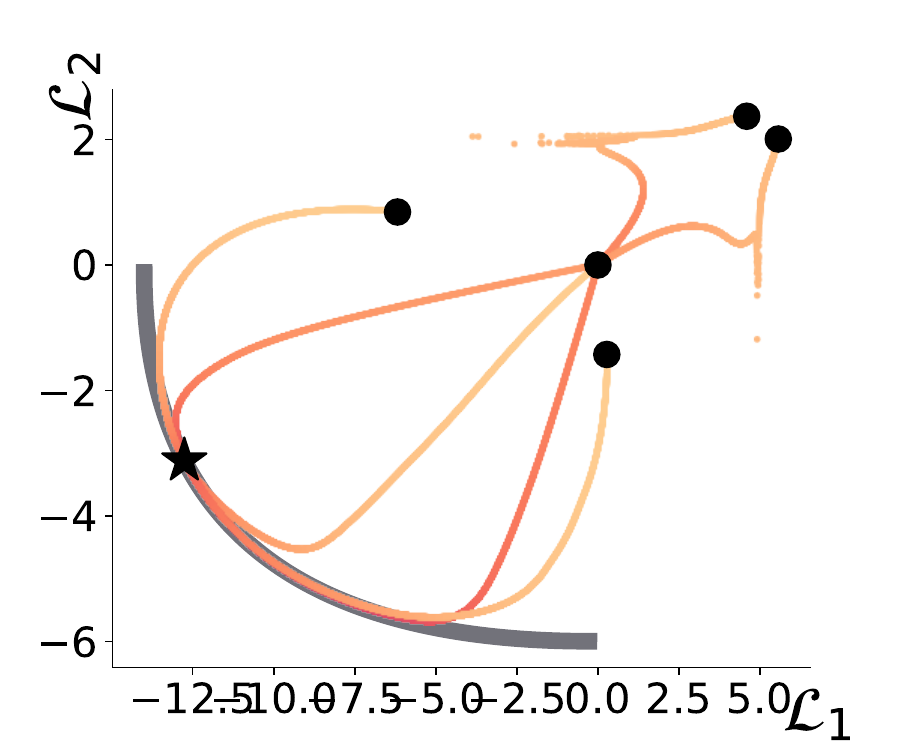}}
	\end{minipage} 
   \\
   
$(0.9, 0.1)$ & 
    \begin{minipage}[b]{0.30\columnwidth}
		\raisebox{-.5\height}{\includegraphics[width=\linewidth]{figs/appendix/toy/plot_ls_0.9.pdf}}
	\end{minipage} &
     \begin{minipage}[b]{0.30\columnwidth}
		\raisebox{-.5\height}{\includegraphics[width=\linewidth]{figs/appendix/toy/plot_pcgrad_0.9.pdf}}
	\end{minipage} &
    \begin{minipage}[b]{0.30\columnwidth}
		\raisebox{-.5\height}{\includegraphics[width=\linewidth]{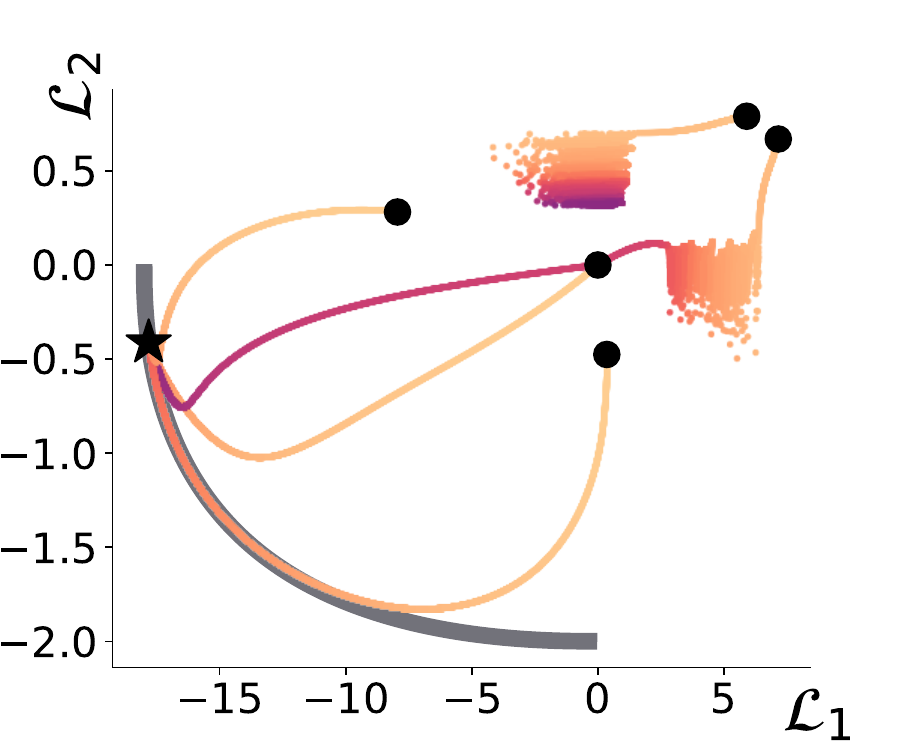}}
	\end{minipage} &  
    \begin{minipage}[b]{0.30\columnwidth}
		\raisebox{-.5\height}{\includegraphics[width=\linewidth]{figs/appendix/toy/plot_imtl_0.9.pdf}}
	\end{minipage} &
    \begin{minipage}[b]{0.30\columnwidth}
		\raisebox{-.5\height}{\includegraphics[width=\linewidth]{figs/appendix/toy/plot_nash_0.9.pdf}}
	\end{minipage} &
    \begin{minipage}[b]{0.30\columnwidth}
		\raisebox{-.5\height}{\includegraphics[width=\linewidth]{figs/appendix/toy/plot_imgrad_c_0.4_0.9.pdf}}
	\end{minipage} 
   \\
 \bottomrule
\end{tabular}
\end{table*}

\subsection{Pareto Failure and Individual Progress Examinations}\label{appsec:pareto_fail}
Here we aim to provide a visual understanding of the advantages brought by \texttt{IMGrad} in terms of reducing Pareto failures and achieving a balanced individual progress. In Figure~\ref{fig:sim_pro_imgrad}, concerning CAGrad, the \texttt{IMGrad}-augmented version can effectively avoids Pareto failures and optimizes both individuals in a more balanced manner by seeking a direction that exhibits rough similarities among all individuals. In the case of Nash-MTL, \texttt{IMGrad} successfully reduces the number of Pareto failures from \underline{249} to \underline{161} and further narrows the gap in individual progress. 

Besides, we present supplementary analysis on the Pareto failure and individual progress using MGDA and GradDrop, and showcase the outcomes in Figure~\ref{fig:app_sim_pro}. As depicted, both MGDA and GradDrop exhibit pronounced imbalance issues based on their gradient similarities, leading to unsatisfactory and imbalanced individual progress. These findings align with the tendencies and indications stated above, further supporting the claims made.
\input{app_sec/complex_analysis}
\section{Algorithm}   \label{sec:alg}
In order to summarize our method, we present a concise pseudocode in Algorithm~\ref{alg:cagrad}. Additionally, we provide an implementation of CAGrad + \texttt{IMGrad} on CityScapes, which can be found in the \textbf{Supplementary Material}.
\begin{algorithm*}
	\caption{CAGrad + \texttt{IMGrad}}
	\label{alg:cagrad}
	\KwIn{Initial model parameter $\Theta$, differentiable loss functions are ${\mathcal{L}_i(\bm{\Theta}), i \in [N]}$}, a constant $c \in [0, 1]$ and learning rate $\alpha \in \mathbb{R}^{+}$. \\  
	\KwOut{Model trained with CAGrad + \texttt{IMGrad}} 
	\BlankLine\
	\While{\textnormal{not converged}}{
		At the $t_{th}$ optimization step, define $\bm{g_0} = \frac{1}{K}\sum_{i=1}^K \bigtriangledown_{\bm{\theta}} \mathcal{L}_i(\bm{\theta_{t-1}})$ and $\bm{\phi} = c^2 \left \| \bm{g_0}\right \|$. \\
        {\color{red} Obtain $\bm{g_m}$ via MGDA algorithm, and calculate the cosine similarity $\mu$ between $\bm{g_m}$ and $\bm{g_0}$.} \\
        Solve 
        \begin{align*}
         \mathop{\rm{min}}_{\bm{\omega} \in \bm{\mathcal{W}}} F(\omega) := {\color{red}(1- \mu)} \bm{g_{\omega}}^{\rm{\top}} \bm{g_0} + {\color{red}\mu} \sqrt{\phi} \left \| \bm{g_{\omega}} \right \|, {\rm where}\ \phi = c^2 \left \| \bm{g_0} \right \|^2
        \end{align*}\\
        Update $\bm{\theta_t} = \bm{\theta_{t-1}} - \alpha \left (  \bm{g_0} + \frac{\phi^{1/2}}{\left \| \bm{g_{\omega}} \right \|} \bm{g_{\omega}} \right ).$
    }
\end{algorithm*}

\section{Past Imbalance-Sensitive MTL} \label{app_sec:past_imb}
As we have discussed Nash-MTL previously, here we mainly introduce another two imbalance-sensitive MTL methods, i.e., IMTL~\cite{liu2021towards}, and FAMO~\cite{liu2023famo}.

\noindent \textbf{\underline{IMTL}}: IMTL addresses the imbalance issue by integrating two key strategies. It employs gradient balance to adjust shared parameters without bias towards any task, using a closed-form solution that ensures equal projections of the aggregated gradient onto individual task gradients. Additionally, it applies loss balance to dynamically weigh task losses, preventing any single task's loss scale from dominating the training process. By combining these strategies into a hybrid approach, IMTL achieves scale invariance, maintaining robust performance regardless of the magnitude of task losses. 

\noindent \textbf{\underline{FAMO}}: FAMO is a gradient-free optimization-based MTL method, which introduces a dynamic weighting method that adapts to the performance of individual tasks, ensuring that each task progresses at a similar rate without the need for extensive computational resources. FAMO operates with a constant space and time complexity of $\mathcal{O}(1)$ per iteration, which is a significant advantage over traditional methods that require $\mathcal{O}(k)$ space and time complexities, where $k$ represents the number of tasks. This efficiency makes FAMO particularly well-suited for large-scale multitask scenarios. 

\section{Limitations}   \label{sec:limit}
Although our proposed method has demonstrated effectiveness, it is important to acknowledge its limitations. First, the method is only applicable to scenarios with a decoupled objective, such as CAGrad and Nash-MTL, since IMGrad aims to dynamically balance objectives during optimization. Second, the introduction of additional computations to calculate $\bm{g_m}$ through MGDA represents another limitation. Although reducing the iterations of MGDA can mitigate this issue, this paper currently does not provide the corresponding performance and efficiency reports.


\newpage
 
\bibliographystyle{named}
\bibliography{appendix}